\pdfoutput=1 
\documentclass{article}

    \PassOptionsToPackage{numbers, compress}{natbib}

\usepackage{iclr2024_conference,times}
\iclrfinalcopy

\usepackage[utf8]{inputenc} %
\usepackage[T1]{fontenc}    %
\usepackage{hyperref}       %
\usepackage{url}            %
\usepackage{booktabs}       %
\usepackage{amsfonts}       %
\usepackage{nicefrac}       %
\usepackage{microtype}      %
\usepackage{xcolor}         %

\usepackage{graphicx}
\usepackage{subcaption}
\usepackage{multicol,lipsum}
\usepackage{multirow}
\usepackage{wrapfig}
\usepackage{array}
\usepackage{amsmath}
\usepackage{amssymb}
\usepackage{mathtools}
\usepackage{amsthm}
\usepackage{algorithm, algorithmic}
\usepackage{enumitem}
\usepackage{natbib}
\usepackage[capitalize,noabbrev]{cleveref}

\theoremstyle{plain}
\newtheorem{theorem}{Theorem}[section]

\newtheorem{definition}[theorem]{Definition}
\newtheorem{assumption}[theorem]{Assumption}
\theoremstyle{remark}

\usepackage{amsmath,amsfonts,bm, amsthm, mathrsfs, mathtools}

\newcommand*\diff{\mathop{}\!\mathrm{d}}

\def\1{\bm{1}}

\def\vx{{\bm{x}}}
\def\vy{{\bm{y}}}

\def\mI{{\bm{I}}}

\DeclareMathAlphabet{\mathsfit}{\encodingdefault}{\sfdefault}{m}{sl}
\SetMathAlphabet{\mathsfit}{bold}{\encodingdefault}{\sfdefault}{bx}{n}

\def\gD{{\mathcal{D}}}

\def\gO{{\mathcal{O}}}

\def\gS{{\mathcal{S}}}

\def\gZ{{\mathcal{Z}}}

\def\sR{{\mathbb{R}}}

\newcommand{\E}{\mathbb{E}}

\usepackage{autonum}

\usepackage[textsize=tiny]{todonotes}

\newcommand{\fedgp}[0]{\texttt{FedGP}}
\newcommand{\fedda}[0]{\texttt{FedDA}}

\title{Principled Federated Domain Adaptation: \\Gradient Projection and Auto-Weighting}

\author{%
Enyi Jiang$^*$\\
\texttt{enyij2@illinois.edu}\\
\texttt{UIUC}
\And
Yibo Jacky Zhang\thanks{Equal contribution}\\
\texttt{yiboz@stanford.edu}\\
\texttt{Stanford }
\And
Sanmi Koyejo\\
\texttt{sanmi@cs.stanford.edu}\\
\texttt{Stanford }
}

\begin{document}

\maketitle
\begin{abstract}
Federated Domain Adaptation (FDA) describes the federated learning (FL) setting where source clients and a server work collaboratively to improve the performance of a target client where limited data is available. The domain shift between the source and target domains, coupled with limited data of the target client, makes FDA a challenging problem, e.g., common techniques such as federated averaging and fine-tuning fail due to domain shift and data scarcity. 
To theoretically understand the problem, we introduce new metrics that characterize the FDA setting and a theoretical framework with novel theorems for analyzing the performance of server aggregation rules. Further, we propose a novel lightweight aggregation rule, Federated Gradient Projection (\texttt{FedGP}), which significantly improves the target performance with domain shift and data scarcity. Moreover, our theory suggests an \textit{auto-weighting scheme} that finds the optimal combinations of the source and target gradients. This scheme improves both \texttt{FedGP} and a simpler heuristic aggregation rule. Extensive experiments verify the theoretical insights and illustrate the effectiveness of the proposed methods in practice.

\end{abstract}

\section{Introduction}\label{introduction}
Federated learning (FL) is a distributed machine learning paradigm that aggregates clients’ models on the server while maintaining data privacy~\citep{pmlr-v54-mcmahan17a}. FL is particularly interesting in real-world applications where data heterogeneity and insufficiency are common issues, such as healthcare settings. 
For instance, a small local hospital may struggle to train a generalizable model independently due to insufficient data, and the domain divergence from other hospitals further complicates the application of FL.
A promising framework for addressing this problem is Federated Domain Adaptation (FDA), where source clients collaborate with the server to enhance the model performance of a target client~\citep{Peng2020Federated}. FDA presents a considerable hurdle due to two primary factors: (i) the \textit{domain shift} existing between source and target domains, and (ii) the \textit{scarcity of data} in the target domain.

Recent works have studied these challenges of domain shift and limited data in federated settings. Some of these works aim to minimize the impacts of distribution shifts between clients~\citep{wang2019federated, pmlr-v119-karimireddy20a, fesem}, e.g., via personalized federated learning~\citep{deng2020apfl, li2021ditto, collins2021fedrep, marfoq2022knnper}. However, these studies commonly presume that all clients possess ample data, an assumption that may not hold for small hospitals in a cross-silo FL setting. Data scarcity challenges can be a crucial bottleneck in real-world scenarios, e.g., small hospitals lack \emph{data} -- whether labeled or unlabeled. In the special (and arguably less common) case where the target client has access to abundant unlabeled data, Unsupervised Federated Domain Adaptation (UFDA)~\citep{Peng2020Federated, pmlr-v139-feng21f, copa} may be useful. %
Despite existing work, there remains an under-explored gap in the literature addressing both challenges, namely \textit{domain shift} and \textit{data scarcity}, coexist.

To fill the gap, this work directly approaches the two principal challenges associated with FDA, focusing on carefully designing server aggregation rules, i.e., mechanisms used by the server to combine updates across source and target clients within each global optimization loop. We focus on aggregation rules as they are easy to implement -- requiring only the server to change its operations (e.g., variations of federated averaging~\citep{pmlr-v54-mcmahan17a}), and thus have become the primary target of innovation in the federated learning literature.
In brief, our work is motivated by the question: %
\begin{center}
    \textit{How does one define a ``good'' FDA aggregation rule?} 
\end{center}

\begin{wrapfigure}{r}{0.3\textwidth}
\vspace{-1em}
  \begin{center}
    \includegraphics[width=0.3\textwidth]{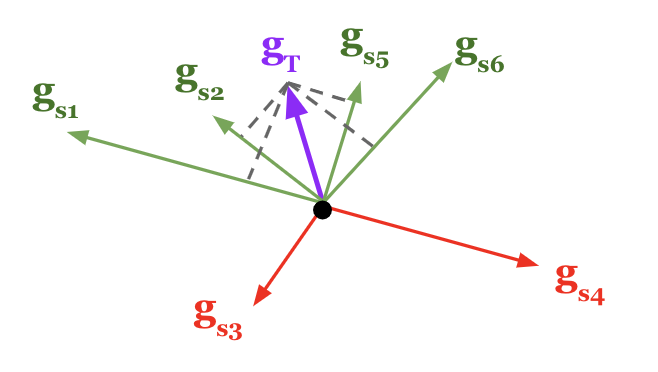}
  \end{center}
  \caption{\fedgp{} filters out the negative source gradients (colored in red) and convexly combines $g_T$ and its projections to to direction of the remaining source gradients (green ones).} %
  \label{fig:fedgp}
  \vspace{-1em}
\end{wrapfigure}
To our best understanding, there are no theoretical foundations that systematically examine the behaviors of various federated aggregation rules within the context of the FDA. Therefore, we introduce a theoretical framework that establishes two metrics to characterize the FDA settings and employ them to analyze the performance of FDA aggregation rules. The proposed metrics characterize (i) the divergence between source and target domains and (ii) the level of training data scarcity in the target domain. Leveraging the proposed theoretical framework, we propose and analyze two aggregation approaches. The first is a simple heuristic \fedda{}, a simple convex combination of source and target gradients. Perhaps surprisingly, we discover that even noisy gradients, computed using the limited data of the target client, can still deliver a valuable signal. The second is a novel filtering-based gradient projection method, \fedgp{}. This method is designed to extract and aggregate beneficial components of the source gradients with the assistance of the target gradient, as depicted in Figure~\ref{fig:fedgp}.
\texttt{FedGP} calculates a convex combination of the target gradient and its positive projection along the direction of source gradients. Intriguingly, using a generalization analysis on the target domain, our theoretical framework unravels why \fedgp{} may outperform \fedda{}-- specifically, we find that performing the projection operation before the convex combination is crucial.

Importantly, our theoretical framework suggests the optimal weights for combining source and target gradients, leading to auto-weighted versions of both \texttt{FedGP} and \texttt{FedDA}. In particular, we find that 
the under-performing \fedda{} is significantly improved by using auto-weighting -- enough to be competitive with \texttt{FedGP}, demonstrating the value of our theory. Across extensive datasets, we demonstrate that \fedgp{}, as well as the auto-weighted \fedda{} and \fedgp{}, outperforms personalized FL and UFDA baselines. Our code is at \url{https://github.com/jackyzyb/AutoFedGP}.

\textbf{Summary of Contributions.} Our contributions are both theoretical and practical, addressing the FDA problem through federated aggregation in a principled way.   %

\begin{itemize}[noitemsep,topsep=1pt,parsep=1pt,partopsep=1pt,leftmargin=1em]   
    \item We introduce a theoretical framework understanding and analyzing the performance of FDA aggregation rules, inspired by two challenges existing in FDA. Our theories provide a principled response to the question:  \textit{How do we define a ``good'' FDA aggregation rule?}
    \item We propose \texttt{FedGP} as an effective solution to the FDA challenges of substantial domain shifts and limited target data. 
    \item Our theory determines the optimal weight parameter for aggregation rules, \fedda{} and \fedgp{}. This \textit{auto-weighting} scheme leads to further performance improvements.
    \item  Extensive experiments illustrate that our theory is predictive of practice. The proposed methods outperform personalized FL and UFDA baselines on real-world ColoredMNIST, VLCS, TerraIncognita, and DomainNet datasets. 
\end{itemize}

\section{The Problem of Federated Domain Adaptation}\label{problem-setup}

We begin with a general definition of the problem of Federated Domain Adaptation and, subsequently a review of related literature in the field.

\textbf{Notation.} Let $\gD$ be a data domain\footnote{In this paper, the terms distribution and domain are used interchangeably. } on a ground set $\gZ$. In our supervised setting, a data point $z \in \gZ$ is the tuple of input and output data\footnote{For example, let $x$ be the inputs and $y$ be the targets, then $z=(x, y).$} %
We denote the loss function as $\ell:\Theta\times \gZ\to \sR_+$ where the parameter space is $\Theta=\sR^m$; an $m$-dimensional Euclidean space. The population loss is $\ell_\gD(\theta):=\E_{z\sim \gD}\ell(\theta, z)$, where $\E_{z\sim \gD}$ is the expectation w.r.t. $\gD$. Let $\widehat \gD$ be a finite sample dataset drawn from $\gD$, then $\ell_{\widehat\gD}(\theta):=\tfrac{1}{|\widehat \gD|}\sum_{z\in \widehat \gD}\ell(\theta, z)$, where $|\widehat \gD|=n$ is the size of the dataset. We use $[N]:=\{1, 2, \dots, N\}$. By default, $\langle \cdot, \cdot\rangle$, and  $\|\cdot\|$ denote the Euclidean inner product and Euclidean norm, respectively.

In FDA, there are $N$ source clients with their respective source domains $\{\gD_{S_i}\}_{i\in [N]}$ and a target client with the target domain $\gD_T$. 
For $\forall i\in [N]$, $\widehat \gD_{S_i}$ denotes the $i^{th}$ source client dataset, and $\widehat \gD_T$ denotes the target client dataset. We focus on the setting where $|\widehat \gD_T|$ is relatively small. In standard federated learning, all clients collaborate to learn a global model orchestrated by the server, i.e, clients cannot communicate directly, and all information is shared from/to the server. In contrast, FDA uses the same system architecture to improve a single client's performance.

\begin{definition}[Aggregation for Federated Domain Adaptation (FDA)]\label{def:fda} The FDA problem is a federated learning problem where all clients collaborate to improve the global model for a target domain. The global model is trained by iteratively updating the global model parameter
\begin{align}
    \theta \leftarrow \theta - \mu \cdot \texttt{Aggr}(\{\nabla \ell_{\widehat\gD_{S_i}}(\theta)\}_{i\in [N]}, \nabla \ell_{\widehat\gD_{T}}(\theta)),
\end{align}
where $\nabla \ell$ is the gradient, $\mu$ is the step size.
We seek an aggregation strategy $\texttt{Aggr}(\cdot)$ such that after training, the global model parameter $\theta$ minimizes the target domain population loss function  $\ell_{\gD_T}(\theta)$. Note that we allow the aggregation function $\texttt{Aggr}(\cdot)$ to depend on the iteration index.

\end{definition}

There are a number of challenges that make FDA difficult to solve. First, the amount of labeled data in the target domain is typically limited, which makes it difficult to learn a generalizable model. Second, the source and target domains have different data distributions, which can lead to a mismatch between the features learned by the source and target models. Moreover, the model must be trained in a privacy-preserving manner where local data cannot be shared.

\subsection{Related Work}

\textbf{Data heterogeneity, personalization and label deficiency in FL.}
Distribution shifts between clients remain a crucial challenge in FL. Current work often focuses on improving the aggregation rules: \citet{pmlr-v119-karimireddy20a} use control variates and \citet{fesem} cluster the client weights to correct the drifts among clients. More recently, there are works~\citep{deng2020apfl, li2021ditto, collins2021fedrep, marfoq2022knnper} concentrating on personalized federated learning by finding a better mixture of local/global models and exploring shared representation. Further, recent works have addressed the label deficiency problem with self-supervision or semi-supervision for personalized models~\citep{jeong2020federated, he2021ssfl, yang2021federated}. To our knowledge, all existing work assumes sufficient data for all clients - nevertheless, the performance of a client with data deficiency and large shifts may become unsatisfying (Table~\ref{table:real-data-res}). Compared to related work on personalized FL, our method is more robust to data scarcity on the target client.

\textbf{Unsupervised federated domain adaptation.}
There is a considerable amount of recent work on unsupervised federated domain adaptation (UFDA), with recent highlights in utilizing adversarial networks~\citep{saito2018maximum, adv-msda}, knowledge distillation~\citep{nguyen2021most}, and source-free methods~\citep{pmlr-v119-shot}. \citet{Peng2020Federated, li2020multi} is the first to extend MSDA into an FL setting; they apply adversarial adaptation techniques to align the representations of nodes. More recently, in KD3A~\citep{pmlr-v139-feng21f} and COPA~\citep{copa}, the server with unlabeled target samples aggregates the local models by learning the importance of each source domain via knowledge distillation and collaborative optimization. Their work assumes abundant data without labels in the target domain, while small hospitals usually do not have enough data. Also, training with unlabeled data every round is computationally expensive. Compared to their work, we study a more important challenge where the data (not just the labels) are scarce. We show our approaches achieve superior performance using substantially less target data on various benchmarks. 

\textbf{Using additional gradient information in FL.}
Model updates in each communication round may provide valuable insights into client convergence directions. This idea has been explored for robustness in FL, particularly with untrusted clients. For example, Zeno++~\citep{pmlr-v119-xie20c} and FlTrust~\citep{fltrust} leverage the additional gradient computed from a small clean training dataset on the server to compute the scores of candidate gradients for detecting the malicious adversaries. Differently, our work focuses on a different task of improving the performance of the target domain with auto-weighted aggregation rules that utilize the gradient signals from all clients.

\section{A Theoretical Framework for Analyzing Aggregation Rules for FDA}\label{theory}

 This section introduces a general framework and a theoretical analysis of aggregation rules for federated domain adaptation.

 {\bf Additional Notation and Setting.} 
 We use additional notation to motivate a functional view of FDA. Let $g_\gD:\Theta \to \Theta$ with $g_\gD(\theta):=\nabla\ell_\gD (\theta).$ Given a distribution $\pi$ on the parameter space $\Theta$, we define an inner product $\langle g_\gD, g_{\gD'}\rangle_\pi=\E_{\theta\sim \pi}[\langle g_\gD(\theta),g_{\gD'}(\theta)\rangle]$. We interchangeably denote $\pi$ as both the distribution and the probability measure. The inner product induces the $L^\pi$-norm on $g_\gD$ as $\|g_\gD\|_\pi:=\sqrt {\E_{\theta\sim \pi}\|g_\gD(\theta)\|^2}$. With the $L^\pi$-norm, we define the $L^\pi$ space as $\{g:\Theta\to \Theta \mid \|g\|_\pi<\infty\}$. Given an aggregation rule $\texttt{Aggr}(\cdot)$, we denote $\widehat g_{\texttt{Aggr}}(\theta)=\texttt{Aggr}(\{g_{\gD_{S_i}}(\theta)\}_{i=1}^N, g_{\widehat\gD_T}(\theta))$. Note that we do not care about the generalization on the source domains, and therefore, for the theoretical analysis, we can view $\gD_S=\widehat \gD_S$ without loss of generality. Throughout our theoretical analysis, we make the following standard assumption about $\widehat \gD_T$, and we use the hat symbol, $\widehat\cdot$, to emphasize that a random variable is associated with the sampled target domain dataset $\widehat \gD_T$.
 
\begin{assumption} We assume the target domain's local dataset $\widehat\gD_T=\{z_i\}_{i\in [n]}$ consists of $n$ i.i.d. samples from its underlying target domain distribution $\gD_T$. Note that this implies  $\E_{\widehat\gD_T}[ g_{\widehat\gD_T}]=g_{\gD_T}$. 
\end{assumption}

Observing Definition~\ref{def:fda}, we can see intuitively that a good aggregation rule should have $\widehat g_{\texttt{Aggr}}$ be ``close'' to the ground-truth target domain gradient $g_{\gD_T}$. From a functional view, we need to measure the distance between the two functions. We choose the $L^\pi$-norm, formally stated in the following.
\begin{definition}[Delta Error of an aggregation rule \texttt{Aggr}$(\cdot)$] \label{def:delta-error} We define the following squared error term to measure the closeness between $\widehat g_{\texttt{Aggr}}$ and $g_{\gD_T}$, i.e.,
    \begin{align}
        \Delta^2_{\texttt{Aggr}}:= \E_{\widehat\gD_T}\|g_{\gD_T}-\widehat g_{\texttt{Aggr}}\|^2_\pi.
    \end{align}
    The distribution $\pi$ characterizes where to measure the gradient difference in the parameter space.
\end{definition}
The Delta error $\Delta^2_{\texttt{Aggr}}$ is crucial in theoretical analysis and algorithm design due to its two main benefits.
First, it indicates the performance of an aggregation rule. Second, it reflects fundamental domain properties that are irrelevant to the aggregation rules applied. Thus, the Delta error disentangles these two elements, allowing in-depth analysis and algorithm design.

Concretely, for the first benefit: one expects an aggregation rule with a small Delta error to converge better as measured by the population target domain loss function gradient $\nabla\ell_{{\gD_T}}$.%

\begin{theorem}[Convergence and Generalization]\label{thm-gen}
    For any probability measure $\pi$ over the parameter space, and an aggregation rule \texttt{Aggr}$(\cdot)$ with step size  $\mu>0$. Given target domain sampled dataset $\widehat \gD_T$, update the parameter for $T$ steps by 
    $
    \theta^{t+1} := \theta^{t} - \mu \widehat g_{\texttt{Aggr}}(\theta^t). 
    $
    Assume the gradient $\nabla\ell(\theta, z)$ and $\widehat g_{\texttt{Aggr}}(\theta)$ are $\tfrac{\gamma}{2}$-Lipschitz in $\theta$ such that $\theta^t\to \widehat\theta_{\texttt{Aggr}}$.
    Then, given step size $\mu \leq \tfrac{1}{\gamma}$ and a small enough $\epsilon>0$,  with probability at least $1-\delta$ we have
    \begin{align}
        \|\nabla\ell_{\gD_T}(\theta^T)\|^2 \leq \frac{1}{\delta^2}\left(\sqrt{C_\epsilon \cdot \Delta^2_{\texttt{Aggr}} }+\gO(\epsilon) \right)^2+ \mathcal{O}\left(\frac{1}{T}\right)+\mathcal{O}(\epsilon),
    \end{align}
    where $C_\epsilon=\E_{\widehat \gD_T} [ {1}/{\pi(B_\epsilon(\widehat\theta_{\texttt{Aggr}}))}]^2 $ and $B_\epsilon(\widehat\theta_{\texttt{Aggr}})\subset \sR^m$ is the ball with radius $\epsilon$ centered at $\widehat\theta_{\texttt{Aggr}}$. The $C_\epsilon$ measures how well the probability measure $\pi$ covers where the optimization goes, i.e., $\widehat\theta_{\texttt{Aggr}}$.
\end{theorem}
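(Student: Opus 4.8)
The plan is to reduce the desired inequality to a high-probability bound on the single quantity $\|g_{\gD_T}(\widehat\theta_{\texttt{Aggr}})\|$ — the target-domain gradient norm at the limit point — and then to extract that bound from the $\pi$-averaged error $\Delta^2_{\texttt{Aggr}}$ by localizing to a small ball around $\widehat\theta_{\texttt{Aggr}}$. First, since $\widehat g_{\texttt{Aggr}}$ is Lipschitz (hence continuous) and the iteration $\theta^{t+1}=\theta^t-\mu\widehat g_{\texttt{Aggr}}(\theta^t)$ converges to $\widehat\theta_{\texttt{Aggr}}$, passing to the limit in the update gives $\widehat g_{\texttt{Aggr}}(\widehat\theta_{\texttt{Aggr}})=0$. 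Now decompose $g_{\gD_T}(\theta^T)=\big(g_{\gD_T}(\theta^T)-\widehat g_{\texttt{Aggr}}(\theta^T)\big)+\widehat g_{\texttt{Aggr}}(\theta^T)$. The tail term obeys $\|\widehat g_{\texttt{Aggr}}(\theta^T)\|^2=\gO(1/T)$ by the standard analysis of a $\tfrac{\gamma}{2}$-smooth first-order update with step $\mu\le 1/\gamma$ together with the assumed convergence; and, since $g_{\gD_T}-\widehat g_{\texttt{Aggr}}$ is $\gamma$-Lipschitz (a difference of two $\tfrac{\gamma}{2}$-Lipschitz maps — here $g_{\gD_T}$ inherits $\tfrac{\gamma}{2}$-Lipschitzness from the assumption on $\nabla\ell(\cdot,z)$), we get $\|g_{\gD_T}(\theta^T)-\widehat g_{\texttt{Aggr}}(\theta^T)\|\le\|g_{\gD_T}(\widehat\theta_{\texttt{Aggr}})\|+\gamma\|\theta^T-\widehat\theta_{\texttt{Aggr}}\|$ using $\widehat g_{\texttt{Aggr}}(\widehat\theta_{\texttt{Aggr}})=0$. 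Since $\|\theta^T-\widehat\theta_{\texttt{Aggr}}\|\to 0$ and, as shown below, $\|g_{\gD_T}(\widehat\theta_{\texttt{Aggr}})\|$ is $\gO(1)$ with high probability, squaring and bounding the cross terms reduces everything to proving $\|g_{\gD_T}(\widehat\theta_{\texttt{Aggr}})\|^2\le\tfrac1{\delta^2}\big(\sqrt{C_\epsilon\Delta^2_{\texttt{Aggr}}}+\gO(\epsilon)\big)^2$ with probability at least $1-\delta$, up to the additive $\gO(1/T)+\gO(\epsilon)$ slack.

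For that bound I localize the Delta error to $B:=B_\epsilon(\widehat\theta_{\texttt{Aggr}})$. For every $\theta\in B$, $\gamma$-Lipschitzness of $g_{\gD_T}-\widehat g_{\texttt{Aggr}}$ and $\widehat g_{\texttt{Aggr}}(\widehat\theta_{\texttt{Aggr}})=0$ give $\|g_{\gD_T}(\theta)-\widehat g_{\texttt{Aggr}}(\theta)\|\ge\|g_{\gD_T}(\widehat\theta_{\texttt{Aggr}})\|-\gamma\epsilon$. Writing $W:=\big(\|g_{\gD_T}(\widehat\theta_{\texttt{Aggr}})\|-\gamma\epsilon\big)_+$ and $P:=\pi(B)$ and integrating over $B$ against $\pi$ (the bound being trivial when $W=0$),
\[
\|g_{\gD_T}-\widehat g_{\texttt{Aggr}}\|_\pi^2\;\ge\;\int_B\|g_{\gD_T}(\theta)-\widehat g_{\texttt{Aggr}}(\theta)\|^2\,\diff\pi(\theta)\;\ge\;P\,W^2.
\]
Taking $\E_{\widehat\gD_T}$ and recalling Definition~\ref{def:delta-error} yields $\E_{\widehat\gD_T}[P\,W^2]\le\Delta^2_{\texttt{Aggr}}$.

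To de-randomize, apply Cauchy–Schwarz in the $\widehat\gD_T$-expectation: $\E_{\widehat\gD_T}[W]=\E_{\widehat\gD_T}\big[(\sqrt{P}\,W)\cdot P^{-1/2}\big]\le\big(\E_{\widehat\gD_T}[P\,W^2]\big)^{1/2}\big(\E_{\widehat\gD_T}[1/P]\big)^{1/2}\le\sqrt{C_\epsilon\,\Delta^2_{\texttt{Aggr}}}$, where the last step uses $P\le 1$ together with the definition of $C_\epsilon$ (so that $\E_{\widehat\gD_T}[1/P]\le C_\epsilon$). Markov's inequality on the nonnegative random variable $W$ then gives, with probability at least $1-\delta$, $W<\tfrac1\delta\,\E_{\widehat\gD_T}[W]\le\tfrac1\delta\sqrt{C_\epsilon\Delta^2_{\texttt{Aggr}}}$, and hence (using $\delta\le 1$) $\|g_{\gD_T}(\widehat\theta_{\texttt{Aggr}})\|\le W+\gamma\epsilon<\tfrac1\delta\big(\sqrt{C_\epsilon\Delta^2_{\texttt{Aggr}}}+\gamma\epsilon\big)$. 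Squaring and substituting into the reduction of the first paragraph proves the claim.

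The main obstacle is handling the two intertwined layers of randomness cleanly: the sampled set $\widehat\gD_T$ determines both the landing point $\widehat\theta_{\texttt{Aggr}}$ and the coverage $\pi(B_\epsilon(\widehat\theta_{\texttt{Aggr}}))$, so the localization inequality lives inside an $\E_{\widehat\gD_T}$ that must be disentangled — and it is precisely the Cauchy–Schwarz split above that converts the space-wide $\pi$-average defining $\Delta^2_{\texttt{Aggr}}$ into a pointwise statement at $\widehat\theta_{\texttt{Aggr}}$, at the cost of the coverage factor $C_\epsilon$. The remaining points are routine by comparison: the $\gO(1/T)$ term needs the descent-type rate for the iteration combined with the assumed convergence $\theta^t\to\widehat\theta_{\texttt{Aggr}}$, and the ``small enough $\epsilon$'' hypothesis is only used to keep the linearization slack $\gamma\epsilon$ subdominant inside $B$ and to ensure $\theta^T$ eventually lies within distance $\gO(\epsilon)$ of $\widehat\theta_{\texttt{Aggr}}$.
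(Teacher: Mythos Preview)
Your proof is essentially correct, and the core step---localizing the $\pi$-average to the ball $B_\epsilon(\widehat\theta_{\texttt{Aggr}})$ via Lipschitzness, then decoupling the $\widehat\gD_T$-expectation with Cauchy--Schwarz to produce the $C_\epsilon$ factor, and finally applying Markov---is exactly what the paper does. Your observation that $\widehat g_{\texttt{Aggr}}(\widehat\theta_{\texttt{Aggr}})=0$ (from continuity of $\widehat g_{\texttt{Aggr}}$ and convergence of the iteration) is a nice simplification the paper does not make explicit; it lets you identify the localized quantity directly with $\|g_{\gD_T}(\widehat\theta_{\texttt{Aggr}})\|$.

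Where you differ from the paper is in the reduction to that pointwise quantity. The paper first proves a one-step descent inequality $\ell_{\gD_T}(\theta^{t+1})-\ell_{\gD_T}(\theta^t)\le -\tfrac{\mu}{2}(\|g_{\gD_T}(\theta^t)\|^2-\|\widehat g_{\texttt{Aggr}}(\theta^t)-g_{\gD_T}(\theta^t)\|^2)$, telescopes over $t$, and then uses convergence to control the late terms. You instead go straight to the triangle inequality at $\theta^T$ together with the fixed-point property. Your route is more elementary and bypasses the smoothness-based descent lemma entirely; the paper's route yields the companion single-step result (their Theorem~\ref{thm-converge-gen}) as a byproduct and makes the origin of the $\gO(1/T)$ term transparent (it is $\tfrac{2\ell_{\gD_T}(\theta^0)}{\mu T}$ plus the contribution of the first $T_\epsilon$ iterates).

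One point to tighten: your claim that $\|\widehat g_{\texttt{Aggr}}(\theta^T)\|^2=\gO(1/T)$ ``by the standard analysis of a $\tfrac{\gamma}{2}$-smooth first-order update'' is not justified as stated---$\widehat g_{\texttt{Aggr}}$ need not be the gradient of any function, and even for gradient descent the standard bound is on $\min_t$, not the last iterate. What you actually have (and what you invoke later anyway) is that for $T>T_\epsilon$, Lipschitzness and $\widehat g_{\texttt{Aggr}}(\widehat\theta_{\texttt{Aggr}})=0$ give $\|\widehat g_{\texttt{Aggr}}(\theta^T)\|\le\tfrac{\gamma}{2}\|\theta^T-\widehat\theta_{\texttt{Aggr}}\|=\gO(\epsilon)$. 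This is enough: in your approach the $\gO(1/T)$ slot in the theorem's bound is simply vacuous, and everything lands in the $\gO(\epsilon)$ slack. So the fix is just to replace that appeal with the $\gO(\epsilon)$ bound you already have.
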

\textbf{Interpretation.} 
    The left-hand side reveals the convergence quality of the optimization with respect to the \textit{true} target domain loss. As we can see, a smaller Delta error indicates better convergence and generalization. In addition, we provide an analysis of a single gradient step in Theorem~\ref{thm-converge-gen}, showing similar properties of the Delta error. %
    In the above theorem, $\pi$ is arbitrary, allowing for its appropriate choice to minimize the $C_\epsilon$. Ideally, $\pi$ would accurately cover where the model parameters are after optimization. We take this insight in the design of an auto-weighting algorithm, to be discussed later.

The behavior of an aggregation rule should vary with the degree and nature of source-target domain shift and the data sample quality in the target domain. This suggests the necessity of their formal characterizations for further in-depth analysis.
Given a source domain $\gD_S$, we can measure its distance to the target domain $\gD_T$ as the $L^\pi$-norm distance between $g_{\gD_S}$ and the target domain model ground-truth gradient $g_{\gD_T}$, hence the following definition.
\begin{definition}[$L^\pi$ Source-Target Domain Distance]\label{def:pi-metric} Given a source domain $\gD_S$, its distance to the target domain $\gD_T$ is defined as 
\begin{align}
    d_\pi(\gD_S, \gD_T):=\|g_{\gD_T} -  g_{\gD_{S}}\|_\pi.
\end{align}
\end{definition}
This proposed metric $d_\pi$ has some properties inherited from the norm, including:  {\em (i. symmetry)}  $d_\pi(\gD_S, \gD_T)=d_\pi(\gD_T, \gD_S)$; {\em (ii, triangle inequality)} For any data distribution $\gD$ we have $d_\pi(\gD_S, \gD_T)\leq d_\pi(\gD_S, \gD)+d_\pi(\gD_T, \gD)$; {\em (iii. zero property)} For any $\gD$ we have $d_\pi(\gD, \gD)=0$. 

To formalize the target domain sample quality, we again measure the distance between $\widehat\gD_T$ and $\gD_T$. 
Thus, its mean squared error characterizes how the sample size affects the target domain variance.

\begin{definition}[$L^\pi$ Target Domain Variance]\label{def:target-var}
Given the target domain $\gD_T$ and dataset $\widehat\gD_T=\{z_i\}_{i\in [n]}$ where $z_i\sim \gD_T$ is sampled $i.i.d.$, the target domain variance is defined as
{
\setlength{\abovedisplayskip}{5pt}
\setlength{\belowdisplayskip}{2pt}
\begin{align}
    \sigma_{\pi}^2(\widehat\gD_T):= \E_{\widehat\gD_T} \|g_{\gD_T} -  g_{\widehat\gD_{T}}\|^2_\pi = \frac{1}{n}\E_{z\sim \gD_T} \|g_{\gD_T} -  \nabla\ell(\cdot, z)\|^2_\pi=: \frac{1}{n}\sigma^2_\pi(z),
\end{align}
}
where $\sigma^2_\pi(z)$ is the variance of a single sampled gradient function $\nabla\ell(\cdot, z)$.
\end{definition}

Taken together, our exposition shows the second benefit of the Delta error: it decomposes into a mix of the target-source domain shift $d_\pi(\gD_S, \gD_T)$ and the target domain variance $\sigma_{\pi}^2(\widehat\gD_T)$ for at least a wide range of aggregation rules (including our \texttt{FedDA} and \texttt{FedGP}). 

\begin{theorem}[$\Delta^2_{Aggr}$ Decomposition Theorem]\label{thm-decompose}
    Consider any aggregation rule $\texttt{Aggr}(\cdot)$ in the form of $\widehat g_{\texttt{Aggr}}=\frac{1}{N}\sum_{i\in [N]}F_{\texttt{Aggr}}[g_{\widehat \gD_T}, g_{\gD_{S_i}}]$, i.e., the aggregation rule is defined by a mapping $F_{\texttt{Aggr}}:L^\pi\times L^\pi\to L^\pi$. If $F_{\texttt{Aggr}}$ is affine w.r.t. to its first argument (i.e., the target gradient function), and $\forall g\in L^\pi: F_{\texttt{Aggr}}[g, g]=g$, and the linear mapping associated with $F_{\texttt{Aggr}}$ has its eigenvalue bounded in $[\lambda_{min}, \lambda_{max}]$, then for any source and target distributions $\{\gD_{S_i}\}_{i\in [N]}, \gD_T, \widehat\gD_T$ we have $\Delta^2_{Aggr} \leq \frac{1}{N} \sum_{i\in [N]} \Delta^2_{Aggr, \gD_{S_i}}$, where 
    \begin{align}
        \Delta^2_{Aggr, \gD_{S_i}} \leq\max\{\lambda_{max}^2, \lambda_{min}^2\} \cdot \frac{\sigma_{\pi}^2(z)}{n}+\max\{(1-\lambda_{max})^2, (1-\lambda_{min})^2\} \cdot d_\pi (\gD_{S_i}, \gD_T)^2. \label{eq: delta-aggr}
    \end{align}
\end{theorem}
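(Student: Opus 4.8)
The plan is to prove the two displayed inequalities in turn. For the first, I would use only the averaged form of the rule: since $\widehat g_{\texttt{Aggr}} = \frac{1}{N}\sum_{i\in[N]} F_{\texttt{Aggr}}[g_{\widehat\gD_T}, g_{\gD_{S_i}}]$, I write $g_{\gD_T}-\widehat g_{\texttt{Aggr}} = \frac{1}{N}\sum_{i\in[N]}\big(g_{\gD_T} - F_{\texttt{Aggr}}[g_{\widehat\gD_T}, g_{\gD_{S_i}}]\big)$ and apply convexity of $\|\cdot\|_\pi^2$ (Jensen over the $N$ summands) followed by $\E_{\widehat\gD_T}$. This gives $\Delta^2_{Aggr}\le \frac1N\sum_{i}\Delta^2_{Aggr,\gD_{S_i}}$, where $\Delta^2_{Aggr,\gD_{S_i}} := \E_{\widehat\gD_T}\|g_{\gD_T}-F_{\texttt{Aggr}}[g_{\widehat\gD_T},g_{\gD_{S_i}}]\|_\pi^2$ is exactly the Delta error of the rule restricted to the single source $\gD_{S_i}$, so it remains to bound one per-source term.

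For the per-source bound, fix $i$ and write $F_i[\cdot]:=F_{\texttt{Aggr}}[\cdot,g_{\gD_{S_i}}]$. By affineness in the first argument, $F_i[g]=L_i g + b_i$ for a fixed linear operator $L_i$ on $L^\pi$ (the associated linear mapping, with spectrum in $[\lambda_{min},\lambda_{max}]$) and a fixed $b_i\in L^\pi$. Instantiating the hypothesis $F_{\texttt{Aggr}}[g,g]=g$ at $g=g_{\gD_{S_i}}$ yields $L_i g_{\gD_{S_i}} + b_i = g_{\gD_{S_i}}$, hence $b_i=(I-L_i)g_{\gD_{S_i}}$. Introducing the sampling noise $\xi:=g_{\widehat\gD_T}-g_{\gD_T}$, which satisfies $\E_{\widehat\gD_T}[\xi]=0$ (by the assumption $\E_{\widehat\gD_T}[g_{\widehat\gD_T}]=g_{\gD_T}$) and $\E_{\widehat\gD_T}\|\xi\|_\pi^2=\sigma_\pi^2(z)/n$ (by Definition~\ref{def:target-var}), a short rearrangement gives
\[
g_{\gD_T}-F_i[g_{\widehat\gD_T}] = (I-L_i)\big(g_{\gD_T}-g_{\gD_{S_i}}\big)\;-\;L_i\xi ,
\]
a deterministic bias term plus a mean-zero noise term.

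Next I would square, take $\E_{\widehat\gD_T}$, and observe that the cross term $\E_{\widehat\gD_T}\langle (I-L_i)(g_{\gD_T}-g_{\gD_{S_i}}),\,L_i\xi\rangle_\pi$ vanishes: the first factor and the operator $L_i$ are deterministic (they depend only on the fixed $g_{\gD_{S_i}}$), so the expectation passes onto $\xi$ and $\E_{\widehat\gD_T}[\xi]=0$. Hence $\Delta^2_{Aggr,\gD_{S_i}} = \|(I-L_i)(g_{\gD_T}-g_{\gD_{S_i}})\|_\pi^2 + \E_{\widehat\gD_T}\|L_i\xi\|_\pi^2$. Bounding each summand by operator norms — $\|L_i\|\le\max\{|\lambda_{min}|,|\lambda_{max}|\}$, and $\|I-L_i\|\le\max\{|1-\lambda_{max}|,|1-\lambda_{min}|\}$ since the spectrum of $I-L_i$ lies in $[1-\lambda_{max},1-\lambda_{min}]$ — then using $\|g_{\gD_T}-g_{\gD_{S_i}}\|_\pi^2 = d_\pi(\gD_{S_i},\gD_T)^2$ (Definition~\ref{def:pi-metric}) and $\E_{\widehat\gD_T}\|\xi\|_\pi^2=\sigma_\pi^2(z)/n$, I recover exactly \eqref{eq: delta-aggr}.

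The one step that needs care is the passage from the spectral hypothesis to the operator-norm bounds, since for a non-normal operator the spectral radius does not control $\|\cdot\|$. I would resolve this by reading "eigenvalue bounded in $[\lambda_{min},\lambda_{max}]$" in the self-adjoint sense (equivalently, as a bound on the numerical range $\langle L_i v,v\rangle_\pi/\|v\|_\pi^2$), which is precisely what holds for the rules of interest: $L_i=\alpha\,\mathrm{id}$ for \texttt{FedDA}, and $L_i = \alpha\,\mathrm{id} + (1-\alpha)\Pi_i$ with $\Pi_i$ the pointwise orthogonal projection onto $\mathrm{span}\,g_{\gD_{S_i}}(\theta)$ for \texttt{FedGP}. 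These act pointwise in $\theta$ and are symmetric on $\R^m$ for every $\theta$, so the pointwise operator norm equals the spectral radius and lifts to $L^\pi$ via $\|L_i v\|_\pi^2 = \E_{\theta\sim\pi}\|L_i(\theta)v(\theta)\|^2$. Everything else (one orthogonality identity and two Cauchy--Schwarz estimates) is routine.
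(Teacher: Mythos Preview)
Your proof is correct and follows essentially the same route as the paper's: Jensen over the $N$ summands, then a bias--variance split of each per-source term using affineness and the identity $F_{\texttt{Aggr}}[g_{\gD_{S_i}},g_{\gD_{S_i}}]=g_{\gD_{S_i}}$, with the cross term killed by $\E_{\widehat\gD_T}[\xi]=0$ and each piece bounded via the eigenvalue hypothesis. The paper organizes the same computation around $\bar g:=F_{\texttt{Aggr}}[g_{\gD_T},g_{\gD_{S_i}}]$ rather than your explicit $L_ig+b_i$, but the algebra is identical. Your caveat about needing a self-adjoint (or numerical-range) reading of the spectral hypothesis to pass to operator-norm bounds is a point the paper leaves implicit; one small inaccuracy in your closing remark is that \texttt{FedGP} as defined uses $\texttt{Proj}_+$ (with the $\max\{\cdot,0\}$), which is not affine in the target argument, so it does not actually fall under this theorem---the paper treats it separately.
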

\textbf{Interpretation. }
The implications of this theorem are significant: first, it predicts how effective an aggregation rule would be, which we use to compare \texttt{FedGP} vs. \texttt{FedDA}. Second, given an estimate of the  domain distance $d_\pi(\gD_{S_i}, \gD_T)$ and the target domain variance $\sigma_{\pi}^2(\widehat\gD_T)$, we can optimally select hyper-parameters for the aggregation operation, a process we name the auto-weighting scheme.

With the relevant quantities defined, we can describe an alternative definition of FDA, useful for our analysis, which answers the pivotal question of \textit{how do we define a "good" aggregation rule}.
\begin{definition}[An Error-Analysis Definition of FDA Aggregation] \label{def:fda-formal}
Given the target domain variance $\sigma_\pi^2(\widehat\gD_T)$ and source-target domain distances $\forall i\in [N]:d_\pi(\gD_{S_i}, \gD_T)$, the problem of FDA is to find a good strategy  $\texttt{Aggr}(\cdot)$ such that its Delta error $\Delta^2_{\texttt{Aggr}}$ is minimized.
\end{definition}
These definitions give a powerful framework for analyzing and designing aggregation rules:
\begin{itemize}[leftmargin=3em]
    \item given an aggregation rule, we can derive its Delta error and see how it would perform given an FDA setting (as characterized by the target domain variance and the source-target distances);
    \item given an FDA setting, we can design aggregation rules to minimize the Delta error.
\end{itemize}

\section{Methods: Gradient Projection and the Auto-weighting Scheme}\label{methods}

To start, we may try two simple methods, i.e., only using the target gradient and only using a source gradient (e.g., the $i^{th}$ source domain).
The Delta error of these baseline aggregation rules is straightforward. By definition, we have that
\begin{align}
    \Delta^2_{\widehat\gD_T\texttt{ only}} = \sigma_\pi^2(\widehat\gD_T),\quad\ \text{and}\quad
    \Delta^2_{\gD_{S_i}\texttt{ only}} = d^2_\pi(\gD_{S_i}, \gD_T). \label{eq:simple-methods}
\end{align}
This immediate result demonstrates the usefulness of the proposed framework: if $\widehat g_{\texttt{Aggr}}$ only uses the target gradient then the error is the target domain variance; if $\widehat g_{\texttt{Aggr}}$ only uses a source gradient then the error is the corresponding source-target domain bias. Therefore, a good aggregation method must strike a balance between the bias and variance, i.e., a bias-variance trade-off, and this is precisely what we will design our auto-weighting mechanism to do.  
Next, we propose two aggregation methods and then show how their auto-weighting can be derived.  

\subsection{The Aggregation Rules: \texttt{FedDA} and \texttt{FedGP} }

A straightforward way to combine the source and target gradients is to convexly combine them, as defined in the following.

\begin{definition}[\texttt{FedDA}] \label{def:fedda}
For each source domains $ i\in [N]$, 
let $\beta_i\in [0,1]$ be the weight that balances between the $i^{th}$ source domain  and the target domain. The \texttt{FedDA} aggregation operation is
{
\setlength{\abovedisplayskip}{5pt}
\setlength{\belowdisplayskip}{2pt}
\begin{align} 
    \texttt{FedDA}(\{g_{\gD_{S_i}}(\theta)\}_{i=1}^N, g_{\widehat\gD_T}(\theta)) =\frac{1}{N}\sum_{i=1}^N \left( (1-\beta_i)g_{\widehat\gD_T}(\theta) + \beta_i g_{\gD_{S_i}}(\theta) \right).
\end{align}
}
\end{definition}
Let us examine the Delta error of \texttt{FedDA}.
\begin{theorem}\label{thm:error-FedDA}
 Consider \texttt{FedDA}. Given the target domain  $\widehat\gD_T$ and $N$ source domains $\gD_{S_1}, \dots, \gD_{S_N}$, we have $\Delta^2_{\texttt{FedDA}}\leq  \frac{1}{N}\sum_{i=1}^N \Delta^2_{\texttt{FedDA}, S_i}$, where
\begin{align}
     \Delta^2_{\texttt{FedDA},S_i}=(1-\beta_i)^2 \sigma_\pi^2(\widehat\gD_T) + \beta_i^2 d^2_\pi(\gD_{S_i}, \gD_T). \label{eq:thm-fedda}
\end{align}
\end{theorem}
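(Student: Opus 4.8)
The plan is to run a standard bias–variance decomposition, exploiting that the FedDA update is, per source, an affine interpolation between the noisy target gradient and a source gradient, with the interpolation preserving $g_{\gD_T}$. First I would use the identity $1 = (1-\beta_i) + \beta_i$ to rewrite, for each $i$,
\begin{align}
    g_{\gD_T} - \big((1-\beta_i)g_{\widehat\gD_T} + \beta_i g_{\gD_{S_i}}\big) = (1-\beta_i)\big(g_{\gD_T} - g_{\widehat\gD_T}\big) + \beta_i\big(g_{\gD_T} - g_{\gD_{S_i}}\big),
\end{align}
so that averaging over $i$ gives $g_{\gD_T} - \widehat g_{\texttt{FedDA}} = \frac1N\sum_{i=1}^N\big[(1-\beta_i)(g_{\gD_T}-g_{\widehat\gD_T}) + \beta_i(g_{\gD_T}-g_{\gD_{S_i}})\big]$. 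Applying convexity of $\|\cdot\|_\pi^2$ (Jensen) to pull the $\frac1N\sum_i$ outside the squared $L^\pi$-norm and then taking $\E_{\widehat\gD_T}$ yields $\Delta^2_{\texttt{FedDA}} \le \frac1N\sum_{i=1}^N \Delta^2_{\texttt{FedDA},S_i}$ with $\Delta^2_{\texttt{FedDA},S_i} := \E_{\widehat\gD_T}\big\|(1-\beta_i)(g_{\gD_T}-g_{\widehat\gD_T}) + \beta_i(g_{\gD_T}-g_{\gD_{S_i}})\big\|_\pi^2$.

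Next I would expand this per-source squared norm. Writing $u := g_{\gD_T} - g_{\widehat\gD_T}$ (random, depending on $\widehat\gD_T$) and $v := g_{\gD_T} - g_{\gD_{S_i}}$ (deterministic), bilinearity of $\langle\cdot,\cdot\rangle_\pi$ gives
\begin{align}
    \|(1-\beta_i)u + \beta_i v\|_\pi^2 = (1-\beta_i)^2\|u\|_\pi^2 + 2\beta_i(1-\beta_i)\langle u, v\rangle_\pi + \beta_i^2\|v\|_\pi^2.
\end{align}
The key step is that the cross term vanishes in expectation: since $\langle u,v\rangle_\pi = \E_{\theta\sim\pi}\langle u(\theta), v(\theta)\rangle$, Fubini lets me exchange $\E_{\widehat\gD_T}$ and $\E_{\theta\sim\pi}$, and then the Assumption $\E_{\widehat\gD_T}[g_{\widehat\gD_T}] = g_{\gD_T}$ (applied pointwise in $\theta$) gives $\E_{\widehat\gD_T}[u(\theta)] = 0$ for every $\theta$, hence $\E_{\widehat\gD_T}\langle u,v\rangle_\pi = 0$. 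What remains is $\Delta^2_{\texttt{FedDA},S_i} = (1-\beta_i)^2\,\E_{\widehat\gD_T}\|g_{\gD_T}-g_{\widehat\gD_T}\|_\pi^2 + \beta_i^2\,\|g_{\gD_T}-g_{\gD_{S_i}}\|_\pi^2$, and identifying the first factor with $\sigma_\pi^2(\widehat\gD_T)$ via Definition~\ref{def:target-var} and the second with $d_\pi^2(\gD_{S_i},\gD_T)$ via Definition~\ref{def:pi-metric} completes the proof. (Alternatively, one can observe that $F_{\texttt{FedDA}}[g,g']=(1-\beta_i)g+\beta_i g'$ is affine in its first argument, satisfies $F[g,g]=g$, and has associated linear map $g\mapsto(1-\beta_i)g$ with single eigenvalue $1-\beta_i$, so Theorem~\ref{thm-decompose} applies with $\lambda_{\min}=\lambda_{\max}=1-\beta_i$ and reproduces exactly~\eqref{eq:thm-fedda}.)

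I do not expect a genuine obstacle here — the result is essentially a textbook bias–variance identity lifted to the $L^\pi$ functional setting. The only points requiring a little care are: (i) the Jensen step that produces the $\frac1N\sum_i$ upper bound rather than an equality (this is where the ``$\le$'' in the theorem comes from, and it is tight when the per-source error directions are aligned); and (ii) justifying the Fubini exchange so that unbiasedness can be invoked at each fixed $\theta$, which needs the mild integrability implicit in working in $L^\pi$ (all the norms involved are finite by assumption). Everything else is bilinear expansion and substitution of definitions.
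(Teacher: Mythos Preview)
Your proposal is correct and follows essentially the same route as the paper: Jensen on the convex combination over $i$, then the per-source bias--variance expansion where the cross term vanishes by $\E_{\widehat\gD_T}[g_{\widehat\gD_T}]=g_{\gD_T}$, followed by identification with Definitions~\ref{def:target-var} and~\ref{def:pi-metric}. Your parenthetical observation that Theorem~\ref{thm-decompose} with $\lambda_{\min}=\lambda_{\max}=1-\beta_i$ recovers \eqref{eq:thm-fedda} exactly is also noted in the paper right after the theorem statement.
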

Therefore, we can see the benefits of combining the source and target domains. For example, with $\beta_i=\tfrac{1}{2}$, we have $\Delta^2_{\texttt{FedDA},S_i}=\tfrac{1}{4}\sigma_\pi^2(\widehat\gD_T)+\tfrac{1}{4}d^2_\pi(\gD_{S_i}, \gD_T)$. We note that $\Delta^2_{\texttt{FedDA},S_i}$ attains the upper bound in Theorem~\ref{thm-decompose} with $\lambda_{max}=\lambda_{min}=1-\beta_i$. This hints that there may be other aggregation rules that can do better, as we shown in the following.

Intuitively, due to domain shift, signals from source domains may not always be relevant. Inspired by the filtering technique in Byzantine robustness of FL~\citep{pmlr-v119-xie20c}, we propose Federated Gradient Projection (\texttt{FedGP}). This method refines and combines beneficial components of the source gradients, aided by the target gradient, by gradient projection and filtering out unfavorable ones.

\begin{definition}[\texttt{FedGP}] \label{def:fedgp}
For each source domains $ i\in [N]$,
let $\beta_i\in [0,1]$ be the weight that balances between $i^{th}$ source domain  and the target domain. 
The \texttt{FedGP} aggregation operation is
\begin{align} 
    \texttt{FedGP}(\{g_{\gD_{S_i}}(\theta)\}_{i=1}^N, g_{\widehat\gD_T}(\theta)) =\frac{1}{N}\sum_{i=1}^N \left( (1-\beta_i)g_{\widehat\gD_T}(\theta) + \beta_i \texttt{Proj}_+(g_{\widehat\gD_T}(\theta)|g_{\gD_{S_i}}(\theta)) \right).
\end{align}
where $\texttt{Proj}_+(g_{\widehat\gD_T}(\theta)|g_{\gD_{S_i}}(\theta)) =\max\{\langle g_{\widehat\gD_T}(\theta),g_{\gD_{S_i}}(\theta)\rangle, 0 \}g_{\gD_{S_i}}(\theta)/\|g_{\gD_{S_i}}(\theta)\|^2$ is the operation that projects $g_{\widehat\gD_T}(\theta)$ to the positive direction of $g_{\gD_{S_i}}(\theta)$.
\end{definition}

We first derive the  Delta error of \texttt{FedGP}, and compare it to that of \texttt{FedDA}.
\begin{theorem}[Informal Version]\label{thm:error-FedGP}
Consider \texttt{FedGP}. Given the target domain  $\widehat\gD_T$ and $N$ source domains $\gD_{S_1}, \dots, \gD_{S_N}$, we have $\Delta^2_{\texttt{FedGP}}\leq  \frac{1}{N}\sum_{i=1}^N \Delta^2_{\texttt{FedGP}, S_i}$,  where
\begin{align} 
    \Delta^2_{\texttt{FedGP},S_i}\approx \left((1-\beta_i)^2+\frac{2\beta_i-\beta_i^2}{m}\right) \sigma_\pi^2(\widehat\gD_T) + \beta_i^2 \bar\tau^2 d^2_\pi(\gD_{S_i}, \gD_T), \label{eq:thm-fedgp}
\end{align}
In the above equation, $m$ is the model dimension and $\bar \tau^2= \E_\pi[\tau(\theta)^2] \in [0, 1]$ where $\tau(\theta)$ is the $\sin(\cdot)$ value of the angle between $g_{\gD_S}(\theta)$ and $g_{\gD_T}(\theta)-g_{\gD_S}(\theta)$. 
\end{theorem}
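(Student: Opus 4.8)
The plan is to decompose the per-source Delta error $\Delta^2_{\texttt{FedGP},S_i}$ by first conditioning on the parameter $\theta\sim\pi$ and analyzing the pointwise error $\|g_{\gD_T}(\theta) - \widehat g_{\texttt{FedGP},S_i}(\theta)\|^2$, then taking the double expectation over $\pi$ and over the random draw of $\widehat\gD_T$. Writing $g_T := g_{\gD_T}(\theta)$, $\widehat g_T := g_{\widehat\gD_T}(\theta)$, and $g_S := g_{\gD_{S_i}}(\theta)$, the per-source aggregate at $\theta$ is $(1-\beta_i)\widehat g_T + \beta_i \texttt{Proj}_+(\widehat g_T \mid g_S)$. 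I would split $\widehat g_T = g_T + \xi$ where $\xi$ is the sampling noise with $\E_{\widehat\gD_T}[\xi]=0$ and $\E_{\widehat\gD_T}\|\xi\|^2 = \sigma_\pi^2(z)/n$ at each $\theta$ (using the Assumption). The key geometric observation is that when $\langle \widehat g_T, g_S\rangle > 0$ the projection $\texttt{Proj}_+(\widehat g_T\mid g_S)$ is the ordinary orthogonal projection of $\widehat g_T$ onto the line spanned by $g_S$, which I would write as $P\widehat g_T$ where $P = g_S g_S^\top/\|g_S\|^2$ is a rank-one orthogonal projector.

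Next I would handle the two geometric effects that produce the two modifications relative to \texttt{FedDA}'s bound \eqref{eq:thm-fedda}. For the variance term: the noise $\xi$ gets processed as $(1-\beta_i)\xi + \beta_i P\xi$ (ignoring, to leading order, the event that projection clipping flips sign, and treating the projector direction $g_S$ as effectively independent of the isotropic-in-expectation noise). Then $\E\|(1-\beta_i)\xi + \beta_i P\xi\|^2 = (1-\beta_i)^2\E\|\xi\|^2 + 2\beta_i(1-\beta_i)\E\langle\xi, P\xi\rangle + \beta_i^2\E\|P\xi\|^2$; since $P$ is a rank-one projector in $\R^m$ and $\xi$ is (in the relevant averaged sense) isotropic, $\E\langle\xi,P\xi\rangle = \E\|P\xi\|^2 = \E\|\xi\|^2/m$, which collapses the cross and quadratic projection terms into $\bigl((1-\beta_i)^2 + (2\beta_i-\beta_i^2)/m\bigr)\sigma_\pi^2(\widehat\gD_T)$ — this explains the $1/m$ shrinkage: projecting noise onto a one-dimensional subspace kills all but a $1/m$ fraction of it. For the bias term: the relevant quantity is $\|g_T - [(1-\beta_i)g_T + \beta_i P g_T]\|^2 = \beta_i^2\|g_T - Pg_T\|^2 = \beta_i^2\|(I-P)g_T\|^2$, and since $P$ projects onto $\mathrm{span}(g_S)$, $\|(I-P)g_T\|$ is the component of $g_T$ orthogonal to $g_S$. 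A short trigonometric identity on the triangle with sides $g_S$, $g_T$, $g_T - g_S$ shows $\|(I-P)g_T\| = \|g_T - g_S\|\cdot\tau(\theta)$ where $\tau(\theta) = \sin(\text{angle between }g_S\text{ and }g_T - g_S)$ — this is exactly the statement that the perpendicular distance from $g_T$ to the line through $0$ and $g_S$ equals $\|g_T - g_S\|\sin\angle$; integrating over $\pi$ and bounding $\tau(\theta)^2$ by its average $\bar\tau^2$ gives $\beta_i^2\bar\tau^2 d_\pi^2(\gD_{S_i},\gD_T)$. Finally I would invoke Theorem~\ref{thm-decompose} (or re-derive directly) to pass from the per-source bounds to $\Delta^2_{\texttt{FedGP}}\le\frac1N\sum_i\Delta^2_{\texttt{FedGP},S_i}$ via convexity of $\|\cdot\|_\pi^2$, since $F_{\texttt{FedGP}}$ is affine in its first argument and fixes the diagonal.

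The main obstacle — and the reason this is the \emph{informal} version with ``$\approx$'' — is the $\texttt{Proj}_+$ clipping: the $\max\{\langle\widehat g_T, g_S\rangle, 0\}$ makes the map nonlinear in $\widehat g_T$, so it is neither exactly affine (breaking a direct appeal to Theorem~\ref{thm-decompose}) nor does the clean isotropy argument for the noise go through without error terms. I would control this by arguing that the event $\langle\widehat g_T, g_S\rangle < 0$ while $\langle g_T, g_S\rangle > 0$ (or vice versa) is a boundary event whose contribution is lower order when the noise $\xi$ is small relative to $g_T$ (i.e., $n$ not too small, or equivalently treating the bound as asymptotic in the signal-to-noise ratio); on the complementary high-probability event the projection coincides with the linear $P\widehat g_T$ and all the above computations are exact. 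A fully rigorous version (presumably deferred to the appendix) would carry these clipping error terms explicitly and show they are dominated by $O(\sigma_\pi^2/n)$ and $O(d_\pi^2)$ contributions with constants that don't affect the stated leading-order form; here I would simply flag that the approximation is tight in the regime of interest and relegate the precise statement to the formal theorem.
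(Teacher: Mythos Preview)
Your proposal is correct and follows essentially the same route as the paper: Jensen's inequality (convexity of $\|\cdot\|_\pi^2$) for the per-source decomposition, the rank-one projector $P=g_Sg_S^\top/\|g_S\|^2$ together with an isotropy argument on the noise to produce the $(2\beta_i-\beta_i^2)/m$ correction, and the trigonometric identity $\|(I-P)g_T\|=\|g_T-g_S\|\tau(\theta)$ for the bias term.

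The one methodological difference worth noting is the order of operations. The paper first carries the clipping indicator $\widehat\delta(\theta)=\mathbf{1}(\langle g_{\widehat\gD_T}(\theta),g_{\gD_{S_i}}(\theta)\rangle>0)$ through an \emph{exact} expansion of $\Delta^2_{\texttt{FedGP},S_i}$ (yielding, in addition to your terms, a residual $\beta_i^2\E[(1-\widehat\delta)\|g_{\gD_T}\|^2]$ and a cross term involving $\widehat\delta$), and only \emph{then} applies the approximations $\widehat\delta\approx\bar\delta\approx 1$ and $\tau(\theta)\approx\bar\tau$ in a mean-field style, plus the i.i.d.-coordinates heuristic for $\E\langle\xi,u\rangle^2\approx\E\|\xi\|^2/m$. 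You instead linearize the projection up front and treat the clipping event as a lower-order perturbation. Both reach the same informal bound; the paper's ordering has the advantage that the exact formula stands as a separate rigorous statement, making precisely visible which terms are being dropped. Also, as you correctly anticipated, Theorem~\ref{thm-decompose} is \emph{not} invoked here (the map is not affine because of $\max\{\cdot,0\}$); the paper uses Jensen directly, as in your ``re-derive via convexity'' alternative.
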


We note the above theorem is the approximated version of Theorem~\ref{thm:error-FedGP-restate} where the derivation is non-trivial. The approximations are mostly done in analog to a mean-field analysis, which are detailed in Appendix~\ref{subsec:thm:error-FedGP}.  %

\textbf{Interpretation.}
Comparing the Delta error of \texttt{FedGP} (equation~\ref{eq:thm-fedgp}) and that of \texttt{FedDA} (equation~\ref{eq:thm-fedda}), we can see that \texttt{FedGP} is more robust to large source-target domain shift $d_\pi(\gD_{S_i}, \gD_T)$ given $\bar\tau<1$. This aligns with our motivation of \texttt{FedGP} which filters out biased signals from the source domain. Moreover, our theory reveals a surprising benefit of \texttt{FedGP} as follows. Note that 
\begin{align}
    \Delta^2_{\texttt{FedDA},S_i}-\Delta^2_{\texttt{FedGP},S_i}\approx  \beta_i^2 (1-\bar\tau^2) d^2_\pi(\gD_{S_i}, \gD_T) -\tfrac{2\beta_i-\beta_i^2}{m}\sigma_\pi^2(\widehat\gD_T).
\end{align}
In practice, the model dimension $m\gg 1$ while the $\bar \tau^2<1$, thus we can expect \texttt{FedGP} to be mostly better than \texttt{FedDA} with the same weight $\beta_i$. With that said, we move on the auto-weighting scheme.

\subsection{The Auto-weighting \texttt{FedGP} and \texttt{FedDA}}\label{auto-weight-theory}

Naturally, the above analysis implies a good choice of weighting parameters for either of the methods. For each source domains $S_i$, we can solve for the optimal $\beta_i$ that minimize the corresponding  Delta errors, i.e., $\Delta^2_{\texttt{FedDA},S_i}$ (equation~\ref{eq:thm-fedda}) for \texttt{FedDA} and  $\Delta^2_{\texttt{FedDA},S_i}$ (equation~\ref{eq:thm-fedgp}) for \texttt{FedGP}. Note that for $\Delta^2_{\texttt{FedGP},S_i}$ we can safely view $\tfrac{2\beta_i-\beta_i^2}{m}\approx 0$ given the high dimensionality of our models. Since either of the Delta errors is quadratic in $\beta_i$, they enjoy closed-form solutions:
\begin{align}
    \beta_i^\texttt{FedDA}=\frac{\sigma_\pi^2(\widehat\gD_T)}{d^2_\pi(\gD_{S_i}, \gD_T)+\sigma_\pi^2(\widehat\gD_T)}, \qquad \beta_i^\texttt{FedGP}=\frac{\sigma_\pi^2(\widehat\gD_T)}{\bar\tau^2d^2_\pi(\gD_{S_i}, \gD_T)+\sigma_\pi^2(\widehat\gD_T)}. \label{eq:beta-choice}
\end{align}

The exact values of $\sigma_\pi^2(\widehat\gD_T), d^2_\pi(\gD_{S_i}, \gD_{T}),\bar\tau^2$ are unknown, since they would require knowing the ground-truth target domain gradient $g_{\gD_T}$.  Fortunately, using only the available training data, we can efficiently obtain unbiased estimators for those values, and accordingly obtain estimators for the best $\beta_i$. The construction of the estimators is non-trivial and is detailed in Appendix~\ref{sec:supp-auto-weight}.

The proposed methods are summarized in Algorithm~\ref{alg:one}, and detailed in Appendix~\ref{sec:supp-exp-details}.  
During one round of computation, the target domain client does $B$ local model updates with $B$ batches of data. In practice, we use these intermediate local updates $\{g^j_{\widehat\gD_{T}}\}^B_{j=1}$ to estimate the optimal $\beta_i$, where $g^j_{\widehat\gD_{T}}$ stands for the local model update using the $j^{th}$ batch of data. 
In other words, we choose $\pi$ to be the empirical distribution of the model parameters encountered along the optimization path, aligning with Theorem~\ref{thm-gen}'s suggestion for an ideal $\pi$.

We observe that \texttt{FedGP}, quite remarkably, is robust to the choice of $\beta$: simply choosing $\beta=0.5$ is good enough for most of the cases as observed in our experiments. 
On the other hand, although \texttt{FedDA} is sensitive to the choice of $\beta$, the auto-weighted procedure significantly improves the performance for \texttt{FedDA}, demonstrating the usefulness of our theoretical framework.

\begin{algorithm}[h!]
   \caption{FDA: Gradient Projection and the Auto-Weighting Scheme
   }\label{alg:one}
\begin{algorithmic}
   \STATE {\bfseries Input:} $N$ source domains $\mathcal{D}_{S}=\{\mathcal{D}_{S_{i}}\}_{i=1}^{N}$, target domain $\mathcal{D}_{T}$; $N$ source clients $\{\mathcal{C}_{S_{i}}\}_{i=1}^{N}$, target client $\mathcal{C}_{T}$, server $\mathcal{S}$;  number of rounds $R$; aggregation rule \texttt{Aggr}; whether to use \texttt{auto\_weight}.\\
   Initialize global model $h_{global}^{(0)}$. Default $\{\beta_i\}_{i=1}^{N} \gets \{0.5\}_{i=1}^{N}$.
   \FOR{$r=1, 2,..., R$}
    \FOR{ source domain client $\mathcal{C}_{S_{i}}$ in $\{\mathcal{C}_{S_{i}}\}_{i=1}^{N}$}
        \STATE Initialize local model $h_{S_{i}}^{(r)} \gets h_{global}^{(r-1)}$, optimize $h_{S_{i}}^{(r)}$ on $\mathcal{D}_{S_{i}}$, send $h_{S_{i}}^{(r)}$ to server $\mathcal{S}$.\\
    \ENDFOR
    \STATE Target domain client $\mathcal{C}_{T}$ initialize $h_{T}^{(r)} \gets h_{global}^{(r-1)}$, optimizes $h_{T}^{(r)}$ on $\mathcal{D}_{T}$, send $h_{T}^{(r)}$ to server $\gS$.
    \STATE Server $\mathcal{S}$ computes model updates $g_{T} \gets h_{T}^{(r)} - h_{global}^{(r-1)}$ and $g_{S_i} \gets h_{S_{i}}^{(r)} - h_{global}^{(r-1)}$ for $i\in [N]$. 
    \IF{\texttt{auto\_weight}} 
    \STATE $\mathcal{C}_{T}$ sends intermediate local model updates $\{g^j_{\widehat\gD_{T}}\}_{j=1}^B$ to $\gS$.%
        \STATE $\mathcal{S}$ estimates of $\{d_\pi(\gD_{S_i}, \gD_T)\}_{i=1}^{N}, \bar{\tau}^{2}$ and $\sigma_\pi(\widehat\gD_T)$ using $\{g_{S_i}\}_{i=1}^{N}$ and $\{g^j_{\widehat\gD_{T}}\}_{j=1}^B$.
        \STATE $\gS$ updates $\{\beta_i\}^{N}_{i=1} $ according to \eqref{eq:beta-choice}.
    \ENDIF
    
    \STATE $\mathcal{S}$ updates the global model as $h_{global}^{(r)} \gets h_{global}^{(r-1)} + \texttt{Aggr}(\{g_{S_i}\}_{i=1}^{N}, g_{T}, \{\beta_i\}_{i=1}^{N})$.
   \ENDFOR
\end{algorithmic}
\end{algorithm}

\section{Experiments}\label{experiments}

In this section, we present and discuss the results of real dataset experiments with controlled domain shifts (Section~\ref{varing-shifts}) and real-world domain shifts (Section~\ref{real-shifts}). 
Ablation studies on target data scarcity and visualizations are available in Appendix~\ref{sec:auto-weight-beta-visual} \& ~\ref{sec:more-ablation-res}. Synthetic data experiments verifying our theoretical insights are presented in Appendix~\ref{sec:supp-synthetic-exp}. In Appendix~\ref{compare-ufda-sota}, we show our methods surpass UFDA and Domain Generalization (DG) methods on PACS~\citep{li2017deeper}, Office-Home~\citep{venkateswara2017deep}, and DomainNet~\citep{peng2019moment}. Implementation details and extended experiments can be found in the appendix.

\subsection{Semi-synthetic Dataset Experiments with Varing Shifts}\label{varing-shifts}

\begin{figure*}[h!]
  \begin{subfigure}[b!]{.33\columnwidth}
    \centering
    \includegraphics[width=0.8\linewidth]{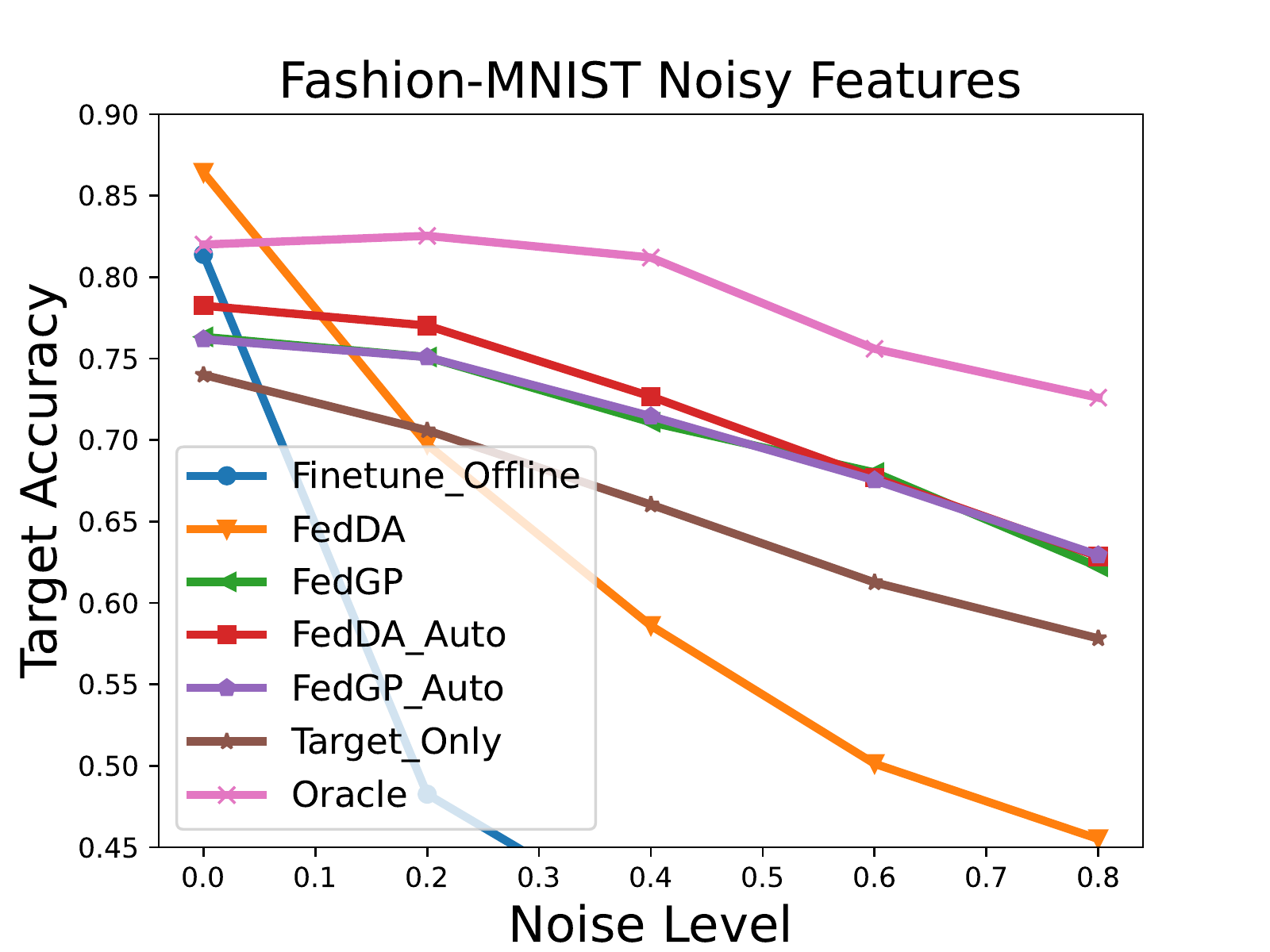}
    \caption{Fashion-MNIST noisy features}
    \label{fig-a}
  \end{subfigure}
  \begin{subfigure}[b!]{.33\columnwidth}
    \centering
    \includegraphics[width=0.8\linewidth]{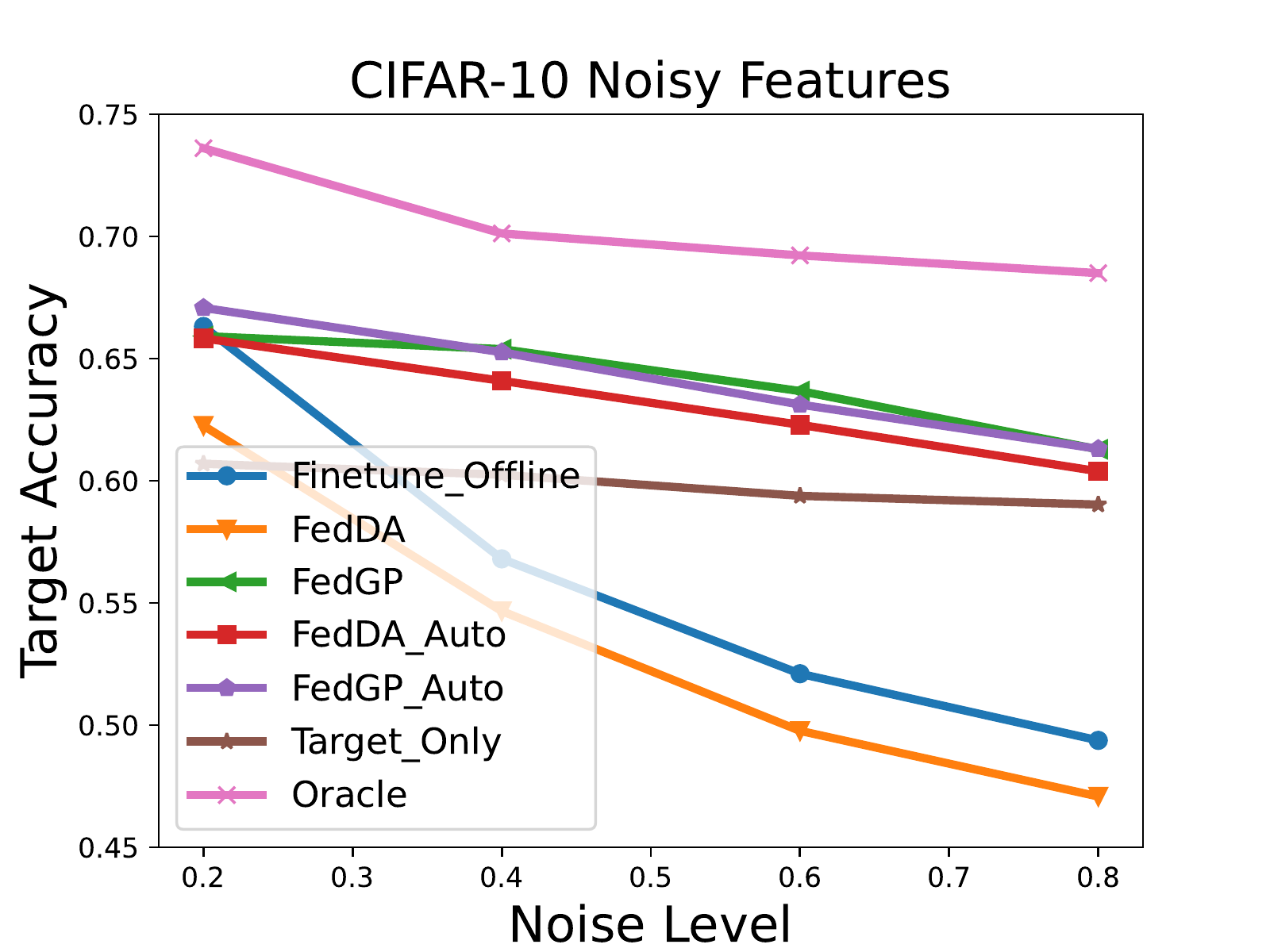}
    \caption{CIFAR-10 noisy features}
    \label{fig-b}
\end{subfigure}
  \begin{subfigure}[b!]{.33\columnwidth}
    \centering
    \includegraphics[width=0.8\linewidth]{./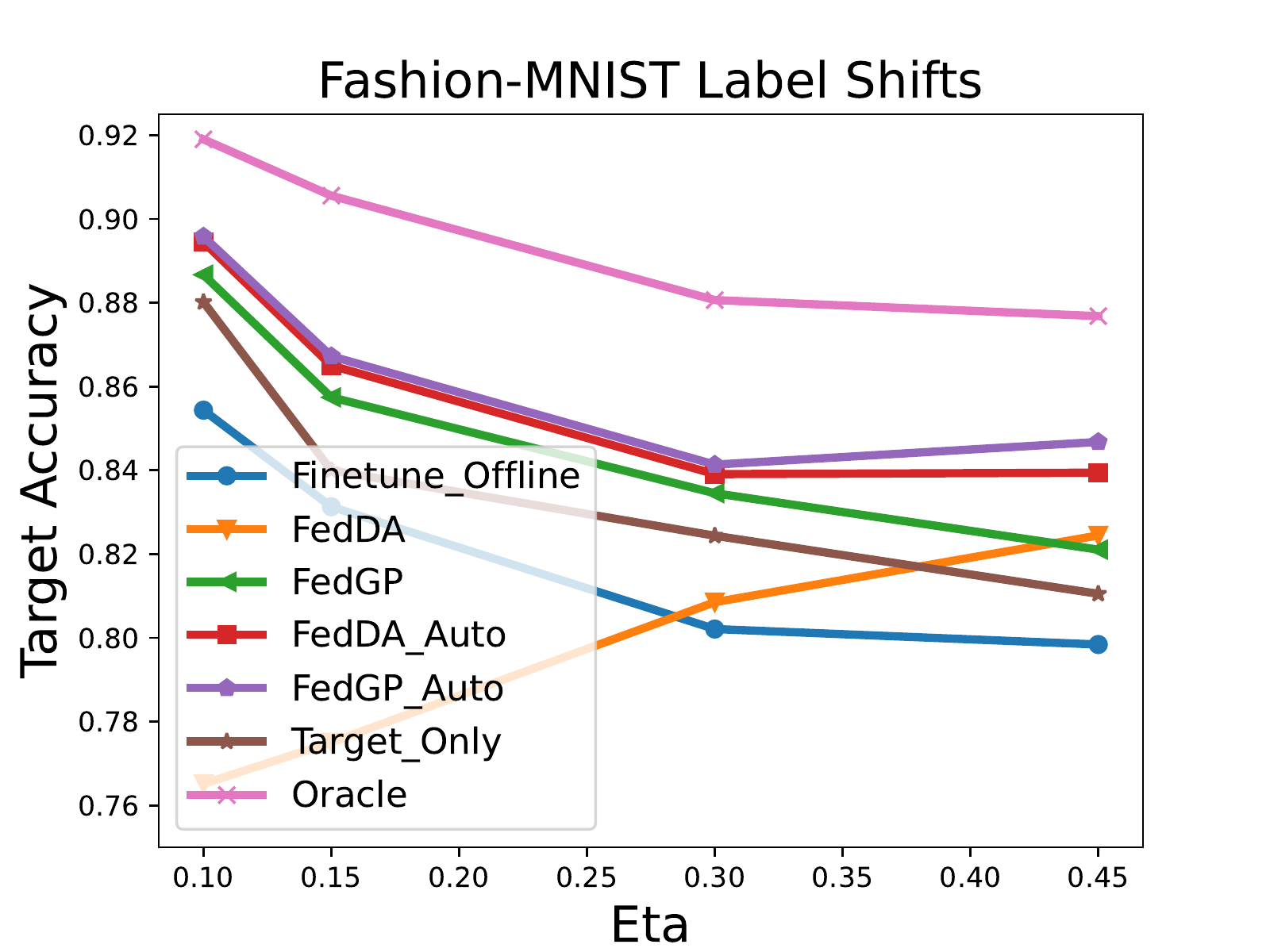}
    \caption{Fashion-MNIST label shifts}
    \label{fig-c}
  \end{subfigure}
  \caption{The impact of changing domain shifts with noisy features or label shifts.}
  \label{fig:semi-syn}
\end{figure*}

\paragraph{Datasets, models, and methods.} We create controlled distribution shifts by adding different levels of feature noise and label shifts to Fashion-MNIST~\citep{fashion-mnist} and CIFAR-10~\citep{cifar} datasets, adapting from the Non-IID benchmark~\citep{li2022federated} with the following two settings: 1) \textbf{Noisy features:} We add Gaussian noise levels of $std = (0.2, 0.4, 0.6, 0.8)$ to input images of the target client of two datasets, to create various degrees of shifts between source and target domains. 2) \textbf{Label shifts}: We split the Fashion-MNIST into two sets with 3 and 7 classes, respectively, denoted as $D_1$ and $D_2$. $D_S$ = $\eta$ portion from $D_1$ and $(1-\eta)$ portion from $D_2$, $D_T$ = $(1-\eta)$ portion from $D_1$ and $\eta$ portion from $D_2$ with $\eta = [0.45, 0.30, 0.15, 0.10, 0.05, 0.00]$. In addition, we use a CNN model architecture. We set the communication round $R=50$ and the local update epoch to $1$, with $10$ clients (1 target, 9 source clients) in the system. We compare the following methods: Source Only (only averaging the source gradients), \texttt{Finetune\_Offline} (fine-tuning locally after source-only training), \fedda{} ($\beta=0.5$), \fedgp{}($\beta=0.5$), and their auto-weighted versions \fedda{}\texttt{\_Auto} and \fedgp{}\texttt{\_Auto}, the Oracle (supervised training with all target data ($\gD_T$)) and Target Only (only use target gradient ($\widehat\gD_T$)). More details can be found in Appendix~\ref{sec:semi-syn-exp}. 

\paragraph{Auto-weighted methods and \fedgp{} keep a better trade-off between bias and variance.} As shown in Figure~\ref{fig:semi-syn}, when the source-target shifts grow bigger, \fedda{}, \texttt{Finetune\_Offline}, and Source Only degrade more severely compared with auto-weighted methods, \fedgp{} and Target Only. We find that auto-weighted methods and \fedgp{} outperform other baselines in most cases, being less sensitive to changing shifts. In addition, auto-weighted \fedda{} manages to achieve a significant improvement compared with the fixed weight \fedda{}, with a competitive performance compared with \fedgp{}\_Auto, while \fedgp{}\_Auto generally has the best accuracy compared with other methods, which coincides with the theoretical findings. Full experiment results can be found in Appendix~\ref{sec:semi-syn-exp}.

\subsection{Real Dataset Experiments with Real-world Shifts}\label{real-shifts}

\textbf{Datasets, models, baselines, and implementations.} We use the Domainbed~\citep{domainbed} benchmark with multiple domains, with realistic shifts between source and target clients. We conduct experiments on three datasets: ColoredMNIST~\citep{coloredmnist}, VLCS~\citep{vlcs}, TerraIncognita~\citep{terra} datasets. We {randomly sampled} $0.1\%$ samples of ColoredMNIST, and $5\%$ samples of VLCS and TerraIncognita for their respective target domains. The task is classifying the target domain. We use a CNN model for ColoredMNIST, ResNet-18~\cite{he2016deep} for VLCS and TerraIncognita. For baselines in particular, in addition to the methods in Section~\ref{varing-shifts}, we compare (1) \emph{personalization} baselines: \texttt{FedAvg}, \texttt{Ditto}~\citep{li2021ditto}, \texttt{FedRep}~\citep{collins2021fedrep}, \texttt{APFL}~\citep{deng2020apfl}, and \texttt{KNN-per}~\citep{marfoq2022knnper}; (2) UFDA methods: \texttt{KD3A}~\citep{pmlr-v139-feng21f} (current SOTA): note that our proposed methods use \emph{few} percentages of target data, while UFDA here uses $100\%$ unlabeled target data; { (3) DG method: we report the best DG performance in DomainBed~\citep{gulrajani2020search}}. For each dataset, we test the target accuracy of each domain using the left-out domain as the target and the rest as source domains. More details and full results are in Appendix~\ref{sec:real-world-imp-res}.

\textbf{Our methods consistently deliver superior performance.} Table~\ref{table:real-data-res} reveals that our auto-weighted methods outperform others in all cases, and some of their accuracies approach/outperform the corresponding upper bound (Oracle). The auto-weighted scheme improves \texttt{FedDA} significantly. Interestingly, we observe that \fedgp{}, even with default fixed betas ($\beta=0.5$), achieves competitive results. Our methods surpass personalized FL, UFDA, and DG baselines by significant margins.

\begin{table*}[htb]
\fontsize{7.5}{9}\selectfont
\renewcommand{\arraystretch}{0.85}
\centering
\setlength\tabcolsep{3pt}
\begin{tabular}{lcccccccccccccc}
\hline & \\[-2ex]
            & \multicolumn{4}{c}{\bf{ColoredMNIST} (0.1\%)}                                                    & \multicolumn{5}{c}{\bf{VLCS} (5\%)} & \multicolumn{1}{c}{\bf{Terre} (5\%)}                                                                                    \\
   Domains               & +90\%                            & +80\%                            & -90\%                            & Avg                       & C                                & L                                & V                                & S                                & Avg               & Avg         \\\hline & \\[-2ex]
{ Source Only}           & 56.8(0.8)                    & 62.4(1.8)                     & 27.8(0.8)                   & 49.0                     & 90.5(5.3)                     & 60.7(1.8)                     & 70.2(2.0)                     & 69.1(2.0)                     & 72.6                 & 37.5    \\
{ \fedda{}   }          & 60.5(2.5)                     & 65.1(1.3)                     & 33.0(3.2)                    & 52.9                     & 97.7(0.5)                     & 68.2(1.4)                      & 75.3(1.5)           & 76.7(0.9)                    & 79.5      & 64.7               \\
{ \fedgp{}   }          & 83.7(9.9) & 74.4(4.4) & 89.8(0.5) & 82.4        & 99.4(0.3)            & 71.1(1.2)           & 73.6(3.1)                    & 78.7(1.3)            & 80.7    & 71.2        \\
{ \fedda{}\_Auto} & 85.3(5.7) & 73.1(7.3) & 88.9(1.1) & 82.7 & 99.8(0.2) & 73.1(1.3) & \bf78.4(1.5) & \bf83.7(2.3) & \bf83.8 & \bf74.6\\
{ \fedgp{}\_Auto} & \bf86.2(4.5) & \bf76.5(7.3) & \bf89.6(0.4) & \bf84.1 & \bf99.9(0.2) & \bf73.2(1.8) & \bf78.6(1.5) & 83.47(2.5) & \bf83.8 & 74.4\\
{ Target Only}       & 85.6(4.8)                     & 73.5(3.0)                     & 87.1(3.4)                    & 82.1                     & 97.8(1.5)                    & 68.9(1.9)                     & 72.3(1.7)                     & 76.0(1.9)                     & 78.7      & 67.2               \\ \hline
FedAvg & 63.2 & 72.0 & 10.9 & 48.7                   & 96.3                 & 68.0                 & 69.8                 & 68.7                 & 75.7          & 30.0         \\
Ditto             & 62.2 & 71.3 & 19.3 & 50.9                   & 95.9                 & 67.5                 & 70.5                 & 66.1                 & 75.0              & 28.8     \\
FedRep            & 65.4 & 36.4 & 31.7 & 44.5                   & 91.1                 & 60.4                 & 70.3                 & 70.1                 & 73.0          & 20.9         \\
APFL              & 43.8 & 61.6 & 30.2 & 45.2                   & 68.6                 & 61.0                 & 65.4                  & 49.9                 & 61.2     & 52.7              \\
KNN-per           & 67.5 & 67.6 & 12.4 & 49.1                   & 97.8                 & 65.9                 & 75.7                 & 74.4                 & 78.4  & 43.1\\\hline
KD3A (\textbf{100\%} data) & 65.2 & 73.1 & 9.7 &  49.3 & 99.6 & 63.3 & 78.1 & 80.5 & 79.0  & 39.0\\\hline
{ Best DG} & 49.9 & 62.1 & 10.0 & 40.7 & 96.9 & 65.7 & 73.3 & 78.7 & 78.7 &  48.7 \\\hline
{ Oracle }           & 90.0(0.4)                     & 80.3(0.4)                     & 90.0(0.5)                     & 86.8                     & 100.0(0.0)                    & 72.7(2.5)                    & 78.7(1.4)                    & 82.7(1.1)                     & 83.5     & 93.1  \\\hline
\end{tabular}
\caption{\textbf{Target domain test accuracy} (\%) on ColoredMNIST, VLCS, and DomainNet.}
\label{table:real-data-res}
\end{table*}

\subsection{Ablation Study and Discussion}
\textbf{The effect of source-target balancing weight $\beta$.} We run \fedda{} and \fedgp{} with varying $\beta$ on Fashion-MNIST, CIFAR10, and Colored-MNIST, as shown in Figure~\ref{fig:ablation-beta}. In most cases, we observe \fedgp{} outperforming \fedda{}, and \fedda{} being \emph{more sensitive} to the varying $\beta$ values, suggesting that \texttt{FedDA}\_Auto can choose optimal enough $\beta$. Complete results are in Appendix~\ref{sec:static-beta-res}. 

\textbf{Effectiveness of projection and filtering.} We show the effectiveness of gradient projection and filtering in Fashion-MNIST and CIFAR-10 noisy feature experiments in Table~\ref{table:filter}. Compared with \fedda{}, which does not perform projection and filtering, projection achieves a large ($15\%$) performance gain, especially when the shifts are larger. Further, we generally get a $1\%-2\%$ gain via filtering. 

\begin{figure*}[h!]
 \begin{minipage}[t]{0.6\textwidth}

  \begin{subfigure}[b!]{.30\columnwidth}
    \centering
    \includegraphics[width=\linewidth]{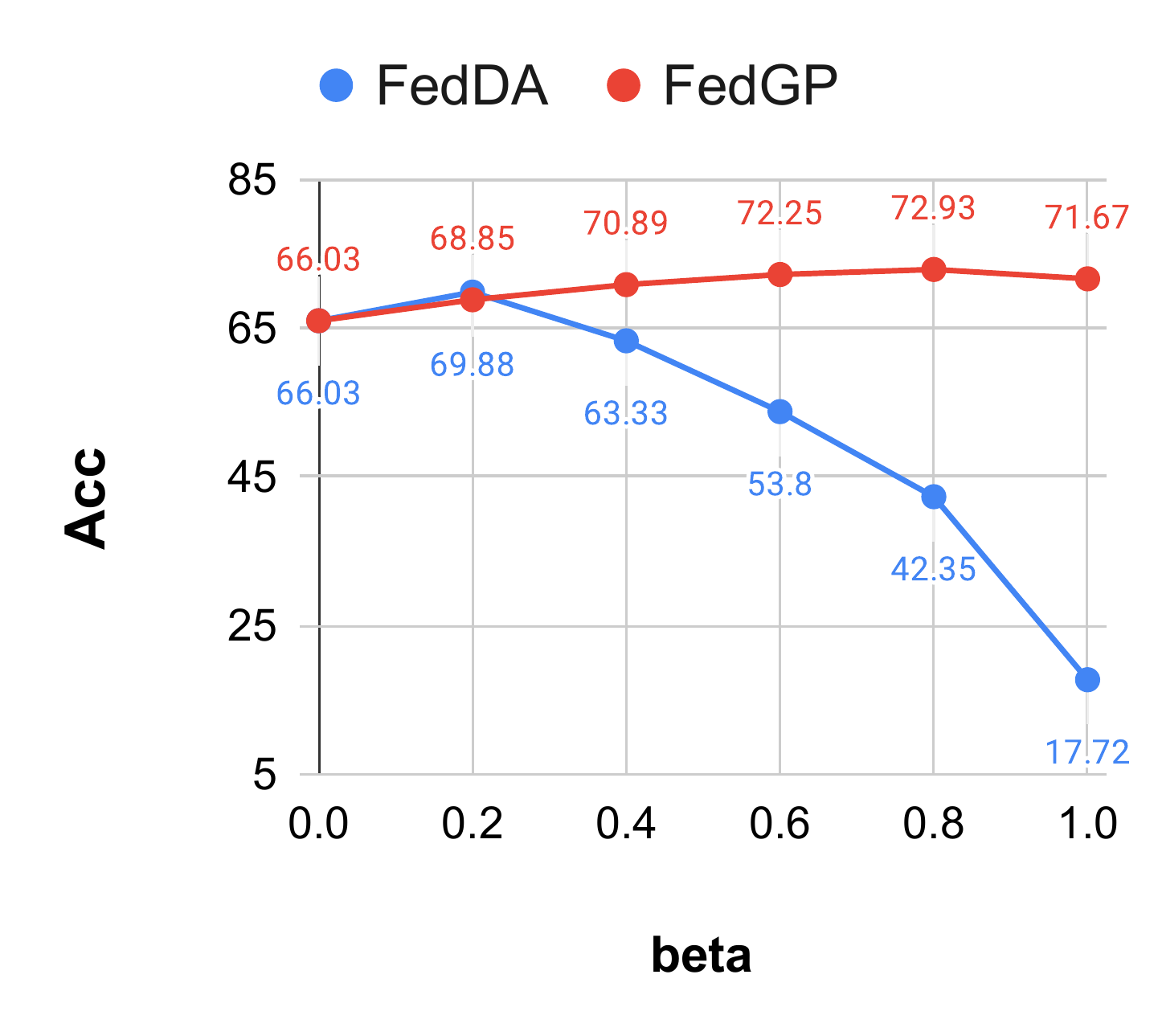}
    \caption{Fashion-MNIST\\ (0.4 noise level)}
    \label{fig-a}
  \end{subfigure}
  \begin{subfigure}[b!]{.30\columnwidth}
    \centering
    \includegraphics[width=\linewidth]{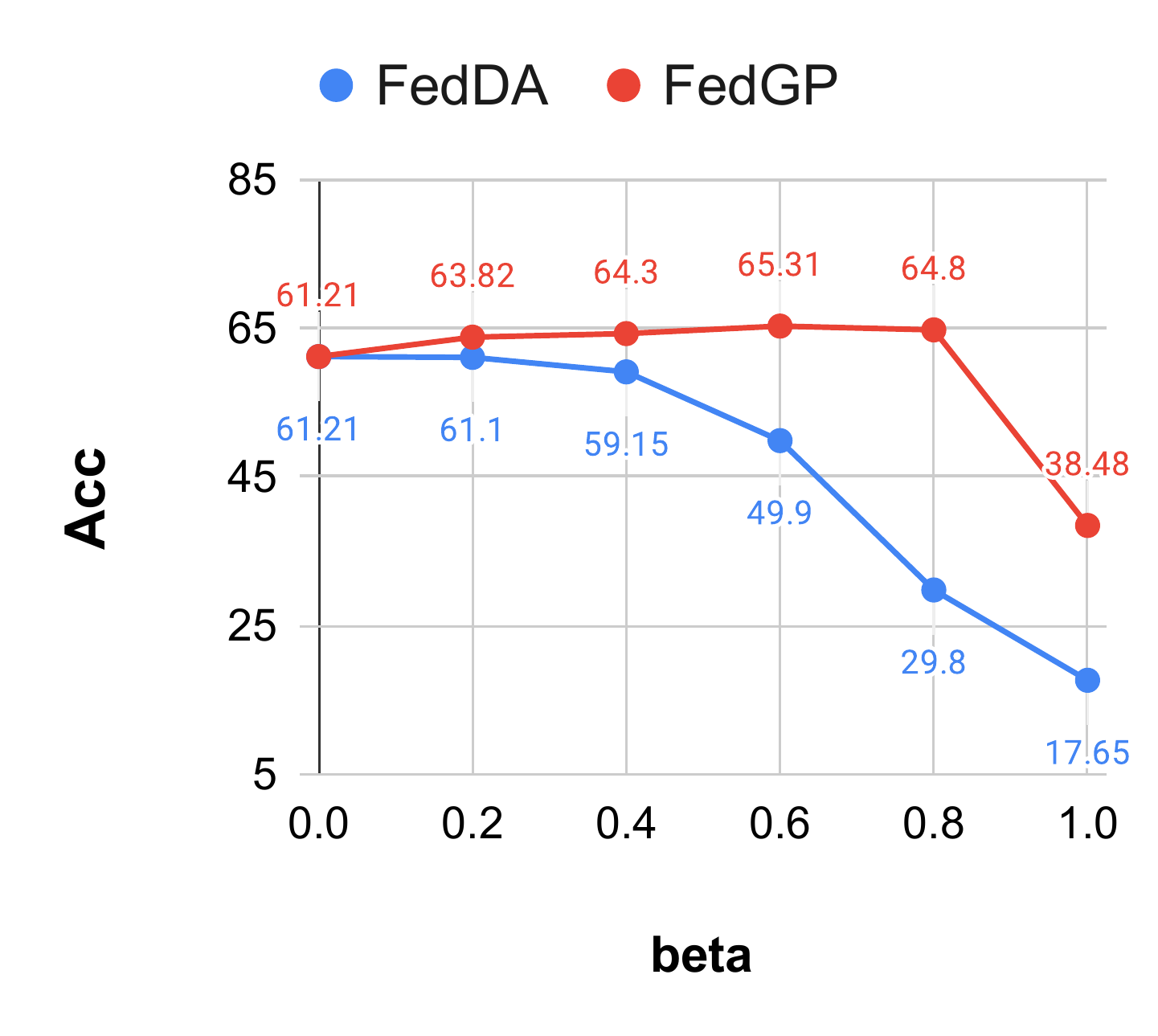}
    \caption{CIFAR-10 \\(0.4 noise level)}
    \label{fig-b}
\end{subfigure}
  \begin{subfigure}[b!]{.30\columnwidth}
    \centering
    \includegraphics[width=\linewidth]{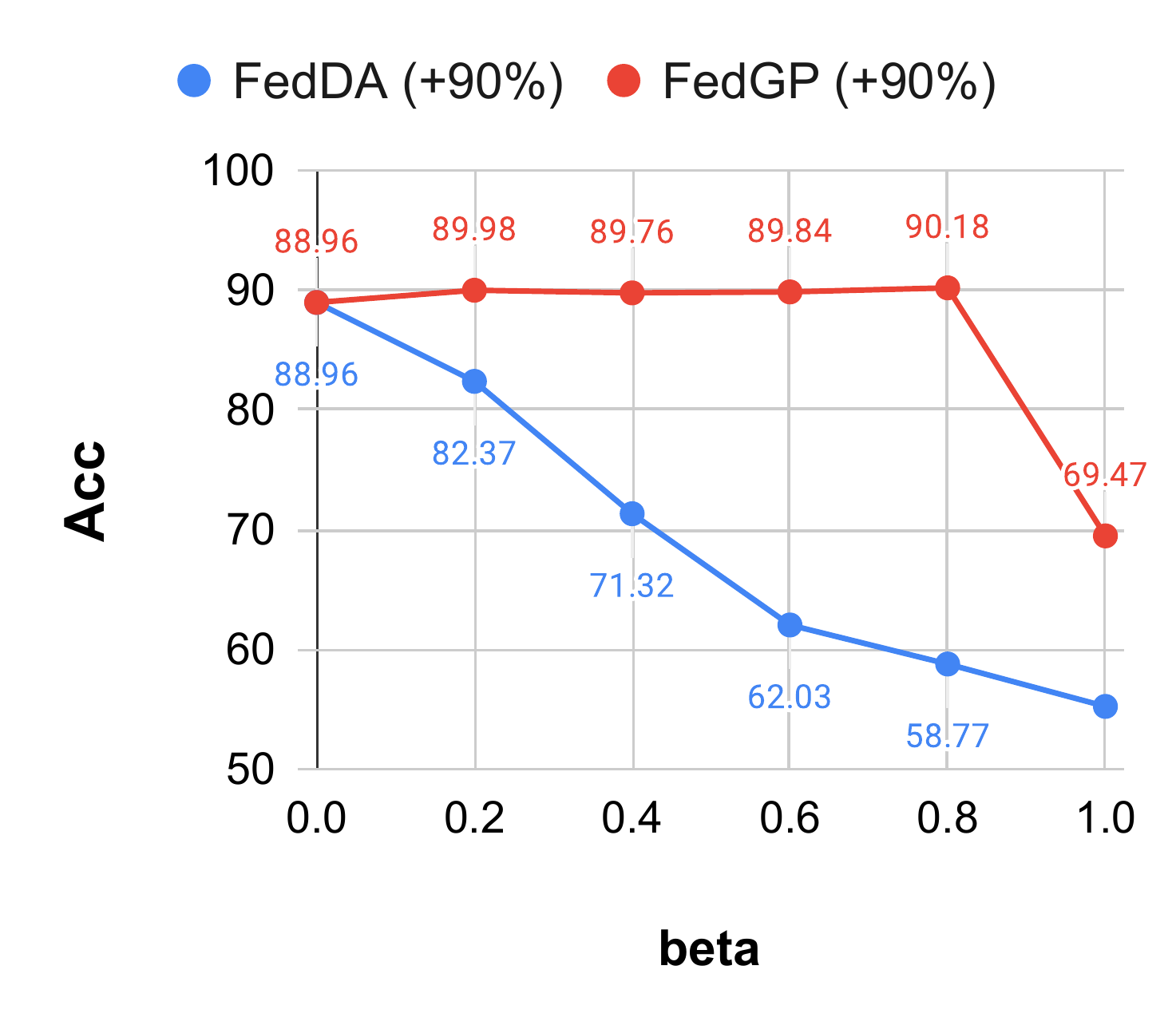}
    \caption{ColoredMNIST (target: +90\%)}
    \label{fig-c}
  \end{subfigure}
  \caption{The effect of $\beta$ on \fedda{} and \fedgp{}.}%
  \label{fig:ablation-beta}
\end{minipage}
\hfill
\begin{minipage}[t]{0.4\textwidth}
\renewcommand{\arraystretch}{0.8}
\tiny
\centering
\begin{tabular}{lccc}
\hline 
& \multicolumn{3}{c}{\bf Fashion-MNIST / CIFAR10} \\
Noise level & 0.4   & 0.6   & 0.8\\
\hline
N/A & 58.60/54.67 & 50.13/49.77 & 45.51/47.08\\
w proj  & 69.51/64.04 & 65.46/62.06 & 60.70/60.91\\ 
\fedgp{} & \bf71.09/\textbf{65.28} & \bf68.01/\textbf{63.29} & \bf62.22/\textbf{61.59}\\
\hline
\end{tabular}
\caption{Ablation study on projection and filtering.}%
\label{table:filter}
\end{minipage}
\end{figure*}

\textbf{Discussion.}
We analyze the computational and communication cost for our proposed methods in Appendix~\ref{gp-cost} and ~\ref{imp-details-auto-beta}, where we show how the proposed aggregation rules, especially the auto-weighted operation, can be implemented efficiently. Moreover, we observe in our experiments that \texttt{Finetune\_Offline} is sensitive to its pre-trained model (obtained via FedAvg), highlighting the necessity of a deeper study of the relation between personalization and adaptation. Lastly, although different from UFDA and semi-supervised domain adaptation (SSDA) settings, which use unlabeled samples~\citep{mme, kim2020attract}, we conduct experiments comparing them in Appendix~\ref{compare-ufda-sota} on DomainNet and ~\ref{sota-semida} for SSDA. Our auto-weighted methods have better or comparable performance across domains, especially for large shift cases.

\section{Conclusion} 
We provide a theoretical framework that first formally defines the metrics to connect FDA settings with aggregation rules. We propose \fedgp{}, a filtering-based aggregation rule via gradient projection, and develop the auto-weighted scheme that dynamically finds the best weights - both significantly improve the target performances and outperform various baselines. In the future, we plan to extend the current framework to perform FDA simultaneously on several source/target clients, explore the relationship between personalization and adaptation, as well as devise stronger aggregation rules. 

\section*{Acknowledgments}
This work is partially supported by NSF III 2046795, IIS 1909577, CCF 1934986, NIH 1R01MH116226-01A, NIFA award 2020-67021-32799, the Alfred P. Sloan Foundation, and Google Inc.

\bibliography{main}
\bibliographystyle{iclr2024_conference}

\clearpage
\appendix

\begin{center}
    \Large{Appendix Contents}
    \vspace*{1\baselineskip}
\end{center}
\begin{itemize}[leftmargin=5em,rightmargin=5em]
    \item Section~\ref{sec:supp-theory}: Supplementary Theoretical Results
    \begin{itemize}
        \item \ref{subsec:proof-thm-gen}: Proof of Theorem~\ref{thm-gen}
        \item \ref{subsec:proof-thm-decompose}: Proof of Theorem~\ref{thm-decompose}
        \item \ref{subsec:thm-error-FedDA}: Proof of Theorem~\ref{thm:error-FedDA}
        \item \ref{subsec:thm:error-FedGP}: Proof of Theorem~\ref{thm:error-FedGP}
        \item \ref{sec:supp-auto-weight}: Additional Discussion of the Auto-weighting Method
    \end{itemize}
    \item Section~\ref{sec:supp-synthetic-exp}: Synthetic Data Experiments 
    \item Section~\ref{sec:supp-exp-details}: Supplementary Experiment Information
    \begin{itemize}
        \item \ref{sec:alrogithm-fda}: Algorithm Outlines for Federated Domain Adaptation
        \item \ref{sec:real-world-imp-res}: Real-World Experiment Implementation Details and Results
        \item \ref{compare-ufda-sota}: Additional Results on DomainBed Datasets
        \item \ref{imp-details-auto-beta}: Auto-Weighted Methods: Implementation Details, Time and Space Complexity
        \item \ref{sec:auto-weight-beta-visual}: Visualization of Auto-Weighted Betas Values on Real-World Distribution Shifts
        \item \ref{sec:static-beta-res}: Additional Experiment Results on Varying Static Weights ($\beta$)
        \item \ref{sec:semi-syn-exp}: Semi-Synthetic Experiment Settings, Implementation, and Results
        \item \ref{sec:more-ablation-res}: Additional Ablation Study Results
        \item \ref{gp-details}: Implementation Details of \fedgp{}
        \item \ref{gp-cost}: Gradient Projection Method's Time and Space Complexity
        \item \ref{flamby-exp}: Additional Experiment Results on Fed-Heart
        \item \ref{sota-semida}: Comparison with the Semi-Supervised Domain Adaptation (SSDA) Method
    \end{itemize}
\end{itemize}

\section{Supplementary Theoretical Results}\label{sec:supp-theory}

In this section, we provide theoretical results that are omitted in the main paper due to space limitations. Specifically, in the first four subsections, we provide proofs of our theorems. Further, in subsection~\ref{sec:supp-auto-weight} we present an omitted discussion for our auto-weighting method, including how the estimators are constructed.

\subsection{Proof of Theorem~\ref{thm-gen}}\label{subsec:proof-thm-gen}

 We first prove the following theorem where we study what happens when we do one step of optimization. While requiring minimal assumptions, this theorem shares the same intuition as Theorem~\ref{thm-gen} regarding the importance of the Delta error.
\begin{theorem}\label{thm-converge-gen}
    Consider model parameter $\theta\sim \pi$ and an aggregation rule \texttt{Aggr}$(\cdot)$ with step size  $\mu>0$. Define the updated parameter as
    \begin{align}
    \theta^+ := \theta - \mu \widehat g_{\texttt{Aggr}}(\theta).
    \end{align}
    Assuming the gradient $\nabla\ell(\theta, z)$ is $\gamma$-Lipschitz in $\theta$ for any $z$, and let the step size $\mu \leq \tfrac{1}{\gamma},$ we have 
    \begin{align}
        \E_{\widehat\gD_T, \theta}[\ell_{\gD_T}(\theta^+)-\ell_{\gD_T}(\theta)]\leq -\tfrac{\mu}{2}(\|g_{\gD_T}\|_\pi^2-\Delta^2_{\texttt{Aggr}}).
    \end{align}
    
\end{theorem}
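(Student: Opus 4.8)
\textbf{Proof proposal for Theorem~\ref{thm-converge-gen}.}

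The plan is to use the standard descent lemma for functions with Lipschitz gradient, then take expectations and recognize the resulting cross term as being controlled by the Delta error via a simple algebraic identity. First I would invoke $\gamma$-smoothness of $\ell_{\gD_T}$: since $\nabla\ell(\theta,z)$ is $\gamma$-Lipschitz in $\theta$ for every $z$, the population loss $\ell_{\gD_T}(\theta)=\E_{z\sim\gD_T}\ell(\theta,z)$ also has $\gamma$-Lipschitz gradient $g_{\gD_T}=\nabla\ell_{\gD_T}$, so the descent inequality gives
\begin{align}
\ell_{\gD_T}(\theta^+)-\ell_{\gD_T}(\theta)\leq \langle g_{\gD_T}(\theta),\theta^+-\theta\rangle+\tfrac{\gamma}{2}\|\theta^+-\theta\|^2 = -\mu\langle g_{\gD_T}(\theta),\widehat g_{\texttt{Aggr}}(\theta)\rangle+\tfrac{\gamma\mu^2}{2}\|\widehat g_{\texttt{Aggr}}(\theta)\|^2.
\end{align}
Using $\mu\leq 1/\gamma$ so that $\tfrac{\gamma\mu^2}{2}\leq\tfrac{\mu}{2}$, I bound the right-hand side by $-\mu\langle g_{\gD_T}(\theta),\widehat g_{\texttt{Aggr}}(\theta)\rangle+\tfrac{\mu}{2}\|\widehat g_{\texttt{Aggr}}(\theta)\|^2$.

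Next I would take the expectation $\E_{\widehat\gD_T,\theta}$ of both sides. The key observation is the pointwise-in-$\theta$ polarization identity: for any vectors $a=g_{\gD_T}(\theta)$ and $b=\widehat g_{\texttt{Aggr}}(\theta)$,
\begin{align}
-\langle a,b\rangle+\tfrac12\|b\|^2 = \tfrac12\|a-b\|^2-\tfrac12\|a\|^2.
\end{align}
Applying this with the factor $\mu$ and taking $\E_{\widehat\gD_T,\theta}$, the right-hand side becomes $\tfrac{\mu}{2}\E_{\widehat\gD_T,\theta}\|g_{\gD_T}(\theta)-\widehat g_{\texttt{Aggr}}(\theta)\|^2-\tfrac{\mu}{2}\E_{\theta}\|g_{\gD_T}(\theta)\|^2$. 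By Fubini (swapping the $\theta\sim\pi$ and $\widehat\gD_T$ expectations) and the definitions of the $L^\pi$-norm and the Delta error (Definition~\ref{def:delta-error}), the first term is exactly $\tfrac{\mu}{2}\Delta^2_{\texttt{Aggr}}$ and the second is $\tfrac{\mu}{2}\|g_{\gD_T}\|_\pi^2$. This yields $\E_{\widehat\gD_T,\theta}[\ell_{\gD_T}(\theta^+)-\ell_{\gD_T}(\theta)]\leq -\tfrac{\mu}{2}(\|g_{\gD_T}\|_\pi^2-\Delta^2_{\texttt{Aggr}})$, as claimed.

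I do not anticipate a serious obstacle here; this is essentially a one-step SGD-style descent argument dressed in the functional $L^\pi$ notation. The only points requiring a little care are (i) justifying that $\ell_{\gD_T}$ inherits $\gamma$-smoothness from the per-sample losses (Jensen/linearity of expectation on the Lipschitz bound for gradients), and (ii) the interchange of expectations over $\theta\sim\pi$ and over the random dataset $\widehat\gD_T$, which is valid under mild integrability (the $L^\pi$ norms are assumed finite). If one wants to be fully rigorous about the order of integration, I would note that the integrand is nonnegative for the $\|\cdot\|^2$ terms so Tonelli applies directly. Everything else is the algebraic identity above plus the given step-size constraint.
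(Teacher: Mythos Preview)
Your proposal is correct and follows essentially the same strategy as the paper: establish $\gamma$-smoothness of $\ell_{\gD_T}$, apply the descent lemma, simplify to the pointwise bound $-\tfrac{\mu}{2}(\|g_{\gD_T}(\theta)\|^2-\|g_{\gD_T}(\theta)-\widehat g_{\texttt{Aggr}}(\theta)\|^2)$, and then take $\E_{\widehat\gD_T,\theta}$. Your algebraic step via the polarization identity is in fact cleaner than the paper's route, which reaches the same pointwise inequality by expanding, applying Cauchy--Schwarz to the cross term, and then AM--GM; both arguments are equivalent, but yours avoids the two auxiliary inequalities.
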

\begin{proof}

Given any distribution data $\gD$, we first prove that $\nabla\ell_\gD$ is also $\gamma$-Lipschitz as below. For $\forall \theta_1, \theta_2\in \Theta$:
\begin{align}
    \|\nabla\ell_\gD(\theta_1)-\nabla\ell_{\gD}(\theta_2)\|&=\|\E_{z\sim \gD}[\nabla\ell(\theta_1, z)-\nabla\ell(\theta_2, z)]\|\\
    &\leq \E_{z\sim \gD} \|\nabla\ell(\theta_1, z)-\nabla\ell(\theta_2, z)\| \tag{Jensen's inequality}\\
    &\leq \E_{z\sim \gD} \gamma \|\theta_1-\theta_2\| \tag{$\nabla\ell(\cdot, z)$ is $\gamma$-Lipschitz}\\
    &=\gamma \|\theta_1-\theta_2\|.
\end{align}
Therefore, we know that $\ell_{\gD_T}$ is $\gamma$-smooth.
Conditioned on a $\theta$ and a $\widehat \gD_T$, and apply the definition of smoothness we have
\begin{align}
    \ell_{\gD_T}(&\theta^+)-\ell_{\gD_T}(\theta)\leq \langle \nabla\ell_{\gD_T}(\theta), \theta^+-\theta\rangle + \frac{\gamma}{2}\| \theta^+-\theta\|^2\\
    &= -\langle \nabla\ell_{\gD_T}(\theta), \mu \widehat g_{\texttt{Aggr}}(\theta)\rangle + \frac{\gamma}{2}\| \mu \widehat g_{\texttt{Aggr}}(\theta)\|^2\\
    &= -\mu \langle \nabla\ell_{\gD_T}(\theta), \widehat g_{\texttt{Aggr}}(\theta)-\nabla\ell_{\gD_T}(\theta)+ \nabla\ell_{\gD_T}(\theta)\rangle \\
    &\qquad +\frac{\gamma \mu^2}{2}\| \widehat g_{\texttt{Aggr}}(\theta)-\nabla\ell_{\gD_T}(\theta)+\nabla\ell_{\gD_T}(\theta)\|^2\\
    &= -\mu (\langle \nabla\ell_{\gD_T}(\theta), \widehat g_{\texttt{Aggr}}(\theta)-\nabla\ell_{\gD_T}(\theta)\rangle +\|\nabla \ell_{\gD_T}(\theta)\|^2)\\
    & \qquad +\frac{\gamma \mu^2}{2}(\|\widehat g_{\texttt{Aggr}}(\theta)-\nabla\ell_{\gD_T}(\theta)\|^2+\|\nabla \ell_{\gD_T}(\theta)\|^2+2\langle \widehat g_{\texttt{Aggr}}(\theta)-\nabla\ell_{\gD_T}(\theta),\nabla\ell_{\gD_T}(\theta)\rangle )\\
    &= (\mu - \gamma \mu^2)(\langle \nabla\ell_{\gD_T}(\theta), \nabla\ell_{\gD_T}(\theta)-\widehat g_{\texttt{Aggr}}(\theta)\rangle) \\
    &\qquad + (\frac{\gamma \mu^2}{2}-\mu)\|\nabla \ell_{\gD_T}(\theta)\|^2+\frac{\gamma \mu^2}{2}\|\widehat g_{\texttt{Aggr}}(\theta)-\nabla\ell_{\gD_T}(\theta)\|^2\\
    &\leq  (\mu-\gamma \mu^2)\cdot\| \nabla\ell_{\gD_T}(\theta)\|\cdot \| \widehat g_{\texttt{Aggr}}(\theta)-\nabla\ell_{\gD_T}(\theta)\| \tag{Cauchy–Schwarz inequality}\\
    &\qquad + (\frac{\gamma \mu^2}{2}-\mu)\|\nabla \ell_{\gD_T}(\theta)\|^2 +\frac{\gamma \mu^2}{2}\|\widehat g_{\texttt{Aggr}}(\theta)-\nabla\ell_{\gD_T}(\theta)\|^2. \\
    &\leq  \frac{\mu - \gamma \mu^2}{2}\left(\| \nabla\ell_{\gD_T}(\theta)\|^2+ \| \widehat g_{\texttt{Aggr}}(\theta)-\nabla\ell_{\gD_T}(\theta)\|^2\right) \tag{AM-GM inequality}\\
    &\qquad + (\frac{\gamma \mu^2}{2}-\mu)\|\nabla \ell_{\gD_T}(\theta)\|^2 +\frac{\gamma \mu^2}{2}\|\widehat g_{\texttt{Aggr}}(\theta)-\nabla\ell_{\gD_T}(\theta)\|^2. \\
    & = -\frac{\mu}{2} \left(\| \nabla\ell_{\gD_T}(\theta)\|^2- \| \widehat g_{\texttt{Aggr}}(\theta)-\nabla\ell_{\gD_T}(\theta)\|^2\right). \label{eq:thm-1-1}
\end{align}
Additionally, note that the above two inequalities stand because: the step size $\mu \leq \frac{1}{\gamma}$ and thus $\mu - \gamma \mu^2 =\mu^2 (\tfrac{1}{\mu}-\gamma)\geq 0$.
Taking the expectation $\E_{\widehat\gD_T, \theta}$ 
 on both sides gives
\begin{align}
    \E_{\widehat\gD_T, \theta}[\ell_{\gD_T}(\theta^+)-\ell_{\gD_T}(\theta)] \leq -\frac{\mu}{2} \left(\E_{\widehat\gD_T, \theta}[\| \nabla\ell_{\gD_T}(\theta)\|^2]- \E_{\widehat\gD_T, \theta}[\| \widehat g_{\texttt{Aggr}}(\theta)-\nabla\ell_{\gD_T}(\theta)\|^2]\right)
\end{align}
Note that we denote $g_{\gD_T}=\nabla\ell_{\gD_T}$. Thus, with the $L^\pi$ norm notation we have
\begin{align}
    \E_{\widehat\gD_T, \theta}[\ell_{\gD_T}(\theta^+)-\ell_{\gD_T}(\theta)] \leq -\frac{\mu}{2} \left(\| g_{\gD_T}\|_\pi^2 - \E_{\widehat\gD_T}\| \widehat g_{\texttt{Aggr}}-g_{\gD_T}\|_\pi^2\right).
\end{align}
Finally, by Definition~\ref{def:delta-error} we can see $\Delta^2_{\texttt{Aggr}}=\E_{\widehat\gD_T}\| \widehat g_{\texttt{Aggr}}-g_{\gD_T}\|_\pi^2$, which concludes the proof.
\end{proof}

\begin{theorem}[Theorem~\ref{thm-gen} Restated]
    For any probability measure $\pi$ over the parameter space, and an aggregation rule \texttt{Aggr}$(\cdot)$ with step size  $\mu>0$. Given target dataset $\widehat \gD_T$, update the parameter for $T$ steps as 
    \begin{align}
    \theta^{t+1} := \theta^{t} - \mu \widehat g_{\texttt{Aggr}}(\theta^t). \label{eq:thm-gen-000}
    \end{align}
    Assume the gradient $\nabla\ell(\theta, z)$ and $\widehat g_{\texttt{Aggr}}(\theta)$ is $\tfrac{\gamma}{2}$-Lipschitz in $\theta$ such that $\theta^t\to \widehat\theta_{\texttt{Aggr}}$.
    Then, given step size $\mu \leq \tfrac{1}{\gamma}$ and a small enough $\epsilon>0$,  with probability at least $1-\delta$ we have
    \begin{align}
        \|\nabla\ell_{\gD_T}(\theta^T)\|^2 \leq \frac{1}{\delta^2}\left(\sqrt{C_\epsilon \cdot \Delta^2_{\texttt{Aggr}} }+\gO(\epsilon) \right)^2+ \mathcal{O}\left(\frac{1}{T}\right)+\mathcal{O}(\epsilon),
    \end{align}
    where $C_\epsilon=\E_{\widehat \gD_T} \left[ {1}/{\pi(B_\epsilon(\widehat\theta_{\texttt{Aggr}}))}\right]^2 $ and $B_\epsilon(\widehat\theta_{\texttt{Aggr}})\subset \sR^m$ is the ball with radius $\epsilon$ centered at $\widehat\theta_{\texttt{Aggr}}$. The $C_\epsilon$ measures how well the probability measure $\pi$ covers where the optimization goes, i.e., $\widehat\theta_{\texttt{Aggr}}$. 
\end{theorem}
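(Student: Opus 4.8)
The plan is to bound $\|\nabla\ell_{\gD_T}(\theta^T)\|$ by first replacing $\theta^T$ with the limit point $\widehat\theta_{\texttt{Aggr}}$, and then converting the \emph{pointwise} gradient norm $\|\nabla\ell_{\gD_T}(\widehat\theta_{\texttt{Aggr}})\|$ into the $L^\pi$-averaged quantity $\Delta^2_{\texttt{Aggr}}$. As preliminaries I would extract two consequences of the Lipschitz hypotheses. Since $\nabla\ell(\cdot,z)$ is $\tfrac\gamma2$-Lipschitz, Jensen's inequality shows $g_{\gD_T}=\nabla\ell_{\gD_T}$ is $\tfrac\gamma2$-Lipschitz (exactly the computation opening the proof of Theorem~\ref{thm-converge-gen}), hence $g_{\gD_T}-\widehat g_{\texttt{Aggr}}$ is $\gamma$-Lipschitz; and since $\widehat g_{\texttt{Aggr}}$ is continuous, letting $t\to\infty$ in $\theta^{t+1}=\theta^t-\mu\widehat g_{\texttt{Aggr}}(\theta^t)$ gives the fixed-point identity $\widehat g_{\texttt{Aggr}}(\widehat\theta_{\texttt{Aggr}})=0$, so that $\nabla\ell_{\gD_T}(\widehat\theta_{\texttt{Aggr}})=g_{\gD_T}(\widehat\theta_{\texttt{Aggr}})-\widehat g_{\texttt{Aggr}}(\widehat\theta_{\texttt{Aggr}})$ --- i.e.\ exactly the integrand defining $\Delta^2_{\texttt{Aggr}}$, evaluated at the single point $\widehat\theta_{\texttt{Aggr}}$.

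The core step is a covering argument that transfers this pointwise value to the $\pi$-average. Write $r(\widehat\gD_T):=\|\nabla\ell_{\gD_T}(\widehat\theta_{\texttt{Aggr}})\|$. Because $\|g_{\gD_T}(\cdot)-\widehat g_{\texttt{Aggr}}(\cdot)\|$ is $\gamma$-Lipschitz, on $B_\epsilon(\widehat\theta_{\texttt{Aggr}})$ it is at least $\max\{r(\widehat\gD_T)-\gamma\epsilon,\,0\}$, so integrating against $\pi$ gives $\|g_{\gD_T}-\widehat g_{\texttt{Aggr}}\|_\pi^2\ge \pi(B_\epsilon(\widehat\theta_{\texttt{Aggr}}))\cdot(\max\{r(\widehat\gD_T)-\gamma\epsilon,\,0\})^2$, i.e.
\begin{align}
r(\widehat\gD_T)\;\le\; \frac{\|g_{\gD_T}-\widehat g_{\texttt{Aggr}}\|_\pi}{\sqrt{\pi(B_\epsilon(\widehat\theta_{\texttt{Aggr}}))}}+\gamma\epsilon,
\end{align}
which is vacuously true when $r(\widehat\gD_T)<\gamma\epsilon$, so ``$\epsilon$ small enough'' costs nothing here. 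Taking $\E_{\widehat\gD_T}$, applying Cauchy--Schwarz over the draw of $\widehat\gD_T$ (with $\E_{\widehat\gD_T}\|g_{\gD_T}-\widehat g_{\texttt{Aggr}}\|_\pi^2=\Delta^2_{\texttt{Aggr}}$ by Definition~\ref{def:delta-error}), and then using $1/\pi(B_\epsilon)\ge1$ so that $\sqrt{\E_{\widehat\gD_T}[1/\pi(B_\epsilon)]}\le \E_{\widehat\gD_T}[1/\pi(B_\epsilon)]=\sqrt{C_\epsilon}$, I obtain $\E_{\widehat\gD_T}\,r(\widehat\gD_T)\le \sqrt{C_\epsilon\,\Delta^2_{\texttt{Aggr}}}+\gO(\epsilon)$. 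Markov's inequality on the nonnegative variable $r(\widehat\gD_T)$ then yields, with probability at least $1-\delta$, $r(\widehat\gD_T)\le \tfrac1\delta\big(\sqrt{C_\epsilon\,\Delta^2_{\texttt{Aggr}}}+\gO(\epsilon)\big)$, and squaring produces the first term of the claimed bound.

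It remains to pass from $\widehat\theta_{\texttt{Aggr}}$ back to $\theta^T$: by $\tfrac\gamma2$-Lipschitzness of $\nabla\ell_{\gD_T}$, $\|\nabla\ell_{\gD_T}(\theta^T)\|\le r(\widehat\gD_T)+\tfrac\gamma2\|\theta^T-\widehat\theta_{\texttt{Aggr}}\|$; squaring and absorbing the cross term, the residual $\|\theta^T-\widehat\theta_{\texttt{Aggr}}\|$ is controlled by the standard descent-lemma convergence estimate for the iteration under $\mu\le1/\gamma$ and the $\tfrac\gamma2$-smoothness hypothesis, contributing the $\mathcal{O}(1/T)$ term. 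I expect this last piece --- making the rate of $\theta^t\to\widehat\theta_{\texttt{Aggr}}$ explicit as $\mathcal{O}(1/T)$ and cleanly sweeping the cross term and the various $\gamma\epsilon$ slacks into $\mathcal{O}(1/T)$ and $\mathcal{O}(\epsilon)$ --- to be the fiddliest bookkeeping, whereas the conceptual crux is the ball-covering inequality: it is exactly where the coverage factor $C_\epsilon=\E_{\widehat\gD_T}[1/\pi(B_\epsilon(\widehat\theta_{\texttt{Aggr}}))]^2$ is forced to appear, and it would collapse (with $\epsilon\to0$) whenever $\pi$ concentrates near $\widehat\theta_{\texttt{Aggr}}$, matching the paper's interpretation of the ideal choice of $\pi$.
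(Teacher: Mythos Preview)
Your core argument---the ball-covering inequality converting the pointwise error $\|g_{\gD_T}(\widehat\theta_{\texttt{Aggr}})-\widehat g_{\texttt{Aggr}}(\widehat\theta_{\texttt{Aggr}})\|$ into the $L^\pi$-average, followed by Cauchy--Schwarz over $\widehat\gD_T$ and Markov---is exactly the paper's mechanism. Your fixed-point observation $\widehat g_{\texttt{Aggr}}(\widehat\theta_{\texttt{Aggr}})=0$, so that this error equals $\|\nabla\ell_{\gD_T}(\widehat\theta_{\texttt{Aggr}})\|$ directly, is a clean shortcut the paper uses only implicitly. A minor variation: you integrate the \emph{squared} norm in the covering step and obtain $1/\sqrt{\pi(B_\epsilon)}$, which is sharper than the paper's $1/\pi(B_\epsilon)$; you then throw away that sharpness via $\sqrt{x}\le x$ for $x\ge 1$ to recover the stated $C_\epsilon$.

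The gap is your last step. There is no ``standard descent-lemma convergence estimate'' giving a rate on $\|\theta^T-\widehat\theta_{\texttt{Aggr}}\|$ under these hypotheses: $\widehat g_{\texttt{Aggr}}$ is not assumed to be the gradient of any function, so the iteration is not gradient descent on anything, and the assumption is purely qualitative convergence $\theta^t\to\widehat\theta_{\texttt{Aggr}}$. The paper instead extracts the $\mathcal O(1/T)$ term by applying the descent lemma to $\ell_{\gD_T}$ (not to the iteration): telescoping the one-step inequality of Theorem~\ref{thm-converge-gen} yields
\[
\frac1T\sum_{t=0}^{T-1}\|\nabla\ell_{\gD_T}(\theta^t)\|^2\;\le\;\frac{2\ell_{\gD_T}(\theta^0)}{\mu T}+\frac1T\sum_{t=0}^{T-1}\|\widehat g_{\texttt{Aggr}}(\theta^t)-\nabla\ell_{\gD_T}(\theta^t)\|^2.
\]
The right-hand error sum is split at the threshold $T_\epsilon$ coming from the qualitative convergence: the finitely many $t<T_\epsilon$ terms are absorbed into $\mathcal O(1/T)$, while the $t\ge T_\epsilon$ terms reduce by Lipschitzness to $\widehat f(\widehat\theta_{\texttt{Aggr}})^2+\mathcal O(\epsilon)$. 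The same $T_\epsilon$-splitting on the left lets the average be replaced by $\|\nabla\ell_{\gD_T}(\theta^T)\|^2$ up to $\mathcal O(1/T)+\mathcal O(\epsilon)$. So the $\mathcal O(1/T)$ originates from the initial loss $\ell_{\gD_T}(\theta^0)$ and the finite pre-$T_\epsilon$ prefix, not from any quantitative rate on the iterates---which is precisely what you cannot produce from the stated assumptions.
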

\begin{proof}
    We prove this theorem by starting from a seemingly mysterious place. However, its meaning is assured to be clear as we proceed.
    
    Denote random function $\widehat f:\sR^m\to \sR_+$ as 
    \begin{align}
        \widehat f(\theta)=\|\widehat g_{\texttt{Aggr}}(\theta)-\nabla\ell_{\gD_T}(\theta)\|, \label{eq:thm-gen-99}
    \end{align}
    where the randomness comes from $\widehat\gD_T$. Note that $\widehat f$ is $\gamma$-Lipschitz by assumption. Now we consider $B_\epsilon(\widehat\theta_{\texttt{Aggr}})\subset \sR^m$, i.e., the ball with radius $\epsilon$ centered at $\widehat\theta_{\texttt{Aggr}}$. Then, by $\gamma$-Lipschitzness we have
    \begin{align}
        \E_{\theta\sim \pi} \widehat f(\theta)&=\int \widehat f(\theta) \diff \pi(\theta)\\
        &\geq \int_{B_\epsilon(\widehat\theta_{\texttt{Aggr}})} (\widehat f(\widehat\theta_{\texttt{Aggr}})-\gamma \epsilon)  \diff \pi(\theta)\\
        &=(\widehat f(\widehat\theta_{\texttt{Aggr}})-\gamma \epsilon) \pi(B_\epsilon(\widehat\theta_{\texttt{Aggr}})).
    \end{align}
    Therefore,
    \begin{align}
        \widehat f(\widehat\theta_{\texttt{Aggr}})\leq \frac{1}{\pi(B_\epsilon(\widehat\theta_{\texttt{Aggr}}))}\cdot \E_{\theta\sim \pi} \widehat f(\theta)+\gO(\epsilon).
    \end{align}
    Taking expectation w.r.t. $\widehat \gD_T$ on both sides, we have
    \begin{align}
        \E_{\widehat \gD_T}\widehat f(\widehat\theta_{\texttt{Aggr}})&\leq \E_{\widehat \gD_T} \left[ \frac{1}{\pi(B_\epsilon(\widehat\theta_{\texttt{Aggr}}))}\cdot \E_{\theta\sim \pi} \widehat f(\theta)\right] +\gO(\epsilon)\\
        &\leq \sqrt{\E_{\widehat \gD_T} \left[ \frac{1}{\pi(B_\epsilon(\widehat\theta_{\texttt{Aggr}}))}\right]^2 \cdot \E_{\widehat \gD_T} \left[\E_{\theta\sim \pi} \widehat f(\theta)\right]^2 }+\gO(\epsilon) \tag{Cauchy-Schwarz}\\
        &=\sqrt{C_\epsilon \cdot \E_{\widehat \gD_T} \left[\E_{\theta\sim \pi} \widehat f(\theta)\right]^2 }+\gO(\epsilon) \tag{by definition of $C_\epsilon$}\\
        &\leq \sqrt{C_\epsilon \cdot \E_{\widehat \gD_T}\E_{\theta\sim \pi}  \left[\widehat f(\theta)\right]^2 }+\gO(\epsilon)\tag{Jensen's inequality}\\
        &=\sqrt{C_\epsilon \cdot \Delta^2_{\texttt{Aggr}} }+\gO(\epsilon) 
    \end{align}
    Therefore, by Markov's inequality, with probability at least $1-\delta$  we have a sampled dataset $\widehat \gD_T$ such that
    \begin{align}
        \widehat f(\widehat\theta_{\texttt{Aggr}}) \leq \frac{1}{\delta}\E_{\widehat \gD_T}\widehat f(\widehat\theta_{\texttt{Aggr}}) \leq \frac{1}{\delta}\sqrt{C_\epsilon \cdot \Delta^2_{\texttt{Aggr}} }+\gO(\epsilon/\delta) \label{eq:thm-gen-100}
    \end{align}
    Conditioned on such event, we proceed on to the optimization part.

    Note that Theorem~\ref{thm-converge-gen} characterizes how the optimization works for one gradient update. Therefore, for any time step $t=0,\dots,T-1$, we can apply \eqref{eq:thm-1-1} which only requires the Lipschitz assumption:
    \begin{align}
        \ell_{\gD_T}(&\theta^{t+1})-\ell_{\gD_T}(\theta^t)\leq -\frac{\mu}{2} \left(\| \nabla\ell_{\gD_T}(\theta^t)\|^2- \| \widehat g_{\texttt{Aggr}}(\theta^t)-\nabla\ell_{\gD_T}(\theta^t)\|^2\right).
    \end{align}
    On both sides, summing over $t=0,\dots,T-1$ gives
    \begin{align}
        \ell_{\gD_T}(&\theta^{T})-\ell_{\gD_T}(\theta^0)\leq -\frac{\mu}{2} \left(\sum_{t=0}^{T-1}\| \nabla\ell_{\gD_T}(\theta^t)\|^2- \sum_{t=0}^{T-1}\| \widehat g_{\texttt{Aggr}}(\theta^t)-\nabla\ell_{\gD_T}(\theta^t)\|^2\right).
    \end{align}
    Dividing both sides by $T$, and with regular algebraic manipulation we derive
    \begin{align}
        \frac{1}{T}\sum_{t=0}^{T-1}\| \nabla\ell_{\gD_T}(\theta^t)\|^2\leq \frac{2}{\mu T}(\ell_{\gD_T}(&\theta^{0})-\ell_{\gD_T}(\theta^T))+\frac{1}{T}\sum_{t=0}^{T-1}\| \widehat g_{\texttt{Aggr}}(\theta^t)-\nabla\ell_{\gD_T}(\theta^t)\|^2.
    \end{align}
    Note that we assume the loss function $\ell:\Theta\times \gZ\to \sR_+$ is non-negative (described at the beginning of Section~\ref{problem-setup}). Thus, we have
\begin{align}
        \frac{1}{T}\sum_{t=0}^{T-1}\| \nabla\ell_{\gD_T}(\theta^t)\|^2&\leq \frac{2\ell_{\gD_T}(\theta^{0})}{\mu T}+\frac{1}{T}\sum_{t=0}^{T-1}\| \widehat g_{\texttt{Aggr}}(\theta^t)-\nabla\ell_{\gD_T}(\theta^t)\|^2.\label{eq:thm-gen-1}
    \end{align}

    Note that we assume given $\widehat\gD_T$ we have $\theta^t\to \widehat\theta_{\texttt{Aggr}}$. Therefore, for any $\epsilon>0$ there exist $T_\epsilon$ such that
    \begin{align}
        \forall t>T_\epsilon: \|\theta^t-\widehat\theta_{\texttt{Aggr}}\|<\epsilon. \label{eq:thm-gen-13}
    \end{align}
    This implies that $\forall t>T_\epsilon$:
    \begin{align}
        \mu\| \widehat g_{\texttt{Aggr}}(\theta^t)\|=\|\theta^{t+1}-\widehat\theta_{\texttt{Aggr}}+\widehat\theta_{\texttt{Aggr}}-\theta^t\|\leq \|\theta^{t+1}-\widehat\theta_{\texttt{Aggr}}\|+\|\widehat\theta_{\texttt{Aggr}}-\theta^t\|< 2\epsilon. \label{eq:thm-gen-12}
    \end{align}
    Moreover, \eqref{eq:thm-gen-13} also implies $\forall t_1, t_2>T_\epsilon$:
    \begin{align}
        \| \nabla\ell_{\gD_T}(\theta^{t_1})-\nabla\ell_{\gD_T}(\theta^{t_2})\|&\leq \gamma \|\theta^{t_1}-\theta^{t_2}\|\tag{$\gamma$-Lipschitzness}\\
        &<2\epsilon. \label{eq:thm-gen-14}
    \end{align}
    The above inequality means that $\{\nabla\ell_{\gD_T}(\theta^{t})\}_{t}$ is a Cauchy sequence. 

    Now, let's get back to \eqref{eq:thm-gen-1}. For $\forall T>T_\epsilon$ we have
    \begin{align}
        \frac{1}{T}\sum_{t=0}^{T-1}\| \nabla\ell_{\gD_T}(\theta^t)\|^2&\leq \frac{2\ell_{\gD_T}(\theta^{0})}{\mu T}+\frac{1}{T}\sum_{t=0}^{T_\epsilon-1}\| \widehat g_{\texttt{Aggr}}(\theta^t)-\nabla\ell_{\gD_T}(\theta^t)\|^2+\frac{1}{T} \sum_{t=T_\epsilon}^{T-1}\| \widehat g_{\texttt{Aggr}}(\theta^t)-\nabla\ell_{\gD_T}(\theta^t)\|^2\\
        &=\mathcal{O}\left(\frac{1}{T}\right)+\frac{1}{T} \sum_{t=T_\epsilon}^{T-1}\| \widehat g_{\texttt{Aggr}}(\theta^t)-\nabla\ell_{\gD_T}(\theta^t)\|^2\\
        &=\mathcal{O}\left(\frac{1}{T}\right)+\frac{1}{T} \sum_{t=T_\epsilon}^{T-1}\| \widehat g_{\texttt{Aggr}}(\theta^t)- \widehat g_{\texttt{Aggr}}(\widehat\theta_{\texttt{Aggr}})+ \widehat g_{\texttt{Aggr}}(\widehat\theta_{\texttt{Aggr}})-\nabla\ell_{\gD_T}(\theta^t)\|^2\\
        &\leq \mathcal{O}\left(\frac{1}{T}\right)+\frac{1}{T} \sum_{t=T_\epsilon}^{T-1} \left( \| \widehat g_{\texttt{Aggr}}(\theta^t)- \widehat g_{\texttt{Aggr}}(\widehat\theta_{\texttt{Aggr}})\|+ \|\widehat g_{\texttt{Aggr}}(\widehat\theta_{\texttt{Aggr}})-\nabla\ell_{\gD_T}(\theta^t)\|\right)^2 \tag{triangle inequality}\\
        &=\mathcal{O}\left(\frac{1}{T}\right)+\frac{1}{T} \sum_{t=T_\epsilon}^{T-1} \left( \mathcal{O}(\epsilon)+\|\widehat g_{\texttt{Aggr}}(\widehat\theta_{\texttt{Aggr}})-\nabla\ell_{\gD_T}(\theta^t)\|\right)^2 \tag{by \eqref{eq:thm-gen-12}}\\
        &=\mathcal{O}\left(\frac{1}{T}\right)+\mathcal{O}(\epsilon)+\frac{1}{T} \sum_{t=T_\epsilon}^{T-1} \left(\|\widehat g_{\texttt{Aggr}}(\widehat\theta_{\texttt{Aggr}})-\nabla\ell_{\gD_T}(\widehat\theta_{\texttt{Aggr}})+\nabla\ell_{\gD_T}(\widehat\theta_{\texttt{Aggr}})-\nabla\ell_{\gD_T}(\theta^t)\|\right)^2 \\
        &\leq \mathcal{O}\left(\frac{1}{T}\right)+\mathcal{O}(\epsilon)+\frac{1}{T} \sum_{t=T_\epsilon}^{T-1} \left(\|\widehat g_{\texttt{Aggr}}(\widehat\theta_{\texttt{Aggr}})-\nabla\ell_{\gD_T}(\widehat\theta_{\texttt{Aggr}})\|+\gO(\epsilon)\right)^2 \tag{by \eqref{eq:thm-gen-14}}\\
        &= \mathcal{O}\left(\frac{1}{T}\right)+\mathcal{O}(\epsilon)+\|\widehat g_{\texttt{Aggr}}(\widehat\theta_{\texttt{Aggr}})-\nabla\ell_{\gD_T}(\widehat\theta_{\texttt{Aggr}})\|^2 \label{eq:thm-gen-101}
    \end{align}
    Then, we can continue with what we have done at the beginning of the proof of this theorem:
    \begin{align}
    \eqref{eq:thm-gen-101}&=\mathcal{O}\left(\frac{1}{T}\right)+\mathcal{O}(\epsilon)+f(\widehat\theta_{\texttt{Aggr}})^2 \tag{by \eqref{eq:thm-gen-99}}\\
    &\leq \mathcal{O}\left(\frac{1}{T}\right)+\mathcal{O}(\epsilon)+\left(\frac{1}{\delta}\sqrt{C_\epsilon \cdot \Delta^2_{\texttt{Aggr}} }+\gO(\epsilon/\delta) \right)^2  \tag{by \eqref{eq:thm-gen-100}}
    \end{align}
    Therefore, combining the above we finally have: for $\forall T>T_\epsilon$ with probability at least $1-\delta$,
    \begin{align}
        \frac{1}{T}\sum_{t=0}^{T-1}\| \nabla\ell_{\gD_T}(\theta^t)\|^2 \leq \mathcal{O}\left(\frac{1}{T}\right)+\mathcal{O}(\epsilon)+\frac{1}{\delta^2}\left(\sqrt{C_\epsilon \cdot \Delta^2_{\texttt{Aggr}} }+\gO(\epsilon) \right)^2 \label{eq:thm-gen-103}
    \end{align}

    To complete the proof, let us investigate the left hand side. 
    \begin{align}
        \frac{1}{T}\sum_{t=0}^{T-1}\| \nabla\ell_{\gD_T}(\theta^t)\|^2&=\frac{1}{T} \sum_{t=0}^{T_\epsilon -1}\| \nabla\ell_{\gD_T}(\theta^t)\|^2+\frac{1}{T} \sum_{t=T_\epsilon}^{T-1}\| \nabla\ell_{\gD_T}(\theta^t)\|^2\\
        &=\gO\left(\frac{1}{T}\right)+\frac{1}{T} \sum_{t=T_\epsilon}^{T-1}\| \nabla\ell_{\gD_T}(\theta^t)\|^2\\
        &\geq \gO\left(\frac{1}{T}\right)+\frac{1}{T} \sum_{t=T_\epsilon}^{T-1}\left(\| \nabla\ell_{\gD_T}(\theta^t)-\nabla\ell_{\gD_T}(\theta^T)\|-\|\nabla\ell_{\gD_T}(\theta^T)\|\right)^2\tag{triangle inequality}\\
        &=\gO\left(\frac{1}{T}\right)+\frac{1}{T} \sum_{t=T_\epsilon}^{T-1}\left(\gO(\epsilon)+\|\nabla\ell_{\gD_T}(\theta^T)\|\right)^2 \tag{by \eqref{eq:thm-gen-14}}\\
        &=\gO\left(\frac{1}{T}\right)+\gO(\epsilon)+\|\nabla\ell_{\gD_T}(\theta^T)\|^2. \label{eq:thm-gen-104}
    \end{align}
    Combining \eqref{eq:thm-gen-103} and \eqref{eq:thm-gen-104}, we finally have
    \begin{align}
        \|\nabla\ell_{\gD_T}(\theta^T)\|^2 \leq \mathcal{O}\left(\frac{1}{T}\right)+\mathcal{O}(\epsilon)+\frac{1}{\delta^2}\left(\sqrt{C_\epsilon \cdot \Delta^2_{\texttt{Aggr}} }+\gO(\epsilon) \right)^2,
    \end{align}
    which completes the proof.

\end{proof}

\subsection{Proof of Theorem \ref{thm-decompose}}\label{subsec:proof-thm-decompose}
\begin{theorem}[Theorem~\ref{thm-decompose} Restated]
    Consider any aggregation rule $\texttt{Aggr}(\cdot)$ in the form of $\widehat g_{\texttt{Aggr}}=\frac{1}{N}\sum_{i\in [N]}F_{\texttt{Aggr}}[g_{\widehat \gD_T}, g_{\gD_{S_i}}]$, i.e., the aggregation rule is defined by a mapping $F_{\texttt{Aggr}}:L^\pi\times L^\pi\to L^\pi$. If $F_{\texttt{Aggr}}$ is affine w.r.t. to its first argument (i.e., the target gradient function), and $\forall g\in L^\pi: F_{\texttt{Aggr}}[g, g]=g$, and the linear mapping associated with $F_{\texttt{Aggr}}$ has its eigenvalue bounded in $[\lambda_{min}, \lambda_{max}]$, then for any source and target distributions $\{\gD_{S_i}\}_{i\in [N]}, \gD_T, \widehat\gD_T$ we have $\Delta^2_{Aggr} \leq \frac{1}{N} \sum_{i\in [N]} \Delta^2_{Aggr, \gD_{S_i}}$, where 
    \begin{align}
        \Delta^2_{Aggr, \gD_{S_i}} \leq\max\{\lambda_{max}^2, \lambda_{min}^2\} \cdot \frac{\sigma_{\pi}^2(z)}{n}+\max\{(1-\lambda_{max})^2, (1-\lambda_{min})^2\} \cdot d_\pi (\gD_{S_i}, \gD_T)^2 \label{eq: delta-aggr}
    \end{align}
\end{theorem}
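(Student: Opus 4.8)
The plan is to proceed in two stages: first reduce the bound on $\Delta^2_{Aggr}$ to a per-source-domain estimate, and then carry out a bias--variance split of the per-source error using the affine structure of $F_{\texttt{Aggr}}$ together with the diagonal-identity condition $F_{\texttt{Aggr}}[g,g]=g$.

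For the reduction I would use convexity. Writing $g_{\gD_T}=\frac1N\sum_{i\in[N]}g_{\gD_T}$,
\begin{align}
    g_{\gD_T}-\widehat g_{\texttt{Aggr}}=\frac1N\sum_{i\in[N]}\Big(g_{\gD_T}-F_{\texttt{Aggr}}[g_{\widehat\gD_T},g_{\gD_{S_i}}]\Big),
\end{align}
and since $\|\cdot\|_\pi^2$ is convex, Jensen's inequality followed by $\E_{\widehat\gD_T}$ gives $\Delta^2_{Aggr}\le\frac1N\sum_i\Delta^2_{Aggr,\gD_{S_i}}$, with $\Delta^2_{Aggr,\gD_{S_i}}:=\E_{\widehat\gD_T}\|g_{\gD_T}-F_{\texttt{Aggr}}[g_{\widehat\gD_T},g_{\gD_{S_i}}]\|_\pi^2$. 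So the task becomes bounding each $\Delta^2_{Aggr,\gD_{S_i}}$.

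Fix $i$ and let $A$ be the linear part of $g\mapsto F_{\texttt{Aggr}}[g,g_{\gD_{S_i}}]$, so $F_{\texttt{Aggr}}[g,g_{\gD_{S_i}}]=A(g)+c$ for a constant $c\in L^\pi$. Evaluating at $g=g_{\gD_{S_i}}$ and using $F_{\texttt{Aggr}}[g_{\gD_{S_i}},g_{\gD_{S_i}}]=g_{\gD_{S_i}}$ pins down $c=g_{\gD_{S_i}}-A(g_{\gD_{S_i}})$, hence $F_{\texttt{Aggr}}[g,g_{\gD_{S_i}}]=A(g-g_{\gD_{S_i}})+g_{\gD_{S_i}}$. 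Plugging in $g=g_{\widehat\gD_T}$ and splitting $g_{\widehat\gD_T}-g_{\gD_{S_i}}=(g_{\widehat\gD_T}-g_{\gD_T})+(g_{\gD_T}-g_{\gD_{S_i}})$ with $A$ linear,
\begin{align}
    g_{\gD_T}-F_{\texttt{Aggr}}[g_{\widehat\gD_T},g_{\gD_{S_i}}]=(I-A)(g_{\gD_T}-g_{\gD_{S_i}})-A(g_{\widehat\gD_T}-g_{\gD_T}).
\end{align}
Expanding $\|\cdot\|_\pi^2$ and taking $\E_{\widehat\gD_T}$, the cross term vanishes, because $(I-A)(g_{\gD_T}-g_{\gD_{S_i}})$ is deterministic while $\E_{\widehat\gD_T}[A(g_{\widehat\gD_T}-g_{\gD_T})]=A(\E_{\widehat\gD_T}[g_{\widehat\gD_T}]-g_{\gD_T})=0$ by the i.i.d.\ sampling assumption on $\widehat\gD_T$ (which gives $\E_{\widehat\gD_T}[g_{\widehat\gD_T}]=g_{\gD_T}$). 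So
\begin{align}
    \Delta^2_{Aggr,\gD_{S_i}}=\|(I-A)(g_{\gD_T}-g_{\gD_{S_i}})\|_\pi^2+\E_{\widehat\gD_T}\|A(g_{\widehat\gD_T}-g_{\gD_T})\|_\pi^2.
\end{align}
Reading the eigenvalue hypothesis in the self-adjoint sense, $A$ has spectrum in $[\lambda_{min},\lambda_{max}]$, so $\|Av\|_\pi^2=\langle A^2v,v\rangle_\pi\le\max\{\lambda_{max}^2,\lambda_{min}^2\}\|v\|_\pi^2$, and $I-A$ has spectrum in $[1-\lambda_{max},1-\lambda_{min}]$, so $\|(I-A)v\|_\pi^2\le\max\{(1-\lambda_{max})^2,(1-\lambda_{min})^2\}\|v\|_\pi^2$. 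Applying the second bound with $v=g_{\gD_T}-g_{\gD_{S_i}}$, whose $L^\pi$-norm is $d_\pi(\gD_{S_i},\gD_T)$ by Definition~\ref{def:pi-metric}, and the first with $v=g_{\widehat\gD_T}-g_{\gD_T}$, whose mean squared $L^\pi$-norm is $\sigma_\pi^2(\widehat\gD_T)=\sigma_\pi^2(z)/n$ by Definition~\ref{def:target-var}, gives exactly \eqref{eq: delta-aggr}.

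I expect the main obstacle to be the operator-level bookkeeping rather than any sharp inequality: making precise what ``the linear mapping associated with $F_{\texttt{Aggr}}$'' is, justifying that affineness plus $F_{\texttt{Aggr}}[g,g]=g$ forces the constant term to be $g-A(g)$, that $A$ may be pulled through $\E_{\widehat\gD_T}$ (boundedness of $A$, plus Fubini to interchange with $\E_{\theta\sim\pi}$ inside $\langle\cdot,\cdot\rangle_\pi$), and that the eigenvalue bound is used in a self-adjoint sense so that $\|A\|^2=\|A^2\|$ and its analogue for $I-A$ hold. In the concrete instances the operator $A$ acts pointwise, by $(1-\beta_i)I$ for \texttt{FedDA} and by $(1-\beta_i)I+\beta_i\,g_{\gD_{S_i}}g_{\gD_{S_i}}^\top/\|g_{\gD_{S_i}}\|^2$ for \texttt{FedGP}, and these are manifestly symmetric at each $\theta$, so the abstract steps specialize transparently; everything else is routine expansion and application of Definitions~\ref{def:pi-metric} and~\ref{def:target-var}.
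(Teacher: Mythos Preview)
Your proposal is correct and follows essentially the same route as the paper: Jensen's inequality for the per-source reduction, then the affine structure together with $F_{\texttt{Aggr}}[g,g]=g$ to write the per-source error as $(I-A)(g_{\gD_T}-g_{\gD_{S_i}})-A(g_{\widehat\gD_T}-g_{\gD_T})$, after which the cross term vanishes by unbiasedness and the eigenvalue bounds finish. The paper organizes the bias--variance split by introducing the pivot $\bar g=F_{\texttt{Aggr}}[g_{\gD_T},g_{\gD_{S_i}}]$ first and only afterwards invokes the linear part $G$, whereas you solve for the affine constant up front and compute the difference directly; these are cosmetic variations of the same argument.
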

\begin{proof}

    Let's begin from the definition of the Delta error.
    \begin{align}
        \Delta^2_{Aggr} &= \E_{\widehat \gD_T}\|g_{\gD_T}-\widehat g_{\texttt{Aggr}}\|^2_\pi \\
        &=\E_{\widehat \gD_T}\left\|g_{\gD_T}-\frac{1}{N}\sum_{i\in [N]}F_{\texttt{Aggr}}[g_{\widehat \gD_T}, g_{\gD_{S_i}}]\right\|^2_\pi\\
        &=\E_{\widehat \gD_T}\left\|\frac{1}{N}\sum_{i\in [N]} \left(g_{\gD_T}-F_{\texttt{Aggr}}[g_{\widehat \gD_T}, g_{\gD_{S_i}}]\right)\right\|^2_\pi\\
        &\leq \frac{1}{N}\sum_{i\in [N]} \E_{\widehat \gD_T}\left\| g_{\gD_T}-F_{\texttt{Aggr}}[g_{\widehat \gD_T}, g_{\gD_{S_i}}]\right\|^2_\pi\tag{Jensen's Inequality}\\
        &=\frac{1}{N}\sum_{i\in [N]}\Delta^2_{Aggr, \gD_{S_i}} , \label{eq:thm-dec-00}
    \end{align}
    where we denote 
    \begin{align}
         \Delta^2_{Aggr, \gD_{S_i}} :=  \E_{\widehat \gD_T}\left\| g_{\gD_T}-F_{\texttt{Aggr}}[g_{\widehat \gD_T}, g_{\gD_{S_i}}]\right\|^2_\pi,
    \end{align}
    We continue on upper bounding this term. Noting that $F_{\texttt{Aggr}}[\cdot, \cdot]$ is affine in its first argument, we can denote
    \begin{align}
        \bar g := \E_{\widehat \gD_T}F_{\texttt{Aggr}}[g_{\widehat \gD_T}, g_{\gD_{S_i}}] = F_{\texttt{Aggr}}[\E_{\widehat \gD_T} [g_{\widehat \gD_T}], g_{\gD_{S_i}}]=F_{\texttt{Aggr}}[g_{\gD_T}, g_{\gD_{S_i}}]. \label{eq:thm-dec-0}
    \end{align}
    Therefore,
    \begin{align}
        \Delta^2_{Aggr, \gD_{S_i}}&= \E_{\widehat \gD_T}\left\| g_{\gD_T}-\bar g + \bar g-F_{\texttt{Aggr}}[g_{\widehat \gD_T}, g_{\gD_{S_i}}]\right\|^2_\pi\\
        &=\left\| g_{\gD_T}-\bar g\right\|_\pi^2 + \E_{\widehat \gD_T}\left\|\bar g-F_{\texttt{Aggr}}[g_{\widehat \gD_T}, g_{\gD_{S_i}}]\right\|^2_\pi + 2\E_{\widehat \gD_T}\langle g_{\gD_T}-\bar g, \bar g-F_{\texttt{Aggr}}[g_{\widehat \gD_T}, g_{\gD_{S_i}}] \rangle_\pi\\
        &=\left\| g_{\gD_T}-\bar g\right\|_\pi^2 + \E_{\widehat \gD_T}\left\|\bar g-F_{\texttt{Aggr}}[g_{\widehat \gD_T}, g_{\gD_{S_i}}]\right\|^2_\pi + 2\langle g_{\gD_T}-\bar g, \bar g-\E_{\widehat \gD_T}F_{\texttt{Aggr}}[g_{\widehat \gD_T}, g_{\gD_{S_i}}] \rangle_\pi\\
        &=\left\| g_{\gD_T}-\bar g\right\|_\pi^2 + \E_{\widehat \gD_T}\left\|\bar g-F_{\texttt{Aggr}}[g_{\widehat \gD_T}, g_{\gD_{S_i}}]\right\|^2_\pi + 2\langle g_{\gD_T}-\bar g, \bar g-\bar g \rangle_\pi \tag{by \eqref{eq:thm-dec-0}}\\
        &=\left\| g_{\gD_T}-\bar g\right\|_\pi^2 + \E_{\widehat \gD_T}\left\|\bar g-F_{\texttt{Aggr}}[g_{\widehat \gD_T}, g_{\gD_{S_i}}]\right\|^2_\pi. \label{eq:thm-dec-1}
    \end{align}
    The above derivation is based on properties of inner product. Next, we deal with the two terms in \eqref{eq:thm-dec-1}. Denote $G:L^\pi \to L^\pi$  as the linear mapping associated with the affine mapping $F_{\texttt{Aggr}}[\cdot, g_{\gD_{S_i}}]$. By definition, $\forall g_1, g_2\in L^\pi$:
    \begin{align}
        F_{\texttt{Aggr}}[g_1, g_{\gD_{S_i}}]-F_{\texttt{Aggr}}[g_2, g_{\gD_{S_i}}]=G[g_1]-G[g_2]=G[g_1-g_2]. \label{eq:thm-dec-100}
    \end{align}

    For the first term in \eqref{eq:thm-dec-1}, we can do a similar trick as the following.
    \begin{align}
        \left\| g_{\gD_T}-\bar g\right\|_\pi^2&=\left\| g_{\gD_T}-g_{\gD_{S_i}}+g_{\gD_{S_i}}-\bar g\right\|_\pi^2 \\
        &=\left\| g_{\gD_T}-g_{\gD_{S_i}}+g_{\gD_{S_i}}-F_{\texttt{Aggr}}[g_{\gD_T}, g_{\gD_{S_i}}]\right\|_\pi^2\tag{by \eqref{eq:thm-dec-0}}\\
        &=\left\| g_{\gD_T}-g_{\gD_{S_i}}+F_{\texttt{Aggr}}[g_{\gD_{S_i}}, g_{\gD_{S_i}}]-F_{\texttt{Aggr}}[g_{\gD_T}, g_{\gD_{S_i}}]\right\|_\pi^2 \label{eq:thm-dec-2}\\
        &=\left\| g_{\gD_T}-g_{\gD_{S_i}}+G[g_{\gD_{S_i}}-g_{\gD_T}]\right\|_\pi^2 \tag{by \eqref{eq:thm-dec-100}}\\
        &=\left\| (I-G)[g_{\gD_{S_i}}-g_{\gD_T}]\right\|_\pi^2 \tag{$I$ stands for identity map}\\
        &\leq \max\{(1-\lambda_{max})^2, (1-\lambda_{min})^2\} \|g_{\gD_{S_i}}-g_{\gD_T}\|^2_\pi\\
        &=\max\{(1-\lambda_{max})^2, (1-\lambda_{min})^2\}d_\pi (\gD_{S_i}, \gD_T)^2 \label{eq:thm-dec-3} .
    \end{align}
    where \eqref{eq:thm-dec-2} is by the identity assumption of $F_{\texttt{Aggr}}$. This assumption is valid in then sense that: if all of the inputs to an federated aggregation rule are the same thing, the aggregation rule should output the same thing.

    This upper bounds the first term in \eqref{eq:thm-dec-1}, and lets move on to the second term in \eqref{eq:thm-dec-1} as the following.
    \begin{align}
        \E_{\widehat \gD_T}\left\|\bar g-F_{\texttt{Aggr}}[g_{\widehat \gD_T}, g_{\gD_{S_i}}]\right\|^2_\pi&=\E_{\widehat \gD_T}\left\|F_{\texttt{Aggr}}[g_{\gD_T}, g_{\gD_{S_i}}]-F_{\texttt{Aggr}}[g_{\widehat \gD_T}, g_{\gD_{S_i}}]\right\|^2_\pi \tag{by \eqref{eq:thm-dec-0}}\\
        &=\E_{\widehat \gD_T}\left\|G[g_{\gD_T}-g_{\widehat \gD_T}]\right\|^2_\pi \tag{by \eqref{eq:thm-dec-100}}\\
        &\leq \max\{\lambda_{max}^2, \lambda_{min}^2\} \E_{\widehat \gD_T}\left\|g_{\gD_T}-g_{\widehat \gD_T}\right\|^2_\pi \\
        &= \max\{\lambda_{max}^2, \lambda_{min}^2\} \sigma_{\pi}^2(\widehat\gD_T) \tag{by Definition~\ref{def:target-var}}.
    \end{align}
    Combining the above, we finally have
    \begin{align}
         \Delta^2_{Aggr, \gD_{S_i}}=\eqref{eq:thm-dec-1}\leq \max\{\lambda_{max}^2, \lambda_{min}^2\} \sigma_{\pi}^2(\widehat\gD_T)+\max\{(1-\lambda_{max})^2, (1-\lambda_{min})^2\} d_\pi (\gD_{S_i}, \gD_T)^2.
    \end{align}
    Putting \eqref{eq:thm-dec-00} and the above inequality together, and noting that $\sigma_{\pi}^2(\widehat\gD_T)=\frac{\sigma_{\pi}^2(z)}{n}$, we can see that the proof is complete. 
\end{proof}

\subsection{Proof of Theorem \ref{thm:error-FedDA}}\label{subsec:thm-error-FedDA}

Next, we prove the following theorem.
\begin{theorem}[Theorem~\ref{thm:error-FedDA} Restated]
 Consider \texttt{FedDA}. Given the target domain  $\widehat\gD_T$ and $N$ source domains $\gD_{S_1}, \dots, \gD_{S_N}$.
\begin{align}
    \Delta^2_{\texttt{FedDA}}\leq  \frac{1}{N} \sum_{i=1}^N \Delta^2_{\texttt{FedDA}, S_i},  \quad \text{where }\  \Delta^2_{\texttt{FedDA},S_i}=(1-\beta_i)^2 \sigma_\pi^2(\widehat\gD_T) + \beta_i^2 d^2_\pi(\gD_{S_i}, \gD_T) %
\end{align}
is the Delta error when only considering $\gD_{S_i}$ as the source domain. 
\end{theorem}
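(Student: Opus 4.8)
The plan is to recognize \texttt{FedDA} as a special case of the aggregation rules covered by the Decomposition Theorem (Theorem~\ref{thm-decompose}), and then to observe that, because the linear map associated with \texttt{FedDA} is a scalar multiple of the identity, the inequality in that theorem collapses to an exact equality at the per-source level --- which is precisely what the statement asserts.

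First I would write $\widehat g_{\texttt{FedDA}} = \frac{1}{N}\sum_{i\in[N]} F_i[g_{\widehat\gD_T}, g_{\gD_{S_i}}]$ with $F_i[g, g'] := (1-\beta_i)\,g + \beta_i\,g'$, the subscript $i$ recording that the weight $\beta_i$ varies per term. Each $F_i$ is affine --- indeed linear --- in its first argument, with associated linear map $(1-\beta_i)\,I$; it satisfies $F_i[g,g] = (1-\beta_i)g + \beta_i g = g$; and its linear map has the single eigenvalue $1-\beta_i$, so Theorem~\ref{thm-decompose} applies with $\lambda_{min} = \lambda_{max} = 1-\beta_i$. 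This immediately yields $\Delta^2_{\texttt{FedDA}} \leq \frac{1}{N}\sum_{i\in[N]} \Delta^2_{\texttt{FedDA},S_i}$ together with the per-source bound $\Delta^2_{\texttt{FedDA},S_i} \leq (1-\beta_i)^2\,\sigma_\pi^2(\widehat\gD_T) + \beta_i^2\, d^2_\pi(\gD_{S_i},\gD_T)$, i.e. the claimed statement with $\leq$ in place of $=$ on the per-source line.

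To upgrade the per-source bound to an equality I would replay the bias--variance split from the proof of Theorem~\ref{thm-decompose}: setting $\bar g := F_i[g_{\gD_T}, g_{\gD_{S_i}}] = (1-\beta_i)g_{\gD_T} + \beta_i g_{\gD_{S_i}}$, the unbiasedness $\E_{\widehat\gD_T}[g_{\widehat\gD_T}] = g_{\gD_T}$ together with the linearity of $F_i$ makes the cross term vanish, leaving
\begin{align}
\Delta^2_{\texttt{FedDA},S_i} = \left\| g_{\gD_T} - \bar g\right\|_\pi^2 + \E_{\widehat\gD_T}\left\| \bar g - F_i[g_{\widehat\gD_T}, g_{\gD_{S_i}}]\right\|_\pi^2 .
\end{align}
A one-line computation gives $g_{\gD_T} - \bar g = \beta_i\,(g_{\gD_T} - g_{\gD_{S_i}})$ and $\bar g - F_i[g_{\widehat\gD_T}, g_{\gD_{S_i}}] = (1-\beta_i)\,(g_{\gD_T} - g_{\widehat\gD_T})$, so the first term is exactly $\beta_i^2\, d^2_\pi(\gD_{S_i},\gD_T)$ and the second is exactly $(1-\beta_i)^2\,\sigma_\pi^2(\widehat\gD_T)$ by Definition~\ref{def:target-var}. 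There is no real obstacle here: \texttt{FedDA} is deliberately the simplest rule fitting the template, and the only things to watch are the index bookkeeping --- which quantities carry the $i$ and which are fixed --- and the correct use of the cross-term argument, which needs both linearity of $F_i$ and unbiasedness of $g_{\widehat\gD_T}$; the reduction to the average $\frac{1}{N}\sum_i \Delta^2_{\texttt{FedDA},S_i}$ is just Jensen's inequality over the source-domain average, exactly as in Theorem~\ref{thm-decompose}.
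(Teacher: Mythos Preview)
Your proposal is correct and follows essentially the same argument as the paper: Jensen's inequality over the source-domain average, followed by a bias--variance split in which the cross term vanishes because $\E_{\widehat\gD_T}[g_{\widehat\gD_T}] = g_{\gD_T}$. The only difference is that the paper computes the per-source term directly --- writing $g_{\gD_T} - (1-\beta_i)g_{\widehat\gD_T} - \beta_i g_{\gD_{S_i}} = (1-\beta_i)(g_{\gD_T}-g_{\widehat\gD_T}) + \beta_i(g_{\gD_T}-g_{\gD_{S_i}})$ and expanding --- whereas you first detour through Theorem~\ref{thm-decompose} to get the $\leq$ and then redo that same computation to upgrade to $=$; the detour is harmless but redundant, and one should also note that Theorem~\ref{thm-decompose} as stated uses a single $F_{\texttt{Aggr}}$ rather than a per-source $F_i$, so strictly speaking you are re-running its proof termwise rather than invoking it as a black box.
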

\begin{proof}
    Recall
    \begin{align}
        \widehat g_{\texttt{FedDA}}&=\sum_{i=1}^N \frac{1}{N}\left( (1-\beta_i)g_{\widehat\gD_T} + \beta_i g_{\gD_{S_i}} \right).
    \end{align}
    Thus, %
    \begin{align}
        \Delta^2_{\texttt{FedDA}}&=\E_{\widehat \gD_T}\| g_{\gD_T} - \widehat g_{\texttt{FedDA}}\|^2_\pi= \E_{\widehat \gD_T}\big\| g_{\gD_T} - \sum_{i=1}^N \frac{1}{N}\left( (1-\beta_i)g_{\widehat\gD_T} + \beta_i g_{\gD_{S_i}} \right)\big\|^2_\pi\\
        &= \E_{\widehat \gD_T}\big\|\sum_{i=1}^N \frac{1}{N} \left( g_{\gD_T} -  (1-\beta_i)g_{\widehat\gD_T} + \beta_i g_{\gD_{S_i}} \right)\big\|^2_\pi\\
        &\leq \sum_{i=1}^N \frac{1}{N} \E_{\widehat \gD_T}\big\|g_{\gD_T} -  (1-\beta_i)g_{\widehat\gD_T} + \beta_i g_{\gD_{S_i}}\big\|^2_\pi\label{eq:thm-2-1} 
    \end{align}
    where the inequality is derived by Jensen's inequality. 
    Next, we prove for each $i\in [N]$:
    \begin{align}
        \E_{\widehat \gD_T}&\big\|g_{\gD_T} -  (1-\beta_i)g_{\widehat\gD_T} + \beta_i g_{\gD_{S_i}}\big\|^2_\pi
        =\E_{\widehat\gD_T}\|(1-\beta_i)(g_{\gD_T}-g_{\widehat\gD_T})+\beta_i(g_{\gD_T}- g_{\gD_{S_i}})\|^2_\pi\\
        &=(1-\beta_i)^2\E_{\widehat\gD_T}\|g_{\gD_T}-g_{\widehat\gD_T}\|_\pi^2+\beta_i^2\|g_{\gD_T}- g_{\gD_{S_i}}\|^2_\pi\\
        &\qquad \qquad+2(1-\beta_i)\beta_i \E_{\widehat\gD_T} \langle g_{\gD_T}-g_{\widehat\gD_T}, g_{\gD_T}- g_{\gD_{S_i}}\rangle_\pi\\
        &=(1-\beta_i)^2 \sigma_\pi^2(\widehat\gD_T) + \beta_i^2 d^2_\pi(\gD_S, \gD_T)+2(1-\beta_i)\beta_i  \langle\E_{\widehat\gD_T}[ g_{\gD_T}-g_{\widehat\gD_T}], g_{\gD_T}- g_{\gD_{S_i}}\rangle_\pi\\
        &=(1-\beta_i)^2 \sigma_\pi^2(\widehat\gD_T) + \beta_i^2 d^2_\pi(\gD_{S_i}, \gD_T)\\
        &=\Delta^2_{\texttt{FedDA},S_i}.
    \end{align}
    Plugging the above equation into \eqref{eq:thm-2-1} gives the theorem.
\end{proof}

\subsection{Proof of Theorem \ref{thm:error-FedGP}}\label{subsec:thm:error-FedGP}
\begin{theorem}[Theorem~\ref{thm:error-FedGP} Rigorously]\label{thm:error-FedGP-restate}
Consider \texttt{FedGP}. Given the target domain  $\widehat\gD_T$ and $N$ source domains $\gD_{S_1}, \dots, \gD_{S_N}$.
\begin{align}
    \Delta^2_{\texttt{FedGP}}&\leq \frac{1}{N} \sum_{i=1}^N \Delta^2_{\texttt{FedGP}, S_i}, 
\end{align}
where
\begin{align}
    \Delta^2_{\texttt{FedGP},S_i}&=(1-\beta_i)^2 \sigma_\pi^2(\widehat\gD_T) + \beta_i^2 \E_{\theta, \widehat \gD_T}[\widehat \delta_i(\theta) \tau_i^2(\theta)\|g_{\gD_T}(\theta)- g_{\gD_{S_i}}(\theta)\|^2] \\
    &\  + (2\beta_i-\beta_i^2)\E_{\theta, \widehat \gD_T}\widehat\delta_i(\theta)\langle g_{\widehat\gD_T}(\theta)-g_{\gD_T}(\theta), u_{\gD_{S_i}}(\theta)\rangle^2 \\
    &\ +2\beta_i(1-\beta_i)\E_{\theta,\widehat\gD_T }[\widehat\delta_i(\theta)\langle g_{\gD_T}(\theta), g_{\gD_{S_i}}(\theta) \rangle \cdot \langle g_{\widehat\gD_T}(\theta)-g_{\gD_T}(\theta), u_{\gD_{S_i}}(\theta) \rangle]\\
    &\qquad + \beta_i^2\E_{\theta,\widehat\gD_T }[(1-\widehat\delta_i(\theta))\|g_{\gD_T}(\theta)\|^2].
\end{align}
In the above equation,  $\widehat \delta_i(\theta):=\bm{1}(\langle g_{\widehat\gD_T}(\theta), g_{\gD_{S_i}}(\theta) \rangle>0)$ is the indicator function and it is $1$ if the condition is satisfied. $\tau_i(\theta):=\tfrac{\|g_{\gD_T}(\theta)\sin\rho_i(\theta)\|}{\|g_{\gD_{S_i}}(\theta)-g_{\gD_T}(\theta)\|}$ where $\rho_i(\theta)$ is the angle between  $g_{\gD_{S_i}}(\theta)$ and $g_{\gD_T}(\theta)$. Moreover, $u_{\gD_{S_i}}(\theta):=g_{\gD_{S_i}}(\theta)/\|g_{\gD_{S_i}}(\theta)\|$.
\end{theorem}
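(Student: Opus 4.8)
The plan is to follow the two-stage template of the proof of Theorem~\ref{thm:error-FedDA}: first strip off the average over source domains with Jensen's inequality, then perform an exact decomposition of the single-source error term; the genuinely new work is a geometric analysis of the clipped projection $\texttt{Proj}_+$. Write $P_i := \texttt{Proj}_+(g_{\widehat\gD_T}\mid g_{\gD_{S_i}})$ and $h_i := (1-\beta_i)g_{\widehat\gD_T} + \beta_i P_i$, so that $g_{\gD_T} - \widehat g_{\texttt{FedGP}} = \frac1N\sum_{i\in[N]}(g_{\gD_T}-h_i)$. Applying Jensen's inequality to the convex functional $\|\cdot\|_\pi^2$ and then $\E_{\widehat\gD_T}$ yields $\Delta^2_{\texttt{FedGP}} \le \frac1N\sum_{i\in[N]}\Delta^2_{\texttt{FedGP},S_i}$ with $\Delta^2_{\texttt{FedGP},S_i} := \E_{\widehat\gD_T}\|g_{\gD_T}-h_i\|_\pi^2$, exactly as in the \texttt{FedDA} argument; it then remains to evaluate $\Delta^2_{\texttt{FedGP},S_i}$ for one fixed $i$.

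Fix $i$ and write $g_{\gD_T}-h_i = (1-\beta_i)(g_{\gD_T}-g_{\widehat\gD_T}) + \beta_i(g_{\gD_T}-P_i)$. Expanding the squared $L^\pi$-norm gives a variance term, a projection-error term $\beta_i^2\E_{\widehat\gD_T}\|g_{\gD_T}-P_i\|_\pi^2$, and a cross term. The variance term is $(1-\beta_i)^2\sigma_\pi^2(\widehat\gD_T)$ by Definition~\ref{def:target-var}. For the remaining two I would work pointwise in $\theta$ — legitimate since $\|\cdot\|_\pi^2 = \E_{\theta\sim\pi}\|\cdot\|^2$ and $\pi$ is fixed and independent of $\widehat\gD_T$, so $\E_{\theta,\widehat\gD_T} = \E_\theta\E_{\widehat\gD_T}$ — abbreviating $g_T = g_{\gD_T}(\theta)$, $\widehat g_T = g_{\widehat\gD_T}(\theta)$, $g_S = g_{\gD_{S_i}}(\theta)$, $u_S = g_S/\|g_S\|$, $\widehat\delta = \bm{1}(\langle\widehat g_T, g_S\rangle > 0)$, and noting that $P_i = \widehat\delta\langle\widehat g_T, u_S\rangle u_S$ lies on the line spanned by $u_S$.

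Decompose $g_T$ and $\widehat g_T$ into their $u_S$-components $a := \langle g_T, u_S\rangle$, $b := \langle\widehat g_T, u_S\rangle$ and orthogonal complements $g_T^\perp, \widehat g_T^\perp$. A two-case split on $\widehat\delta$ gives $\|g_T - P_i\|^2 = \widehat\delta\,((b-a)^2 + \|g_T^\perp\|^2) + (1-\widehat\delta)\|g_T\|^2$, where $\|g_T^\perp\|^2 = \|g_T\|^2\sin^2\rho_i = \tau_i^2\|g_T - g_S\|^2$ by the definition of $\tau_i$ (equivalently, the law of sines in the triangle $0, g_S, g_T$). For the cross term $\langle g_T - \widehat g_T, g_T - P_i\rangle$, the orthogonal parts integrate to zero: $\E_{\widehat\gD_T}[\widehat g_T^\perp] = g_T^\perp$ because $g \mapsto g - \langle g, u_S\rangle u_S$ is linear and $u_S, g_T$ do not depend on $\widehat\gD_T$, while $g_T^\perp$ is deterministic. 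What survives is the scalar identity $\E_{\widehat\gD_T}\langle g_T - \widehat g_T, g_T - P_i\rangle = \E_{\widehat\gD_T}[\widehat\delta\,b(b-a)]$ (using $\E_{\widehat\gD_T}[b] = a$), which, writing $b = a + (b-a)$, equals $\E_{\widehat\gD_T}[\widehat\delta\,(b-a)^2] + \E_{\widehat\gD_T}[\widehat\delta\,a(b-a)]$. Assembling the pieces and integrating over $\theta\sim\pi$: the $\widehat\delta(b-a)^2 = \widehat\delta\langle\widehat g_T - g_T, u_S\rangle^2$ contributions combine with coefficient $\beta_i^2 + 2\beta_i(1-\beta_i) = 2\beta_i - \beta_i^2$; the $\widehat\delta\,\tau_i^2\|g_T - g_S\|^2$ and $(1-\widehat\delta)\|g_T\|^2$ contributions each keep coefficient $\beta_i^2$; and the residual $2\beta_i(1-\beta_i)\E[\widehat\delta\,\langle g_T, u_S\rangle\langle\widehat g_T - g_T, u_S\rangle]$ survives. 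Re-expanding the abbreviations reproduces the stated formula for $\Delta^2_{\texttt{FedGP},S_i}$ term by term.

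The main obstacle is the cross-term analysis: one must resist treating $\widehat\delta$ and $b$ as independent of $\widehat\gD_T$, must cleanly separate the vanishing orthogonal contribution from the surviving parallel one, and must route the indicator through the $b = a + (b-a)$ split so that the $(b-a)^2$ pieces line up to produce the $2\beta_i - \beta_i^2$ coefficient. Minor extra care is needed for degenerate configurations ($g_{\gD_{S_i}}(\theta) = 0$, or $\langle\widehat g_T, g_S\rangle = 0$ on the indicator boundary), handled by convention or on a $\pi$-null set; and, for the informal Theorem~\ref{thm:error-FedGP}, for the mean-field approximations that collapse the surviving $\widehat\delta$, cross, and $(1-\widehat\delta)$ terms — e.g.\ replacing $\E[\widehat\delta\langle\widehat g_T - g_T, u_S\rangle^2]$ by $\sigma_\pi^2(\widehat\gD_T)/m$ (one coordinate out of $m$) and discarding the boundary-dominated cross and filtering terms.
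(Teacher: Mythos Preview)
Your proposal is correct and follows essentially the same route as the paper's proof. The only organizational difference is that you decompose pointwise into $u_S$-parallel and $u_S$-orthogonal components and handle a two-term expansion plus cross term, whereas the paper first peels off the ``noise-along-$u$'' piece $\beta_i\widehat\delta\langle \widehat g_T-g_T,u_S\rangle u_S$ to obtain a three-term split before expanding; both routes produce exactly the same intermediate quantities (in particular the $(2\beta_i-\beta_i^2)$ coefficient arises the same way, and the $\widehat\delta(1-\widehat\delta)=0$ cancellation the paper uses is implicit in your casewise split). Note also that your residual cross term carries $\langle g_T,u_S\rangle$, matching the paper's own derivation; the $\langle g_{\gD_T},g_{\gD_{S_i}}\rangle$ in the displayed statement is a typo for $\langle g_{\gD_T},u_{\gD_{S_i}}\rangle$.
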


\begin{proof}
Recall that 
\begin{align} 
    \widehat g_{\texttt{FedGP}}(\theta) =\sum_{i=1}^N \frac{1}{N}\left( (1-\beta_i)g_{\widehat\gD_T}(\theta) + \beta_i \texttt{Proj}_+(g_{\widehat\gD_T}(\theta)|g_{\gD_{S_i}}(\theta)) \right)=\sum_{i=1}^N \frac{1}{N} \widehat g_{\texttt{FedGP}, \gD_{S_i}}(\theta),
\end{align}
where we denote  
\begin{align}
    \widehat g_{\texttt{FedGP}, \gD_{S_i}}(\theta)\coloneqq\left( (1-\beta_i)g_{\widehat\gD_T}(\theta) + \beta_i \texttt{Proj}_+(g_{\widehat\gD_T}(\theta)|g_{\gD_{S_i}}(\theta)) \right).
\end{align}

Then, we can derive:
\begin{align}
    \Delta^2_{\texttt{FedGP}}&= \E_{\widehat\gD_T}\|g_{\gD_T}-\widehat g_{\texttt{FedGP}}\|^2_\pi=\E_{\widehat\gD_T, \theta}\|g_{\gD_T}(\theta)-\widehat g_{\texttt{FedGP}}(\theta)\|^2\\
    &= \E_{\widehat\gD_T, \theta}\|g_{\gD_T}(\theta)-\sum_{i=1}^N \frac{1}{N} \widehat g_{\texttt{FedGP}, \gD_{S_i}}(\theta)\|^2\\
    &=\E_{\widehat\gD_T, \theta}\|\sum_{i=1}^N \frac{1}{N}(g_{\gD_T}(\theta)- \widehat g_{\texttt{FedGP}, \gD_{S_i}}(\theta))\|^2\\
    &\leq \sum_{i=1}^N \frac{1}{N} \E_{\widehat\gD_T, \theta}\|g_{\gD_T}(\theta)- \widehat g_{\texttt{FedGP}, \gD_{S_i}}(\theta)\|^2 \label{eq:thm-3-1}
\end{align}
where the inequality is derived by Jensen's inequality.

Next, we show $\E_{\widehat\gD_T, \theta}\|g_{\gD_T}(\theta)- \widehat g_{\texttt{FedGP}, \gD_{S_i}}(\theta)\|^2=\Delta^2_{\texttt{FedGP},S_i}$. First, we simplify the notation.
Recall the definition of $\widehat g_{\texttt{FedGP}, \gD_{S_i}}$ is that
    \begin{align}
        \widehat g_{\texttt{FedGP}, \gD_{S_i}}(\theta) &= (1-\beta_i) g_{\widehat\gD_T}(\theta)+\beta_i \max\{\langle g_{\widehat\gD_T}(\theta), g_{\gD_{S_i}}\rangle, 0\}g_{\gD_{S_i}(\theta)}/\|g_{\gD_{S_i}}(\theta)\|^2\\
        &=(1-\beta_i) g_{\widehat\gD_T}(\theta)+\beta_i\widehat \delta_i(\theta) \langle g_{\widehat\gD_T}(\theta), u_{\gD_{S_i}}(\theta)\rangle u_{\gD_{S_i}}(\theta).
    \end{align}
    Let us fix $\theta$ and then further simplify the notation by denoting
    \begin{align}
        \hat v&:= g_{\widehat\gD_T}(\theta)\\
         v&:= g_{\gD_T}(\theta)\\
        u&:=u_{\gD_{S_i}}(\theta)\\
        \hat \delta &:=\hat \delta_i(\theta)
    \end{align}
    Therefore, we have $\widehat g_{\texttt{FedGP}, \gD_{S_i}}(\theta)=(1-\beta_i)\hat v+\beta_i \hat 
    \delta\langle \hat v, u\rangle u$. 

    Therefore, with the simplified notation, 
    \begin{align}
        \E_{\widehat \gD_{T}}\|g_{\gD_T}&(\theta)-\widehat g_{\texttt{FedGP}, \gD_{S_i}}(\theta)\|^2=\E_{\widehat \gD_{T}}\|(1-\beta_i)\hat v+\beta_i  
    \hat \delta\langle \hat v, u\rangle u-v\|^2\\
    &=\E_{\widehat \gD_{T}}\|\beta_i(\hat \delta\langle \hat v, u\rangle u-v)+(1-\beta_i)(\hat v-v)\|^2\\
    &=\E_{\widehat \gD_{T}}\|\beta_i\hat \delta\langle \hat v-v, u\rangle u+\beta_i (\hat \delta\langle v, u\rangle u-v)+(1-\beta_i)(\hat v-v)\|^2\\
    &= \beta_i^2 \E_{\widehat \gD_{T}}[\hat \delta\langle \hat v-v, u\rangle^2]+\beta_i^2 \E_{\widehat \gD_{T}}[\|\hat \delta\langle v, u\rangle u-v\|^2]+(1-\beta_i)^2\E_{\widehat \gD_{T}}[\|\hat v-v\|^2]\\
    &\qquad +2\beta_i(1-\beta_i)\E_{\widehat \gD_{T}}[\hat \delta \langle \hat v-v, u\rangle^2]+2\beta_i(1-\beta_i)\E_{\widehat \gD_{T}}[\hat \delta \langle v, u\rangle \langle u, \hat v-v\rangle ] \\
    &\qquad -2\beta_i^2 \E  [\hat \delta(1-\hat \delta) \langle\hat v-v,u\rangle\langle v, u\rangle ] \tag{expanding the squared norm}\\
    &= (2\beta_i-\beta_i^2)\E_{\widehat \gD_{T}}[\hat \delta\langle \hat v-v, u\rangle^2]+\beta_i^2 \underbrace{\E_{\widehat \gD_{T}}[\|\hat \delta\langle v, u\rangle u-v\|^2]}_{A}+(1-\beta_i)^2\E_{\widehat \gD_{T}}[\|\hat v-v\|^2]\\
    &\quad + 2\beta_i(1-\beta_i)\E_{\widehat \gD_{T}}[\hat \delta \langle v, u\rangle \langle u, \hat v-v\rangle ] -2\beta_i^2 \underbrace{\E  [\hat \delta(1-\hat \delta) \langle\hat v-v,u\rangle\langle v, u\rangle ]}_{B},
    \end{align}
    where the last equality is by merging the similar terms. Next, we deal with the terms $A$ and $B$. 

    Let us start from the term $B$. Noting that $\hat \delta$ is either $0$ or $1$, we can see that $\hat \delta(1-\hat \delta)=0$. Therefore, $B=0$. 

    As for the term $A$, noting that $\hat \delta^2=\hat \delta$, expanding the squared term we have:
    \begin{align}
        A&=\E_{\widehat \gD_{T}}[\|\hat \delta\langle v, u\rangle u-v\|^2]=\E_{\widehat \gD_{T}}[\|\hat \delta(\langle v, u\rangle u-v)-(1-\hat \delta)v\|^2]\\
        &=\E_{\widehat \gD_{T}}[\hat \delta\|\langle v, u\rangle u-v\|^2] + \E_{\widehat \gD_{T}}[(1-\hat \delta)\|v\|^2] -2\E_{\widehat \gD_{T}}[\hat \delta (1-\hat \delta)\langle \langle v, u\rangle u-v, v\rangle]\\
        &=\E_{\widehat \gD_{T}}[\hat \delta\|\langle v, u\rangle u-v\|^2] + \E_{\widehat \gD_{T}}[(1-\hat \delta)\|v\|^2].
    \end{align}

    Therefore, combining the above, we have
    \begin{align}
        \E_{\widehat \gD_{T}}\|g_{\gD_T}(\theta)-\widehat g_{\texttt{FedGP}, \gD_{S_i}}(\theta)\|^2&=(1-\beta_i)^2\E_{\widehat \gD_{T}}[\|\hat v-v\|^2] + \beta_i^2 \E_{\widehat \gD_{T}}[\hat \delta\|\langle v, u\rangle u-v\|^2]\\
        &\quad +(2\beta_i-\beta_i^2)\E_{\widehat \gD_{T}}[\hat \delta\langle \hat v-v, u\rangle^2]\\
        &\quad +2\beta_i(1-\beta_i)\E_{\widehat \gD_{T}}[\hat \delta \langle v, u\rangle \langle u, \hat v-v\rangle ]+\beta_i^2\E_{\widehat \gD_{T}}[(1-\hat \delta)\|v\|^2].
    \end{align}
    
    Let us give an alternative form for $\|\langle v, u\rangle u-v\|^2$, as we aim to connect this term to $\|u-v\|$ which would become the domain-shift. Note that $\|\langle v, u\rangle u-v\|$ is the distance between $v$ and its projection to $u$. We can see that $\|\langle v, u\rangle u-v\|=\|u-v\|\frac{\|\langle v, u\rangle u-v\|}{\|u-v\|}=\|u-v\|\tau_i(\theta)$. Therefore, plugging this in, we have
    \begin{align}
        \E_{\widehat \gD_{T}}\|g_{\gD_T}(\theta)-\widehat g_{\texttt{FedGP}, \gD_{S_i}}(\theta)\|^2&=(1-\beta_i)^2\E_{\widehat \gD_{T}}[\|\hat v-v\|^2] + \beta_i^2 \E_{\widehat \gD_{T}}[\hat \delta\|u-v\|^2\tau_i^2(\theta)]\\
        &\quad +(2\beta_i-\beta_i^2)\E_{\widehat \gD_{T}}[\hat \delta\langle \hat v-v, u\rangle^2]\\
    &\quad +2\beta_i(1-\beta_i)\E_{\widehat \gD_{T}}[\hat \delta \langle v, u\rangle \langle u, \hat v-v\rangle ]+ \beta_i^2\E_{\widehat \gD_{T}}[(1-\hat \delta)\|v\|^2]
    \end{align}

    Writing the abbreviations $u, v, \hat v, \hat \delta$ into their original forms and taking expectation over $\theta$ on the both side we can derive:
    \begin{align}
    \E_{\widehat \gD_{T}, \theta}\|g_{\gD_T}&(\theta)-\widehat g_{\texttt{FedGP}, \gD_{S_i}}(\theta)\|^2=(1-\beta_i)^2 \sigma_\pi^2(\widehat\gD_T) + \beta_i^2 \E_{\theta, \widehat \gD_T}[\widehat \delta_i(\theta) \tau_i^2(\theta)\|g_{\gD_T}(\theta)- g_{\gD_{S_i}}(\theta)\|^2] \\
    &\  + (2\beta_i-\beta_i^2)\E_{\theta, \widehat \gD_T}\widehat\delta_i(\theta)\langle g_{\widehat\gD_T}(\theta)-g_{\gD_T}(\theta), u_{\gD_{S_i}}(\theta)\rangle^2 \\
    &\ +2\beta_i(1-\beta_i)\E_{\theta,\widehat\gD_T }[\widehat\delta_i(\theta)\langle g_{\gD_T}(\theta), g_{\gD_{S_i}}(\theta) \rangle \cdot \langle g_{\widehat\gD_T}(\theta)-g_{\gD_T}(\theta), u_{\gD_{S_i}}(\theta) \rangle]\\
    &\qquad + \beta_i^2\E_{\theta,\widehat\gD_T }[(1-\widehat\delta_i(\theta))\|g_{\gD_T}(\theta)\|^2]\\
    &=\Delta^2_{\texttt{FedGP},S_i}.
\end{align}
    Combining the above equation with \eqref{eq:thm-3-1} concludes the proof.

\end{proof}

\textbf{Approximations.} As we can see, the Delta error of \texttt{FedGP} is rather complicated at its precise form. However, reasonable approximation can be done to extract the useful components from it which would help us to derive its auto-weighted version. In the following, we show how we derive the approximated Delta error for  \texttt{FedGP}, leading to what we present in \eqref{eq:thm-fedgp}.%

First, we consider an approximation which is analogous to a mean-field approximation, i.e., ignoring the cross-terms in the expectation of a product. Fixing $\widehat \delta_i(\theta)=\bar\delta$ and $\tau_i(\theta)=\bar\tau$, i.e., their expectations. This approximation is equivalent to assuming $\widehat \delta(\theta), \tau(\theta)$ can be viewed as independent random variables.  This results in the following.
\begin{align}
    \Delta^2_{\texttt{FedGP}, \gD_{S_i}}&\approx (1-\beta_i)^2 \sigma_\pi^2(\widehat\gD_T) + \beta_i^2  \bar\delta \bar\tau^2 d_\pi(\gD_{S_i}, \gD_T)^2 \\
    &\  + (2\beta_i-\beta_i^2)\bar\delta\E_{\theta, \widehat \gD_T}\langle g_{\widehat\gD_T}(\theta)-g_{\gD_T}(\theta), u_{\gD_{S_i}}(\theta)\rangle^2
    +\beta_i^2(1-\bar \delta)\|g_{\gD_T}\|_\pi^2.
\end{align}
The term   $\E_{ \widehat \gD_T}\langle g_{\widehat\gD_T}(\theta)-g_{\gD_T}(\theta), u_{\gD_{S_i}}(\theta)\rangle^2$ is the variance of $g_{\widehat\gD_T}(\theta)$ when projected to a direction $u_{\gD_{S_i}}(\theta)$. 

We consider a further approximation based on the following intuition: consider a zero-mean random vector $\hat v\in \sR^m$ with i.i.d. entries $\hat v_j$ for $j\in [m]$.  After projecting the random vector to a fixed unit vector $u\in \sR^m$, the projected variance is $\E\langle \hat v, u\rangle^2=\E(\sum_{j=1}^m \hat v_j u_j)^2= \sum_{j=1}^m \E[\hat v_j^2] u^2_j=\sum_{j=1}^m \tfrac{\sigma^2(\hat v)}{m} u^2_j=\sigma^2(\hat v)/m$, i.e., the variance becomes much smaller. 
Therefore, knowing that the parameter space $\Theta$ is $m$-dimensional, combined with the approximation that $g_{\widehat\gD_T}(\theta)-g_{\gD_T}(\theta)$ is element-wise i.i.d., we derive a simpler (approximate) result.
\begin{align}
    \Delta^2_{\texttt{FedGP},\gD_{S_i}}&\approx \left((1-\beta_i)^2 + \tfrac{(2\beta_i-\beta_i^2)\bar\delta}{m}\right)\sigma_\pi^2(\widehat\gD_T) + \beta_i^2  \bar\delta \bar\tau^2 d_\pi(\gD_{S_i}, \gD_T)^2+\beta_i^2(1-\bar \delta)\|g_{\gD_T}\|_\pi^2. \label{eq:approx-error-gp}
\end{align}

In fact, this approximated form can already be used for deriving the auto-weighted version for \texttt{FedGP}, as it is quadratic in $\beta_i$ and all of the terms can be estimated. However, we find that in practice $\bar \delta\approx 1$, and thus simply setting $\bar \delta=1$ makes little impact on 
$\Delta^2_{\texttt{FedGP},\gD_{S_i}}$. Therefore, for simplicity, we choose $\bar \delta=1$ as an approximation, which results in
\begin{align}
    \Delta^2_{\texttt{FedGP},\gD_{S_i}}&\approx \left((1-\beta_i)^2 + \tfrac{(2\beta_i-\beta_i^2)}{m}\right)\sigma_\pi^2(\widehat\gD_T) + \beta_i^2   \bar\tau^2 d_\pi(\gD_{S_i}, \gD_T)^2.
\end{align}
This gives the result shown in \eqref{eq:thm-fedgp}. %

Although many approximations are made, we observe in our experiments that the auto-weighted scheme derived upon this is good enough to improve \texttt{FedGP}.

\subsection{Additional Discussion of the Auto-weighting Method and \texttt{FedGP}}\label{sec:supp-auto-weight}
In this sub-section, we show how we estimate the optimal $\beta_i$ for both \texttt{FedDA} and \texttt{FedGP}. Moreover, we discuss the intuition behind why \texttt{FedGP} with a fixed $\beta=0.5$ is fairly good in many cases. 

In order to compute the $\beta$ for each methods, as shown in Section~\ref{auto-weight-theory}, we need to estimate the following three quantities: $\sigma^2_\pi(\widehat \gD_T)$, $d^2_\pi(\gD_{S_i}, \gD_T)$, and $\bar\tau^2d^2_\pi(\gD_{S_i}, \gD_T)$. In the following, we derive unbiased estimators for each of the quantities, followed by a discussion of the choice of $\pi$.

The essential technique is to divide the target domain dataset $\widehat \gD_T$ into many pieces, serving as samples. Concretely, say we randomly divide $\widehat \gD_T$ into $B$ parts of equal size, denoting $\widehat \gD_T=\cup_{j=1}^B \widehat \gD_T^j$ (without loss of generality we may assume $|\widehat \gD_T|$ can be divided by $B$).  
This means
\begin{align}
    g_{\widehat \gD_T}=\frac{1}{B}\sum_{j=1}^B g_{\widehat \gD_T^j}. \label{eq:auto-weight-2}
\end{align}
Note that each $g_{\widehat \gD_T^j}$ is a sample of dataset formed by $|\widehat\gD_T|/B$ data points. We denote $\widehat\gD_{T,B}$ as the corresponding random variable (i.e., a dataset of $|\widehat\gD_T|/B$ sample points sampled i.i.d. from $\gD_T$).
Since we assume each data points in $\widehat \gD_T$ is sampled i.i.d. from $\gD_T$, we have 
\begin{align}
    \E[g_{\widehat  \gD_{T,B}}]=\E[g_{\widehat  \gD_{T}^j}]=\E[g_{\widehat  \gD_T}]=g_{\gD_T}. \label{eq:auto-weight-1}
\end{align}
Therefore, we may view $\widehat  \gD_T^j$ as i.i.d. samples of $\widehat  \gD_{T,B}$, which we can used for our estimation.

\textbf{Estimator of $\sigma^2_\pi(\widehat \gD_T)$.} 

First, for $\sigma^2_\pi(\widehat \gD_T)$, we can derive that
\begin{align}
    \sigma_\pi^2(\widehat\gD_T)&:= \E \|g_{\gD_T} -  g_{\widehat\gD_{T}}\|^2_\pi=\E \|g_{\gD_T} -  \frac{1}{B}\sum_{j=1}^B g_{\widehat \gD_T^j}\|^2_\pi\\
    &=\frac{1}{B^2}\E \|\sum_{j=1}^B(g_{\gD_T} -  g_{\widehat \gD_T^j})\|^2_\pi\\
    &=\frac{1}{B^2}\sum_{j=1}^B\E \|g_{\gD_T} -  g_{\widehat \gD_T^j}\|^2_\pi \tag{by \eqref{eq:auto-weight-1}}\\
    &=\frac{1}{B} \sigma_\pi^2(\widehat\gD_{T,B}) \tag{by \eqref{eq:auto-weight-1}}
\end{align}
Since we have $B$ i.i.d. samples of $\widehat\gD_{T, B}$, we can use their sample variance, denoted as $\widehat\sigma_\pi^2(\widehat\gD_{T, B})$, as an unbiased estimator for its variance $\sigma_\pi^2(\widehat\gD_{T, B})$. Concretely, the sample variance is
\begin{align}
    \widehat\sigma_\pi^2(\widehat\gD_{T, B})= \frac{1}{B-1}\sum_{j=1}^B \|g_{\widehat \gD_T^j}- g_{\widehat \gD_T}\|^2_\pi.
\end{align}
Note that, as shown in \eqref{eq:auto-weight-2}, $g_{\widehat \gD_T}$ is the sample mean. It is a known statistical fact that sample variance is an unbiased estimator of the variance, i.e., 
\begin{align}
    \E[\widehat\sigma_\pi^2(\widehat\gD_{T, B})]=\sigma_\pi^2(\widehat\gD_{T, B}).
\end{align}
Combining the above, our estimator for $\sigma_\pi^2(\widehat\gD_T)$ is
\begin{align}
    \widehat\sigma_\pi^2(\widehat\gD_T)=\frac{1}{B}\widehat\sigma_\pi^2(\widehat\gD_{T, B})=\frac{1}{(B-1)B}\sum_{j=1}^B \|g_{\widehat \gD_T^j}- g_{\widehat \gD_T}\|^2_\pi.
\end{align}

\textbf{Estimator of $d^2_\pi(\gD_{S_i}, \gD_T)$.}

For $d^2_\pi(\gD_{S_i}, \gD_T)$, we adopt a similar approach. We first apply the following trick:
\begin{align}
    d^2_\pi(\gD_{S_i}, \gD_T)&=\|g_{\gD_{S_i}}-g_{\gD_T}\|_\pi^2=\|g_{\gD_{S_i}}-g_{\gD_T}\|_\pi^2+\E\|g_{\gD_T}-g_{\widehat \gD_{T,B}}\|_\pi^2-\E\|g_{\gD_T}-g_{\widehat \gD_{T,B}}\|_\pi^2\\
    &=\E\|g_{\gD_{S_i}}-g_{\gD_T}+g_{\gD_T}-g_{\widehat \gD_{T,B}}\|_\pi^2-\E\|g_{\gD_T}-g_{\widehat \gD_{T,B}}\|_\pi^2 \label{eq:auto-weight-3}\\
    &=\E\|g_{\gD_{S_i}}-g_{\widehat \gD_{T,B}}\|_\pi^2-\sigma_\pi^2(\widehat\gD_{T, B}), \label{eq:auto-weight-4}
\end{align}
where \eqref{eq:auto-weight-3} is due to that $\E[g_{\widehat \gD_{T,B}}]=g_{ \gD_{T}}$ and therefore the inner product term $$\E[\langle g_{\gD_{S_i}}-g_{\gD_T}, g_{\gD_T}-g_{\widehat \gD_{T,B}}\rangle_\pi ]=\langle g_{\gD_{S_i}}-g_{\gD_T}, \E[g_{\gD_T}-g_{\widehat \gD_{T,B}}]\rangle_\pi =0.$$
This means 
\begin{align}
    \E\|g_{\gD_{S_i}}-g_{\gD_T}+g_{\gD_T}-g_{\widehat \gD_{T,B}}\|_\pi^2=\|g_{\gD_{S_i}}-g_{\gD_T}\|_\pi^2+\E\|g_{\gD_T}-g_{\widehat \gD_{T,B}}\|_\pi^2.
\end{align}
We can see that \eqref{eq:auto-weight-4} has an unbiased estimator as the following
\begin{align}
    \widehat d^2_\pi(\gD_{S_i}, \gD_T)=\left(\frac{1}{B}\sum_{j=1}^B\|g_{\gD_{S_i}}-g_{\widehat \gD^j_{T}}\|_\pi^2\right)-\widehat\sigma_\pi^2(\widehat\gD_{T, B})
\end{align}

\textbf{Estimator of $\bar\tau^2d^2_\pi(\gD_{S_i}, \gD_T)$. }

Finally, it left to find an estimator for $\bar\tau^2d^2_\pi(\gD_{S_i}, \gD_T)$. Note that we estimate $\bar\tau^2d^2_\pi(\gD_{S_i}, \gD_T)$ directly, but not $\bar \tau^2$ and $d^2_\pi(\gD_{S_i}, \gD_T)$ separately, is because our aim in finding an unbiased estimator. Concretely, from Theorem~\ref{thm:error-FedGP} we can see its original form (with $\delta_i=1$) is
\begin{align}
    \E_{\theta\sim \pi}[\tau_i^2(\theta)\|g_{ \gD_{T}}(\theta)-g_{\gD_{S_i}}(\theta)\|^2]=\E_{\theta\sim \pi}\|g_{ \gD_{T}}(\theta)-\langle g_{ \gD_{T}}(\theta), u_{\gD_{S_i}}(\theta) \rangle u_{\gD_{S_i}}(\theta)\|^2, \label{eq:auto-weight-5}
\end{align}
where $u_{\gD_{S_i}}(\theta)=g_{\gD_{S_i}}(\theta)/\|g_{\gD_{S_i}}(\theta)\|$.

Denote $g_{TS}, \widehat g_{TS}, \widehat g_{TS}^j:\Theta\to \Theta$ as the following:
\begin{align}
    g_{TS}(\theta)&\coloneqq g_{ \gD_{T}}(\theta)-\langle g_{ \gD_{T}}(\theta), u_{\gD_{S_i}}(\theta) \rangle u_{\gD_{S_i}}(\theta)\\
    \widehat g_{TS}(\theta)&\coloneqq g_{ \widehat \gD_{T, B}}(\theta)-\langle g_{ \widehat \gD_{T,B}}(\theta), u_{\gD_{S_i}}(\theta) \rangle u_{\gD_{S_i}}(\theta)\\
    \widehat g^j_{TS}(\theta)&\coloneqq g_{ \widehat \gD_{T}^j}(\theta)-\langle g_{ \widehat \gD_{T}^j}(\theta), u_{\gD_{S_i}}(\theta) \rangle u_{\gD_{S_i}}(\theta).
\end{align}
Therefore, we have the following two equations:
\begin{align}
    \eqref{eq:auto-weight-5}&=\|g_{TS}\|_\pi^2\\
    \E[\widehat g_{TS}]&=\E[\widehat g^j_{TS}]=g_{TS}.
\end{align}
This means we can use our samples $\widehat \gD_{T}^j$ to compute samples of $\widehat g_{TS}^j$, and then estimate \eqref{eq:auto-weight-5} accordingly. 

Applying the same trick as before, we have
\begin{align}
    \eqref{eq:auto-weight-5}&=\|g_{TS}\|_\pi^2=\|g_{TS}\|_\pi^2+\E\|\widehat g_{TS}-g_{TS}\|_\pi^2-\E\|\widehat g_{TS}-g_{TS}\|_\pi^2\\
    &=\E\|g_{TS}+\widehat g_{TS}-g_{TS}\|_\pi^2-\E\|\widehat g_{TS}-g_{TS}\|_\pi^2\\
    &=\E\|\widehat g_{TS}\|_\pi^2-\E\|\widehat g_{TS}-g_{TS}\|_\pi^2.
\end{align}
Note that $\E\|\widehat g_{TS}\|_\pi^2$ can be estimated unbiasedly by $\frac{1}{B}\sum_{j=1}^B \|\widehat g^j_{TS}\|_\pi^2$. Moreover, $\E\|\widehat g_{TS}-g_{TS}\|_\pi^2$ is the variance of $\widehat g_{TS}$ and thus can be estimated unbiasedly by the sample variance. 

Putting all together, the estimator of $\bar\tau^2d^2_\pi(\gD_{S_i}, \gD_T)$ (rigorously speaking, the unbiased estimator of \eqref{eq:auto-weight-5}) is
\begin{align}
    \left(\frac{1}{B}\sum_{j=1}^B \|\widehat g^j_{TS}\|^2_\pi \right) - \left( \frac{1}{B-1}\sum_{j=1}^B \|\widehat g^j_{TS}-\frac{1}{B}\sum_{k=1}^B\widehat g^k_{TS}\|^2_\pi \right)
\end{align}

Therefore, we have the unbiased estimators of $\sigma^2_\pi(\widehat \gD_T)$, $d^2_\pi(\gD_{S_i}, \gD_T)$, and $\bar\tau^2d^2_\pi(\gD_{S_i}, \gD_T)$. Then, we can computed estimated optimal $\beta^{\texttt{FedDA}}_i$ and $\beta^{\texttt{FedGP}}_i$ according to section~\ref{auto-weight-theory}. 

\textbf{The choice of $\pi$.} 

The distribution $\pi$ characterizes where in the parameter space $\Theta$ we want to measure $\sigma^2_\pi(\widehat \gD_T)$, $d^2_\pi(\gD_{S_i}, \gD_T)$, and the Delta errors. 

In practice, we have different ways to choose $\pi$. For example, in our synthetic experiment, we simply choose $\pi$ to be the point mass of the initialization model parameter. It turns out the Delta errors computed at initialization are pretty accurate in predicting the final test results, as shown in Figure~\ref{fig:error}. For the more realistic cases, we choose $\pi$ to be the empirical distribution of parameters along the optimization path. This means that we can simply take the local updates, computed by batches of data, as $\widehat \gD_{T}^j$ and estimate $\beta$ accordingly. Detailed implementation is shown in Section~\ref{imp-details-auto-beta}.

\textbf{Intuition of why \texttt{FedGP} is more robust to the choice of $\beta$.}

To have an intuition about why \texttt{FedGP} is more robust to the choice of $\beta$ compared \texttt{FedDA}, we examine how varying $\beta$ affects their Delta errors. 

Recall that 
\begin{align}
    \Delta^2_{\texttt{FedDA},S_i}&=(1-\beta_i)^2 \sigma_\pi^2(\widehat\gD_T) + \beta_i^2 d^2_\pi(\gD_{S_i}, \gD_T)\\
    \Delta^2_{\texttt{FedGP},S_i}&\approx (1-\beta_i)^2\sigma_\pi^2(\widehat\gD_T) + \beta_i^2 \bar\tau^2 d^2_\pi(\gD_{S_i}, \gD_T). 
\end{align}
Now suppose we change $\beta_i\to \beta_i'$, and
since $0\leq \bar \tau^2\leq 1$, we can see that the change in $\Delta^2_{\texttt{FedGP},S_i}$ is less than the change in $\Delta^2_{\texttt{FedDA},S_i}$. In other words, \texttt{FedGP} should be more robust to varying $\beta$ than \texttt{FedDA}.

\section{Synthetic Data Experiments in Details}\label{sec:supp-synthetic-exp}

The synthetic data experiment aims to bridge the gap between theory and practice by verifying our theoretical insights. Specifically, we generate various  source and target datasets and compute the corresponding source-target domain distance $d_\pi(\gD_S, \gD_T)$ and target domain variance $\sigma_\pi(\widehat\gD_T)$. We aim to verify if our theory is predictive of practice.

In this experiment, we use one-hidden-layer neural networks with sigmoid activation. We generate $9$ datasets $D_1, \dots, D_9$ each consisting of 5000 data points as the following. We first generate 5000 samples $x\in \sR^{50}$ from a mixture of $10$ Gaussians. The ground truth target is set to be the sum of $100$ radial basis functions, the target has $10$ samples. We control the randomness and deviation of the basis function to generate datasets with domain shift. As a result, $D_2$ to $D_9$ have an increasing domain shift compared to $D_1$. We take $D_1$ as the target domain. We subsample (uniformly) 9 datasets $\widehat D^1_1, \dots, \widehat D^9_1$ from $D_1$ with decreasing number of subsamples. As a result, $\widehat D^1_1$ has the smallest target domain variance, and  $\widehat D^9_1$ has the largest.

\textbf{Dataset.}
Denote a radial basis function as $\phi_i(\vx) = e^{-\|\vx-\bm{\mu}_i\|_2^2/(2\sigma_i)^2}$, and we set the target ground truth to be the sum of $M=100$ basis functions as $y=\sum_{i=1}^M \phi_i$, where each entry of the parameters are sampled once from $U(-0.5, 0.5)$. We set the dimension of $\vx$ to be $50$, and the dimension of the target to be $10$. We generate $N=5000$ samples of $\vx$ from a Gaussian mixture formed by $10$ Gaussians with different centers but the same covariance matrix $\mathbf{\Sigma}=\mI$. The centers are sampled randomly from $U(-0.5, 0.5)^n$. We use the ground truth target function $y(\cdot)$ to derive the corresponding data $\vy$ for each $\vx$. That is, we want our neural networks to approximate $y(\cdot)$ on the Gaussian mixture. 

\textbf{Methods.} For each pair of $(D_i,\widehat D_1^j)$ where $i,j=1, \dots, 9$ from the 81 pairs of datasets. We compute the  source-target domain distance $d_\pi(D_i, \widehat D_1^j)$ and target domain variance $\sigma_\pi(\widehat D_1^j)$ with $\pi$ being the point mass on only the initialization parameter. We then train the 2-layer neural network with different aggregation strategies on $(D_i, \widehat D_1^j)$. Given  the pair of datasets, we identify which strategies have the smallest Delta error and the best test performance on the target domain. We report the average results of three random trials. 

As interpreted in Figure~\ref{fig:error}, the results verify that (i) the Delta error indicates the actual test result, and (ii) the auto-weighted strategy, which minimizes the estimated Delta error, is effective in practice.  

\begin{figure}[t]
    \centering
    \begin{minipage}{0.32\textwidth}
\centering 
     \includegraphics[width=\linewidth]{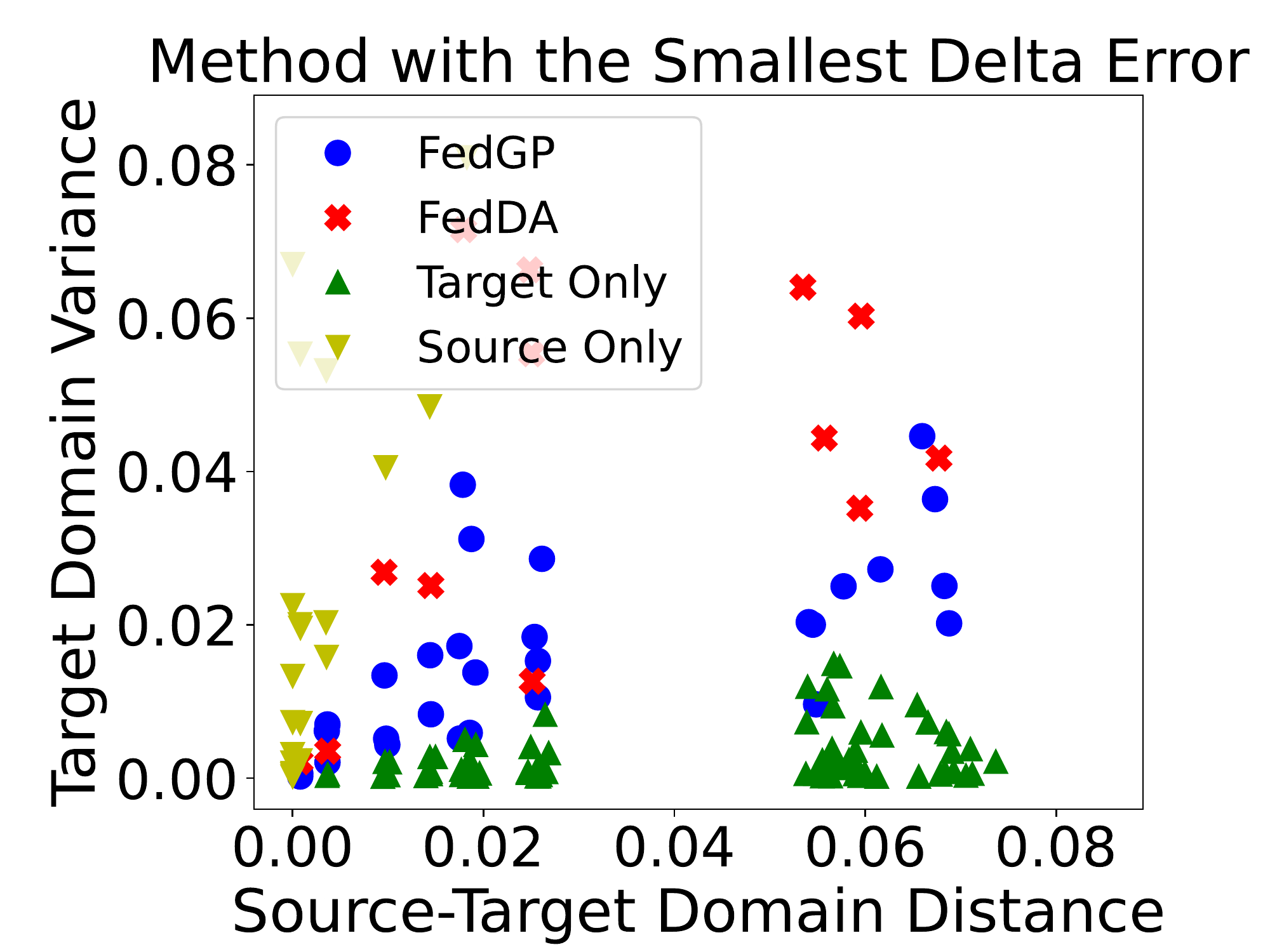}
     (a)
     \end{minipage}
     \begin{minipage}{0.32\textwidth}
     \centering
         \includegraphics[width=\linewidth]{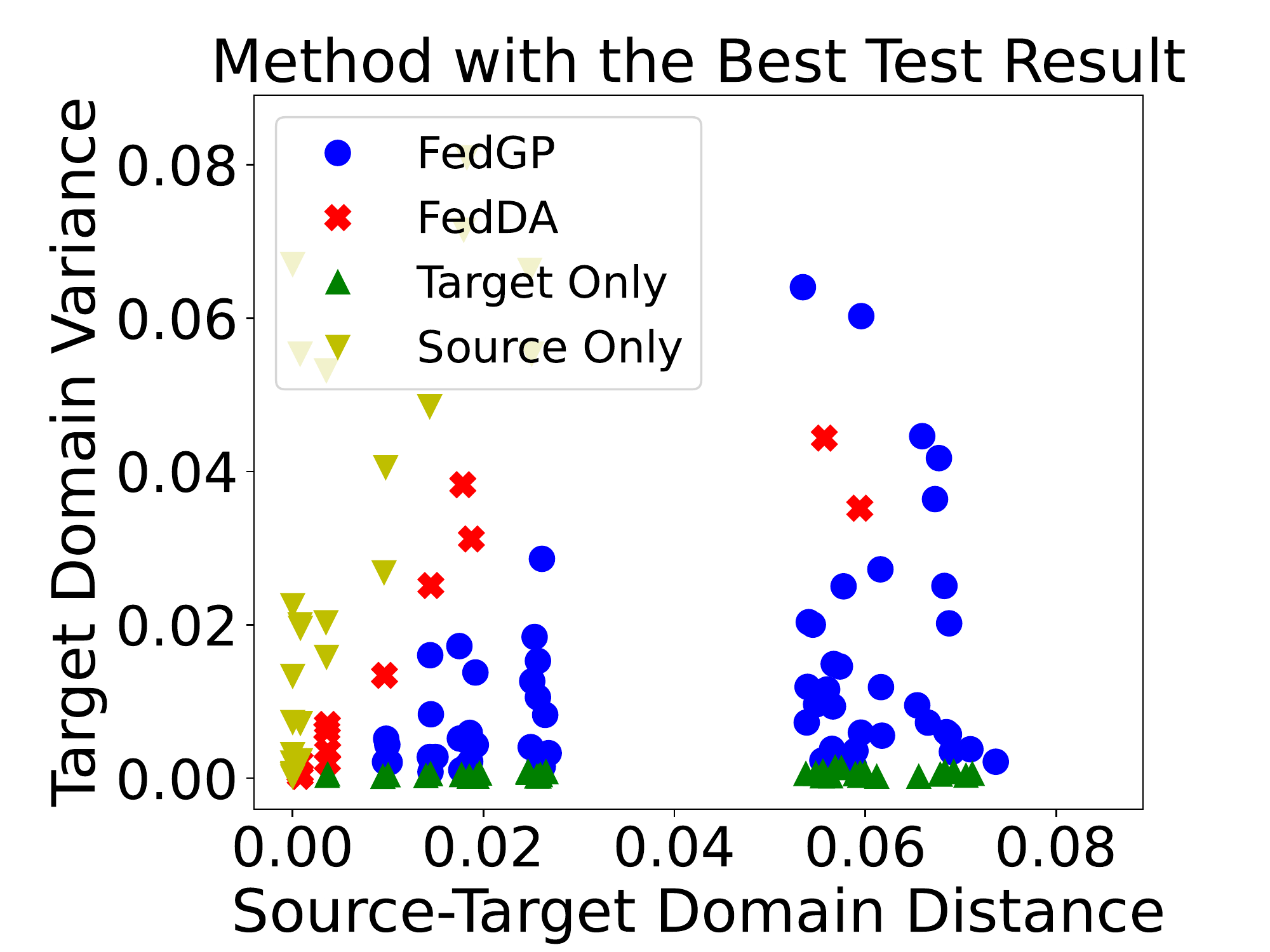}
         (b)
     \end{minipage}
     \begin{minipage}{0.32\textwidth}
     \centering
         \includegraphics[width=\linewidth]{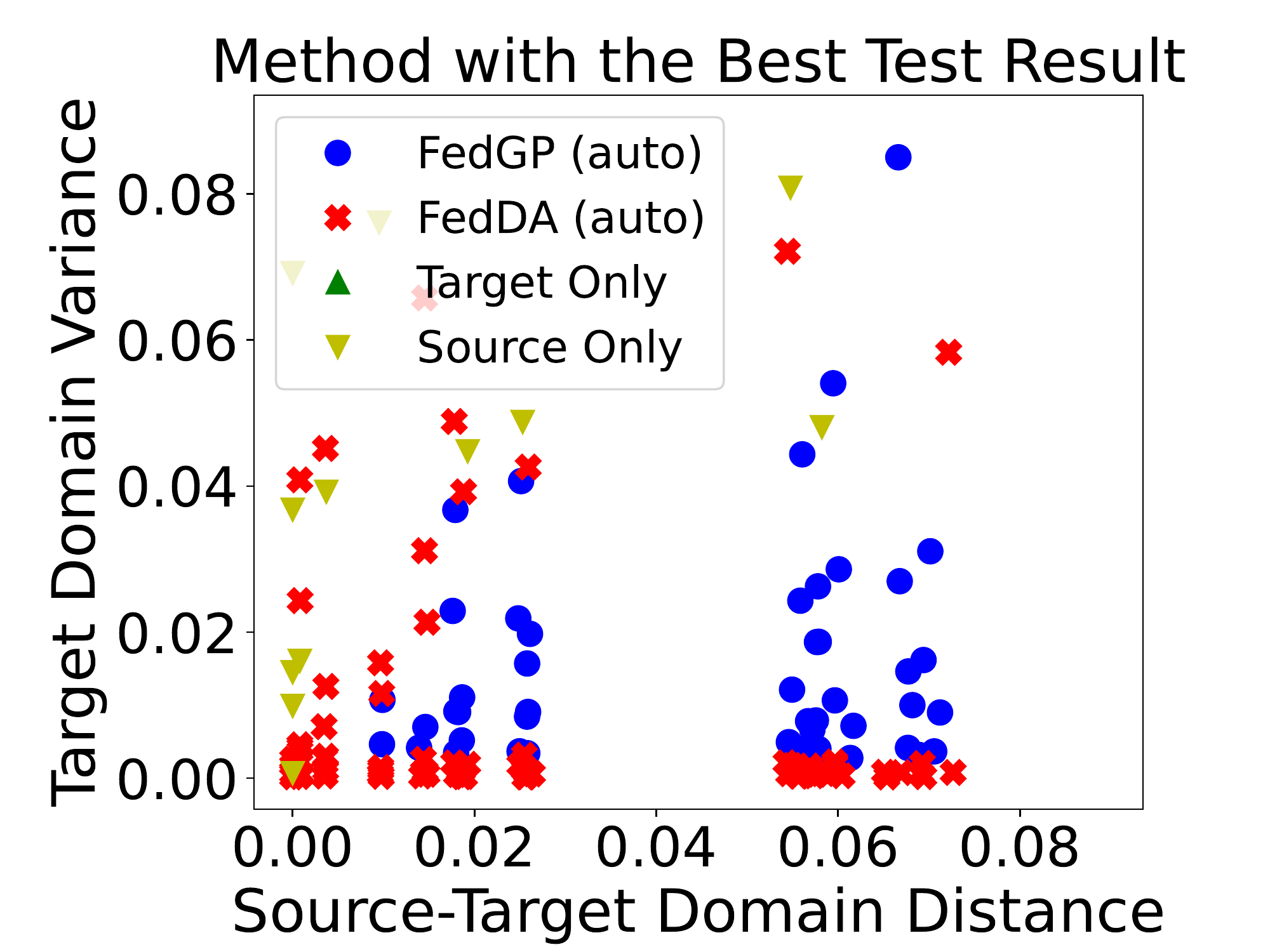}
         (c)
     \end{minipage}
     
    \caption{Given specific source-target domain distance and target domain variance: (a) shows which aggregation method has the smallest Delta error; (b)\&(c) present which aggregation method actually achieves the best test result. In (a)\&(b), \texttt{FedDA} and \texttt{FedGP} use a fixed $\beta=0.5$. In (c), \texttt{FedDA} and \texttt{FedGP} adopt the auto-weighted scheme.  \textbf{Observations:} Comparing (a) and (b), we can see that the prediction from the Delta errors, computed \textit{at initialization}, mostly track the actual test performance \textit{after training}. Comparing (b) and (c), we can see that \texttt{FedDA} is greatly improved with the auto-weighted scheme. Moreover, we can see that \texttt{FedGP} with a fixed $\beta=0.5$ is good enough for most of the cases. These observations demonstrate the practical utility of our theoretical framework. }
    \label{fig:error}
    \vspace{-1em}

\end{figure}

\section{Supplementary Experiment Information}\label{sec:supp-exp-details}
In this section, we provide the algorithms, computational and communication cost analysis, additional experiment details, and additional results on semi-synthetic and real-world datasets.
\subsection{Detailed Algorithm Outlines for Federated Domain Adaptation}\label{sec:alrogithm-fda}

As illustrated in Algorithm~\ref{alg:one}, for one communication round, each source client $\mathcal{C}_{S_{i}}$ performs supervised training on its data distribution $\mathcal{D}_{S_{i}}$ and uploads the weights to the server. Then the server computes and shuffles the source gradients, sending them to the target client $\mathcal{C}_{T}$. On $\mathcal{C}_{T}$, it updates its parameter using the available target data. After that, the target client updates the global model using aggregation rules (e.g. \texttt{FedDA}, \texttt{FedGP}, and their auto-weighted versions) and sends the model to the server. The server then broadcasts the new weight to all source clients, which completes one round. 

\subsection{Real-World Experiment Implementation Details and Results}\label{sec:real-world-imp-res}
\paragraph{Implementation details}We conduct experiments on three datasets: Colored-MNIST~\citep{coloredmnist} (a dataset derived from MNIST~\citep{deng2012mnist} but with spurious features of colors), VLCS~\citep{vlcs} (four datasets with five categories of bird, car, chair, dog, and person), and TerraIncognita~\citep{terra} (consists of 57,868 images across 20 locations, each labeled with one of 15 classes) datasets. The source learning rate is $1e^{-3}$ for Colored-MNIST and $5e^{-5}$ for VLCS and TerraIncognita datasets. The target learning rate is set to $\frac{1}{5}$ of the source learning rate. For source domains, we split the training/testing data with a 20\% and 80\% split. For the target domain, we use a fraction (0.1\% for Colored-MNIST, 5\% for VLCS, 5\% for TerraIncognita) of the 80\% split of training data to compute the target gradient. We report the average of the last 5 epochs of the target accuracy on the test split of the data across 5 trials. For Colored-MNIST, we use a CNN model with four convolutional and batch-norm layers. For the other two datasets, we use pre-trained ResNet-18~\citep{he2016deep} models for training. Apart from that, we use the cross-entropy loss as the criterion and apply the Adam~\citep{kingma2014adam} optimizer. For initialization, we train 2 epochs for the Colored-MNIST, VLCS datasets, as well as 5 epochs for the TerraIncognita dataset. We run the experiment with $5$ random seeds and report the average accuracies over five trials with variances. Also, we set the total round $R=50$ and the local update epoch to $1$.

\paragraph{Personalized FL benchmark} We adapt the code from ~\citet{marfoq2022knnper} to test the performances of different personalization baselines. We report the personalization performance on the target domain with limited data. We train for $150$ epochs for each personalization method with the same learning rate as our proposed methods. For \texttt{APFL}, the mixing parameter is set to $0.5$. \texttt{Ditto}’s penalization parameter is set to $1$. For \texttt{knnper}, the number of neighbors is set to $10$.  We reported the highest accuracy across grids of weights and capacities by evaluating pre-trained \texttt{FedAvg} for \texttt{knnper}. Besides, we found out the full training of DomainNet is too time-consuming for personalized baselines.

\paragraph{Full results of all real-world datasets with error bars} As shown in Table~\ref{table:real-data-res-colored} (Colored-MNIST), Table~\ref{table:real-data-res-vlcs} (VLCS), and Table~\ref{table:real-data-res-terr} (TerraIncognita), auto-weighted methods mostly achieve the best performance compared with personalized benchmarks and other FDA baselines, across various target domains and on average. Also, \fedgp{} with a fixed weight $\beta = 0.5$ has a comparable performance compared with the auto-weighted versions. \fedda{} with auto weights greatly improves the performance of the fixed-weight version.

\begin{table*}[htb]
\small
\centering
\begin{tabular}{lccccccccc}
\hline & \\[-2ex]
            & \multicolumn{4}{c}{\bf Colored-MNIST}                                                                                  \\
   Domains               & +90\%                            & +80\%                            & -90\%                            & Avg                 \\\hline & \\[-2ex]
Source Only            & 56.82 (0.80)                     & 62.37 (1.75)                     & 27.77 (0.82)                     & 48.99   \\
Finetune\_Offline & 66.58 (4.93) & 69.09 (2.62) & 53.86 (6.89) & 63.18\\
\fedda{}             & 60.49 (2.54)                     & 65.07 (1.26)                     & 33.04 (3.15)                     & 52.87 \\
\fedgp{}             & 83.68 (9.94) & 74.41 (4.40) & \bf89.76 (0.48) & 82.42\\
\fedda{}\_Auto & 85.29 (5.71) & 73.13 (7.32) & 88.83(1.06) & 82.72\\
\fedgp{}\_Auto & \bf86.18 (4.54) & \bf76.49 (7.29) & \bf89.62(0.43) & \bf84.10\\
Target Only       & 85.60 (4.78)                     & 73.54 (2.98)                     & 87.05 (3.41)                     & 82.06    \\ \hline
Oracle            & 89.94 (0.38)                     & 80.32 (0.44)                     & 89.99 (0.54)                     & 86.75                \\\hline
\end{tabular}
\caption{\textbf{Target domain test accuracy} (\%) on Colored-MNIST with varying target domains.}
\label{table:real-data-res-colored}
\end{table*}

\begin{table*}[h!]
\small
\centering
\begin{tabular}{lccccccccc}
\hline & \\[-2ex]
            &\multicolumn{5}{c}{\bf VLCS}                                                                                   \\
   Domains         & C                                & L                                & V                                & S                                & Avg                       \\\hline & \\[-2ex]
Source Only           & 90.49 (5.34)                     & 60.65 (1.83)                     & 70.24 (1.97)                     & 69.10 (1.99)                     & 72.62                     \\
Finetune\_Offline & 96.65 (3.68) & 68.22 (2.26) & 74.34 (0.83) & 74.66 (2.58) & 78.47 \\
\fedda{}            & 97.72 (0.46)                     & 68.17 (1.42)                     & \textbf{75.27 (1.45)}            & 76.68 (0.91)                     & 79.46                     \\
\fedgp{}            & 99.43 (0.30)           & 71.09 (1.24)            & 73.65 (3.10)                     & 78.70 (1.31)            & 80.72           \\
\fedda{}\_Auto & 99.78 (0.19) & 73.08 (1.31) & \bf78.41 (1.51) & \bf83.73 (2.27) & \bf83.75\\
\fedgp{}\_Auto & \bf99.93 (0.16) & \bf73.22 (1.81) & \bf78.62 (1.54) & 83.47 (2.46) & \bf83.81\\
Target Only      & 97.77 (1.45)                     & 68.88 (1.86)                     & 72.29 (1.73)                     & 76.00 (1.89)                     & 78.74                     \\ \hline
Oracle      & 100.00 (0.00)                    & 72.72 (2.53)                     & 78.65 (1.38)                     & 82.71 (1.07)                     & 83.52       \\\hline
\end{tabular}
\caption{\textbf{Target domain test accuracy} (\%) on VLCS with varying target domains.}
\label{table:real-data-res-vlcs}
\end{table*}

\begin{table*}[h!]
\small
\centering
\begin{tabular}{lccccc}
\hline & \\[-2ex]
                  & \multicolumn{5}{c}{\bf TerraIncognita}                                                                                                                                \\
                  & \multicolumn{1}{l}{L100}        & \multicolumn{1}{l}{L38}         & \multicolumn{1}{l}{L43}         & \multicolumn{1}{l}{L46}         & \multicolumn{1}{l}{Avg}   \\\hline & \\[-2ex]
Source Only           & 54.62 (4.45)                     & 31.39 (3.13)                     & 36.85 (2.80)                     & 27.15 (1.21)                     & 37.50                     \\
Finetune\_Offline & \multicolumn{1}{l}{77.45 (3.88)} & \multicolumn{1}{l}{75.22 (5.46)} & \multicolumn{1}{l}{61.16 (4.07)} & \multicolumn{1}{l}{58.89 (7.95)} & \multicolumn{1}{l}{68.18} \\
\fedda{}             & 77.24 (2.22)                     & 69.21 (1.83)                     & 58.55 (3.37)                     & 53.78 (1.74)                     & 64.70                     \\
\fedgp{}             & 81.46 (1.28)            & 77.75 (1.55)            & 64.18 (2.86)           & 61.56 (1.94)            & 71.24         \\
\fedda{}\_Auto & \bf82.35 (1.91) & \bf80.77 (1.43) & \bf68.42 (2.66) & \bf66.87 (2.03) & \bf74.60\\
\fedgp{}\_Auto & 81.85 (2.30) & 80.50 (1.60) & \bf68.43 (2.08) & \bf 66.64 (1.52) & 74.36\\
Target Only       & 78.85 (1.86)                     & 74.25 (2.52)                     & 58.96 (3.50)                     & 56.90 (3.09)                     & 67.24                     \\\hline
FedAvg & 38.46                    & 21.24                   & 39.76                   & 20.54                   & 30.00                   \\
Ditto~\citep{li2021ditto}             & 44.30                    & 12.57                   & 40.16                   & 17.98                   & 28.75                   \\
FedRep~\citep{collins2021fedrep}            & 45.27                    & 6.15                    & 21.13                   & 10.89                   & 20.86                   \\
APFL~\citep{deng2020apfl}   & 65.16                    & 64.42                   & 38.97                   & 42.33                   & 52.72                   \\
KNN-per~\citep{marfoq2022knnper}           & 42.20                    & 39.86                   & 50.00                   & 40.21                   & 43.07 \\\hline& \\[-2ex]
Oracle            & 96.41 (0.18)                     & 95.01 (0.28)                     & 91.98 (1.17)                     & 89.04 (0.93)                     & 93.11                    \\\hline
\end{tabular}
\caption{\textbf{Target domain test accuracy} (\%) on TerraIncognita with varying target domains.}
\label{table:real-data-res-terr}
\end{table*}

\subsection{Additional Results on DomainBed Datasets}\label{compare-ufda-sota}

In this sub-section, we show (1) the performances of our methods compared with the SOTA unsupervised FDA (UFDA) methods on DomainNet and (2) additional results on PACS~\citep{li2017deeper} and Office-Home~\citep{venkateswara2017deep} datasets.

\paragraph{Implementation} We randomly sampled $15\%$ target domain samples of PACS, Office-Home, and DomainNet for our methods, while FADA and KD3A use $100\%$ unlabeled data on the target domain. To have a fair comparison, we use the same model architecture ResNet-50 for all methods across three datasets. Additionally, for DomainNet, we train \fedda{}, \fedgp{} and auto-weighted methods with $20$ global epochs and $1$ local epochs; for the other two datasets, we train our methods for $50$ epochs. For the auto-weighted methods, we notice that it converges faster than their fixed-weight counterpart. Therefore, we set the learning rate ratio to be $0.25$ to prevent overfitting for DomainNet and PACS. For Office-Home, we set the rate ratio to be $0.1$ on A domain and $0.5$ on other domains. For DomainNet, the target batch size is set to $64$ for auto-weighted methods and $16$ for \fedda{} and \fedgp{}; the source batch size is set to $128$. For PACS, the target batch size is $4$ for auto-weighted methods and $16$ for \fedda{} and \fedgp{}; the source batch size is $32$. For Office-Home, the target batch size is $64$ for auto-weighted methods and $16$ for \fedda{} and \fedgp{}; the source batch size is $32$. Also, we initialize $2$ epochs for DomainNet, PACS, and domain A of Office-Home; for other domains in Office-Home, we perform the initialization for $5$ epochs. Also, we use a learning rate of $5e^{-5}$ for source clients. The learning rate of the target client is set to $\frac{1}{5}$ of the source learning rate.

\paragraph{Comparison with UFDA methods} We note that our methods work differently than UFDA methods. I.e., we consider the case where the target client possesses only limited labeled data, while the UFDA case assumes the existence of abundant unlabeled data on the target domain. As shown in Table~\ref{table:domainnet}, \fedgp{} outperforms UFDA baselines using ResNet-101 in most domains and on average with a significant margin. Especially for \emph{quick} domain, when the source-target domain shift is large (Source Only has a much lower performance compared with Oracle), our methods significantly improve the accuracy, which suggests our methods can achieve a better bias-variance trade-off under the condition when the shift is large. However, we observe that the estimated betas from the auto-weighted scheme seem to become less accurate and more expensive to compute with a larger dataset, which potentially leads to its slightly worsened performance compared with \fedgp{}. In future work, we will keep exploring how to speed up and better estimate the bias and variances terms for larger datasets.

\paragraph{PACS results} As shown in Table~\ref{table:pacs}, we compare our methods with the SOTA Domain Generalization (DG) methods on the PACS dataset. The results show that our \fedda{}, \fedgp{} and their auto-weighted versions are able to outperform the DG method with significant margins.

 \paragraph{Office-Home results} As shown in Table~\ref{table:office-home}, we compare our methods with the SOTA DG methods on the Office-Home dataset. The results show that our \fedda{}, \fedgp{} and their auto-weighted versions are able to outperform the DG method with significant margins. We found that the auto-weighting versions of \fedda{} and \fedgp{} generally outperform their fixed-weight counterparts. We notice that \fedda{} where the fixed weight choice of $\beta = 0.5$ is surprisingly good on Office-Home. We observe that on Office-Home, the source-only baseline sometimes surpasses the Oracle performance on the target domain. Thus, we conjecture that the fixed weight $\beta=0.5$ happens to be a good choice for \fedda{} on Office-Home, while the noisy target domain data interferes with the auto-weighting mechanism. Nevertheless, the auto-weighted \fedgp{} still shows improvement over its fixed-weight version.

\begin{table}[]
\centering
\begin{tabular}{lccccccc}
\hline & \\[-2ex]
    Domains        & clip & info & paint & quick & real & sketch & Avg           \\\hline
Source Only & 52.1 & 23.1 & 47.7  & 13.3  & 60.7 & 46.5   & 40.6          \\
FADA (100\%)    & 59.1 & 21.7 & 47.9 & 8.8 & 60.8 & 50.4 & 41.5                 \\
KD3A (100\%) (ResNet-50)      & 63.7 & 15.4 & 53.5 & 11.5 & 65.4 & 53.4 &43.8          \\\hline
\fedda{} (ResNet-50) & \bf67.1 & 26.7 & 56.1 & 33.8 & 67.1 & 55.7 & 51.1\\
\fedgp{} (ResNet-50) & 64.0 & 26.6 & \bf56.8 & \bf51.1 & \bf71.3 & 52.3 & \bf53.7\\
\fedda{}\_auto (ResNet-50) & 62.0 & \bf27.8 & \bf56.7  & 50.9  & 68.1 & 53.3   & 53.1          \\
\fedgp{}\_auto (ResNet-50) & 62.2 & 27.7 & 56.8  & 50.7  & 68.4 & \bf53.5   & 53.2          \\\hline
Best DG & 59.2 & 19.9 & 47.4 & 14.0 & 59.8 & 50.4 & 41.8\\\hline
Oracle      & 69.3 & 34.5 & 66.3  & 66.8  & 80.1 & 60.7   & 63.0       \\\hline 
\end{tabular}
\caption{\textbf{Target domain test accuracy} (\%) on DomainNet. \fedgp{} and auto-weighted methods generally outperform UFDA methods with significant margins by using $15\%$ of the data.}
\label{table:domainnet}
\end{table}

\begin{table}[]
\centering
\begin{tabular}{lccccc}
\hline
Domains     & A              & C              & P              & S              & Avg            \\\hline
FedDA       & 92.6          & 89.1          & 97.4          & 89.2          & 92.0          \\
FedGP       & \textbf{94.4} & 92.2          & \textbf{97.6} & \textbf{88.9} & 93.3          \\
FedDA\_Auto & 94.2          & 90.9          & 96.6          & 89.6          & 92.8          \\
FedGP\_Auto & 94.2          & \textbf{93.7} & 97.3          & 88.3          & \textbf{93.4} \\\hline
Best DG     & 87.8           & 81.8           & 97.4           & 82.1           & 87.2 \\\hline         
\end{tabular}
\caption{\textbf{Target domain test accuracy} (\%) on PACS by using $15\%$ data samples. We see \fedgp{} and auto-weighted methods outperformed DG methods with significant margins.}
\label{table:pacs}
\end{table}

\begin{table}[h!]
\centering
\begin{tabular}{lccccc}
\hline
Domains     & \multicolumn{1}{l}{A}    & \multicolumn{1}{l}{C}    & \multicolumn{1}{l}{P}    & \multicolumn{1}{l}{R}    & Avg           \\\hline
Source Only & \multicolumn{1}{l}{50.9} & \multicolumn{1}{l}{66.1} & \multicolumn{1}{l}{74.5} & \multicolumn{1}{l}{76.2} & 66.9          \\
FedDA       & \textbf{67.9}            & \textbf{68.2}            & \textbf{82.6}            & \textbf{78.6}            & \textbf{74.3} \\
FedGP       & 63.8                     & 65.7                     & 81.0                     & 74.4                     & 71.2          \\
FedDA\_Auto & 67.4                     & 65.6                     & 82.2                     & 75.8                     & 72.7          \\
FedGP\_Auto & 66.1                     & 64.5                     & 82.1                     & 74.9                     & 71.9          \\\hline
Oracle      & 70.9                     & 58.5                     & 87.4                     & 75.0                     & 73.0          \\\hline
Best DG     & 64.5                     & 54.8                     & 76.6                     & 78.1                     & 68.5     \\\hline    
\end{tabular}
\caption{\textbf{Target domain test accuracy} (\%) on Office-Home by using $15\%$ data samples. We see when source-target shifts are small, \fedda{} works surprisingly well; \fedgp{}\_Auto manages to improve \fedgp{}.}
\label{table:office-home}
\end{table}

\subsection{Auto-Weighted Methods: Implementation Details, Time and Space Complexity}\label{imp-details-auto-beta}

\paragraph{Implementation} As outlined in Algorithm~\ref{alg:one}, we compute the source gradients and target gradients locally when performing local updates. 
For the target client, during local training, it optimizes its model with $B$ batches of data, and hence it computes $B$ gradients (effectively local updates) during this round. After this round of local optimization, the target client receives the source gradients (source local updates) from the server. Note that the source clients do \textit{not} need to remember nor send its updates for every batch (as illustrated in Algorithm~\ref{alg:one}), but simply one model update per source client and we can compute the average model update divided by the number of batches. Also, because of the different learning rates for the source and target domains, we align the magnitudes of the gradients using the learning rate ratio. Then, on the target client, it computes $\{d_\pi(\gD_{S_i}, \gD_T)\}_{i=1}^{N}$ and $\sigma_\pi(\widehat\gD_T)$ using $\{g_{S_i}\}_{i=1}^{N}$ and $\{g^j_{\widehat\gD_{T}}\}_{j=1}^B$. Our theory suggests we can find the best weights $\{\beta_i\}_{i=1}^{N}$ values for all source domains per round, as shown in Section~\ref{auto-weight-theory}, which we use for aggregation.

We discovered the auto-weighted versions (\fedda{}\_Auto and \fedgp{}\_Auto) usually have a quicker convergence rate compared with static weights, we decide to use a smaller learning rate to train, in order to prevent overfitting easily. In practice, we generally decrease the learning rate by some factors after the initialization stage. For Colored-MNIST, we use factors of $(0.1, 0.5, 0.25)$ for domains $(+90\%, +80\%, -90\%)$. For domains of the other two datasets, we use the same factor of $0.25$. Apart from that, we set the target domain batch size to $2, 4, 8$ for Colored-MNIST, VLCS, and TerraIncognita, respectively.

The following paragraphs discuss the extra time, space, and communication costs needed for running the auto-weighted methods.

\paragraph{Extra time cost} To compute a more accurate estimation of target variance, we need to use a smaller batch size for the training on the target domain (more batches lead to more accurate estimation), which will increase the training time for the target client during each round. 
Moreover, we need to compute the auto weights each time when we perform the aggregation. The extra time cost in this computation is linear in the number of batches. 

\paragraph{Extra space cost} In each round, the target client needs to store the batches of gradients on the target client. I.e., the target client needs to store extra $B$ model updates. 

\paragraph{Extra communication cost} 
Since the aggregation is performed on the server, the extra communication would be sending the $B$ model updates from the target client to the central server.

\paragraph{Discussion}  We choose to designate the target client to estimate the optimal $\beta$ value as we assume the cross-silo setting where the number of source clients is relatively small. In other scenarios where the number of source clients is more than the number of batches used on the target client, one may choose to let the global server compute the auto weights as well as do the aggregation, i.e., the target client sends its batches of local updates to the global server, which reduces the communication cost and shifts the computation task to the global server. If one intends to avoid the extra cost induced by the auto-weighting method, we note that the \fedgp{} with a fixed $\beta=0.5$ can be good enough, especially when the source-target domain shift is big. To reduce the extra communication cost of the target client sending extra model updates to the server, we leave it to be an interesting future work~\citep{rothchild2020fetchsgd}.

\subsection{Visualization of Auto-Weighted Betas Values on Real-World Distribution shifts}\label{sec:auto-weight-beta-visual}

Figure~\ref{fig:auto-beta-visual-color-fedda} and Figure~\ref{fig:auto-beta-visual-color-fedgp} show the curves of auto weights $\beta$ for each source domain with varying target domains on the Colored-MNIST dataset, using \fedda{} and \fedgp{} aggregation rules respectively. Similarly, Figure~\ref{fig:auto-beta-visual-vlcs-fedda} and Figure~\ref{fig:auto-beta-visual-vlcs-fedgp} are on the VLCS dataset; Figure~\ref{fig:auto-beta-visual-terra-fedda} and Figure~\ref{fig:auto-beta-visual-terra-fedgp} are on the TerraIncognita dataset. From the results, we observe \fedda{}\_Auto usually has smaller $\beta$ values compared with \fedgp{}\_Auto. \fedda{}\_Auto has drastically different ranges of weight choice depending on the specific target domain (from $1e^{-3}$ to $1e^{-1}$), while \fedgp{}\_Auto usually has weights around $1e^{-1}$. Also, \fedda{}\_Auto has different weights for each source domain while interestingly, \fedgp{}\_Auto has the almost same weights for each source domain for various experiments. Additionally, the patterns of weight change are unclear - in most cases, they have an increasing or increasing-then-decreasing pattern.

\begin{figure}[h!]
  \begin{subfigure}[b!]{.33\columnwidth}
    \centering
    \includegraphics[width=0.8\linewidth]{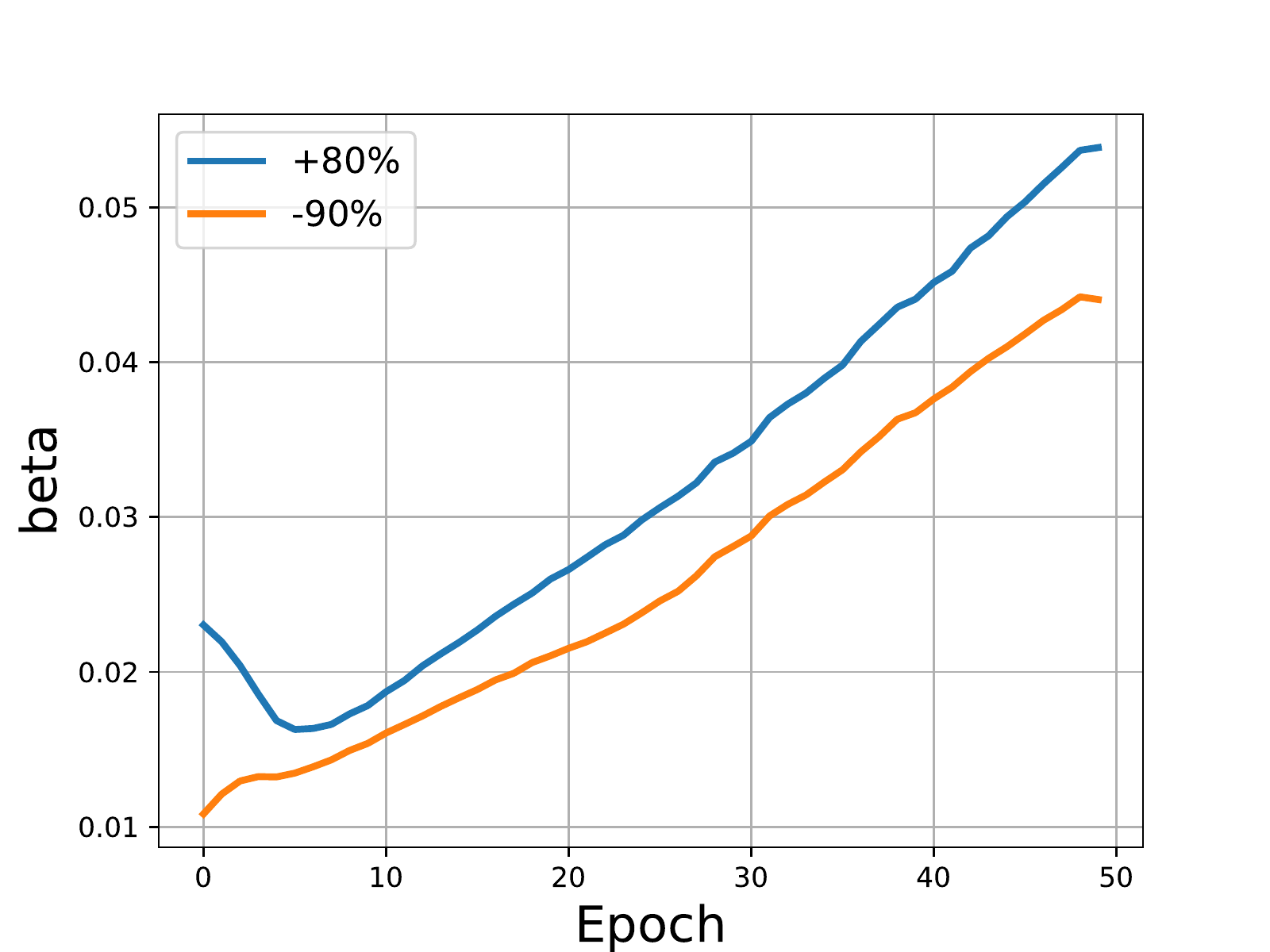}
    \caption{+90\%}
    \label{fig-a}
  \end{subfigure}
  \begin{subfigure}[b!]{.33\columnwidth}
    \centering
    \includegraphics[width=0.8\linewidth]{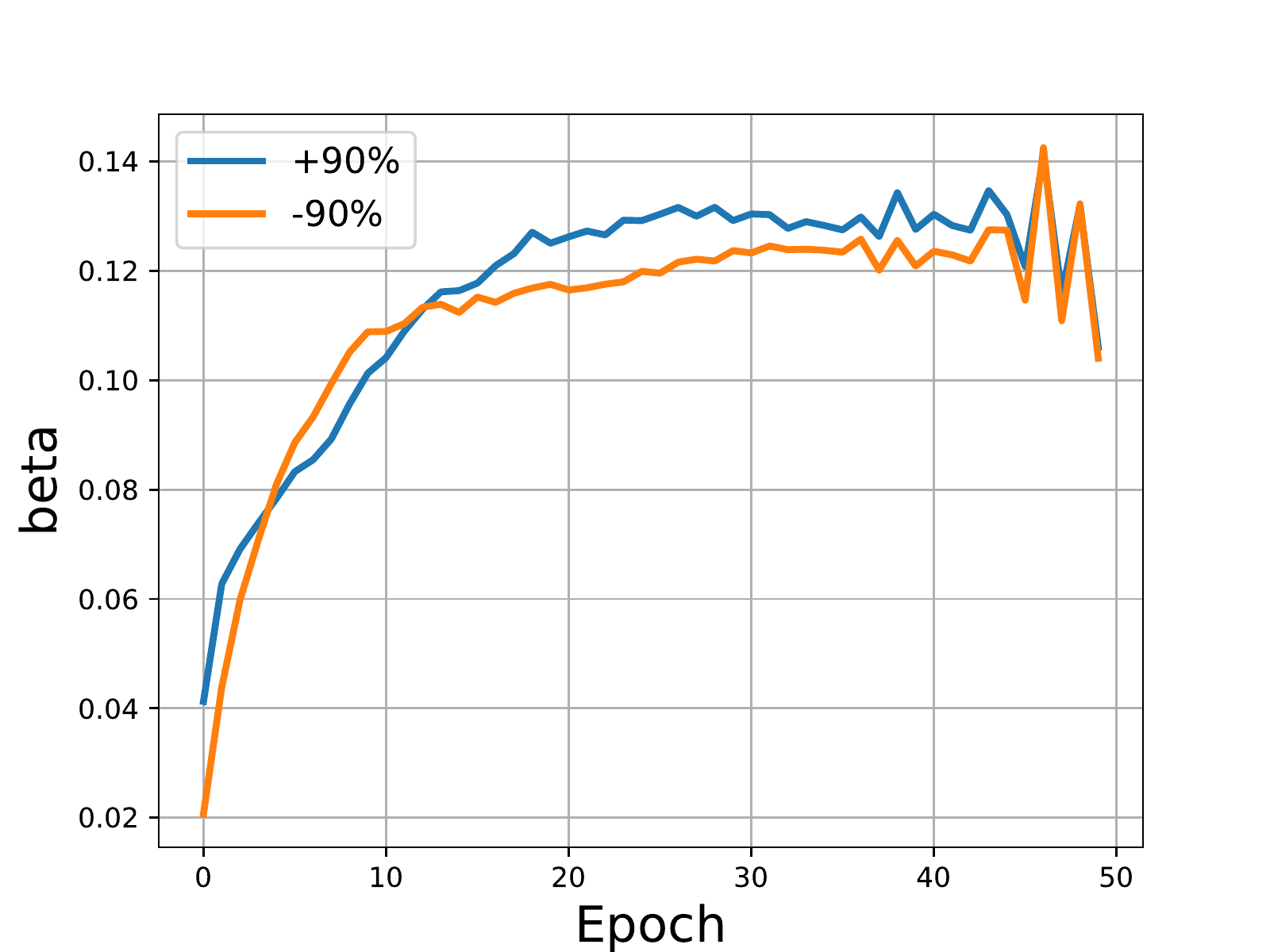}
    \caption{+80\%}
    \label{fig-b}
\end{subfigure}
  \begin{subfigure}[b!]{.33\columnwidth}
    \centering
    \includegraphics[width=0.8\linewidth]{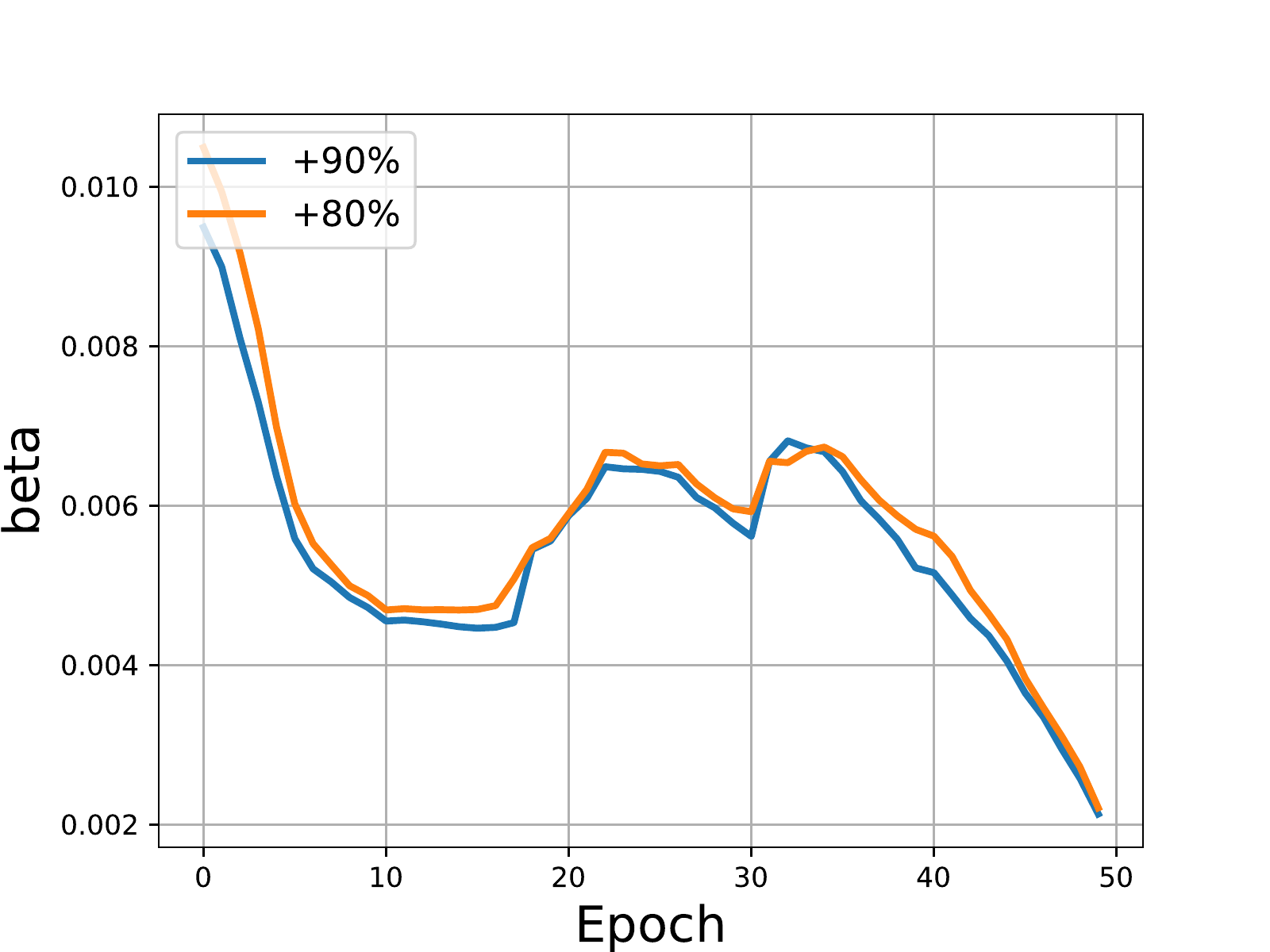}
    \caption{-90\%}
    \label{fig-c}
  \end{subfigure}
  \caption{\fedda{}\_Auto Colored-MNIST}\vspace{-0.5em}
  \label{fig:auto-beta-visual-color-fedda}
\end{figure}

\begin{figure}[h!]
  \begin{subfigure}[b!]{.33\columnwidth}
    \centering
    \includegraphics[width=0.8\linewidth]{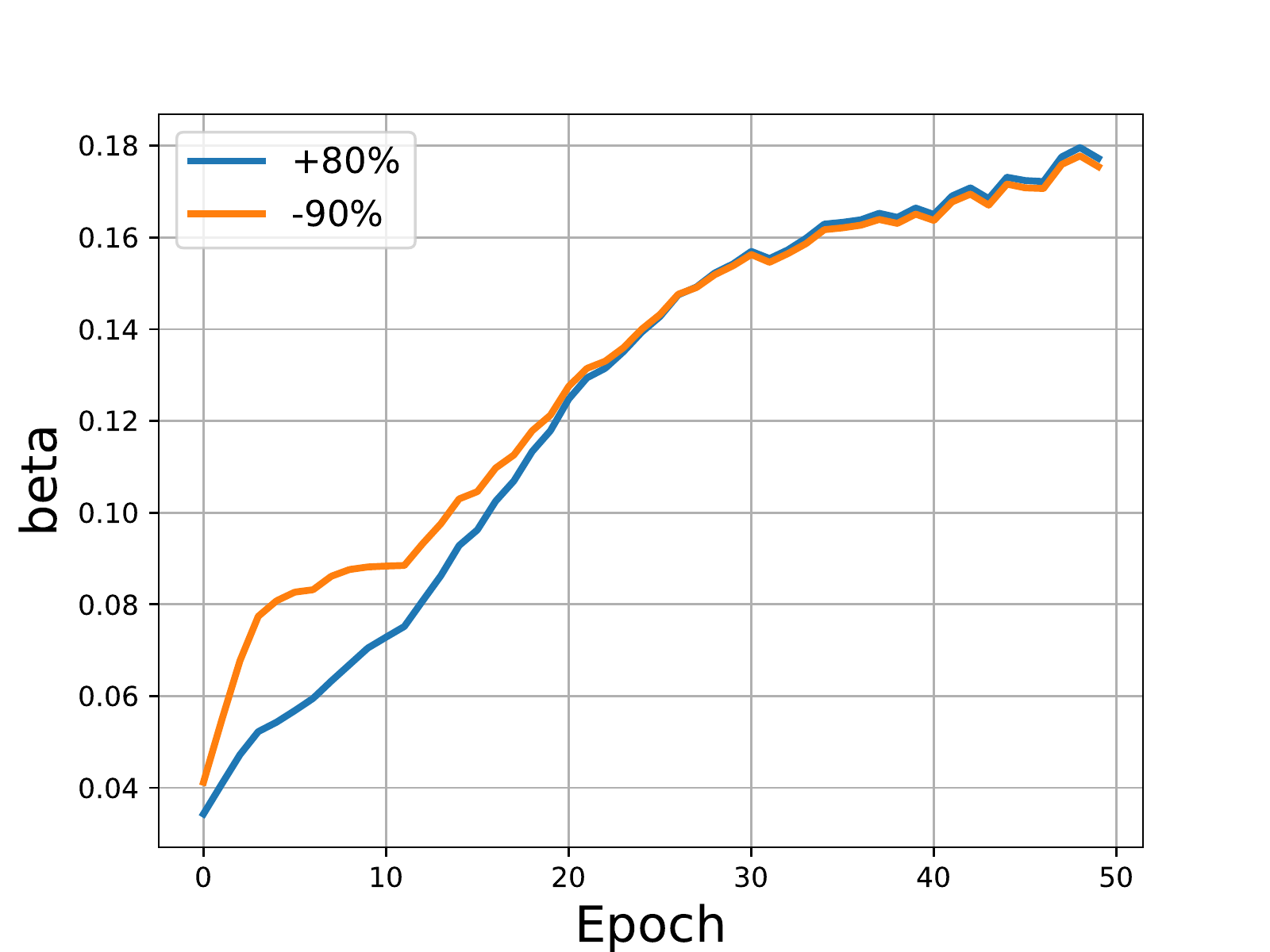}
    \caption{+90\%}
    \label{fig-a}
  \end{subfigure}
  \begin{subfigure}[b!]{.33\columnwidth}
    \centering
    \includegraphics[width=0.8\linewidth]{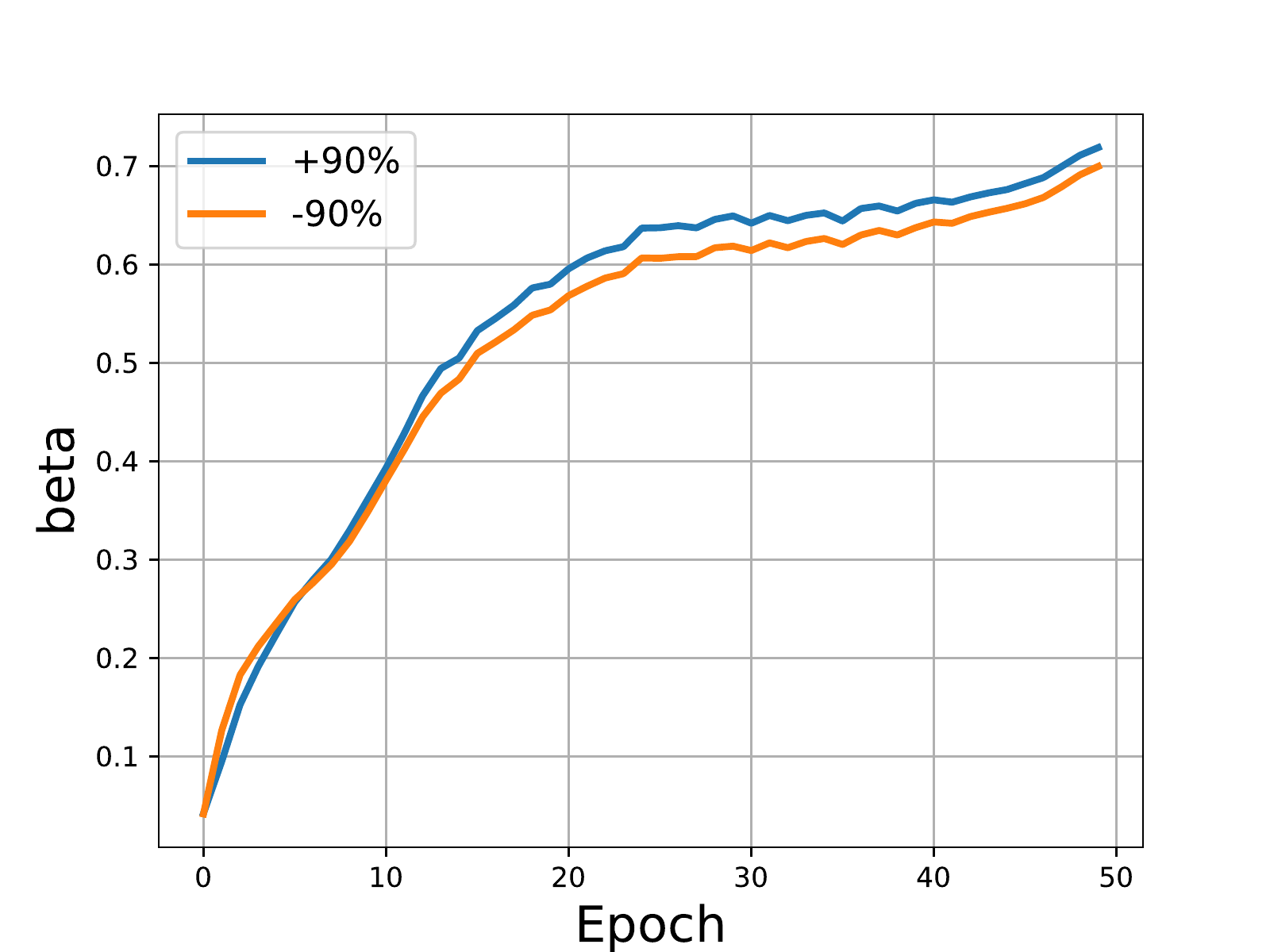}
    \caption{+80\%}
    \label{fig-b}
\end{subfigure}
  \begin{subfigure}[b!]{.33\columnwidth}
    \centering
    \includegraphics[width=0.8\linewidth]{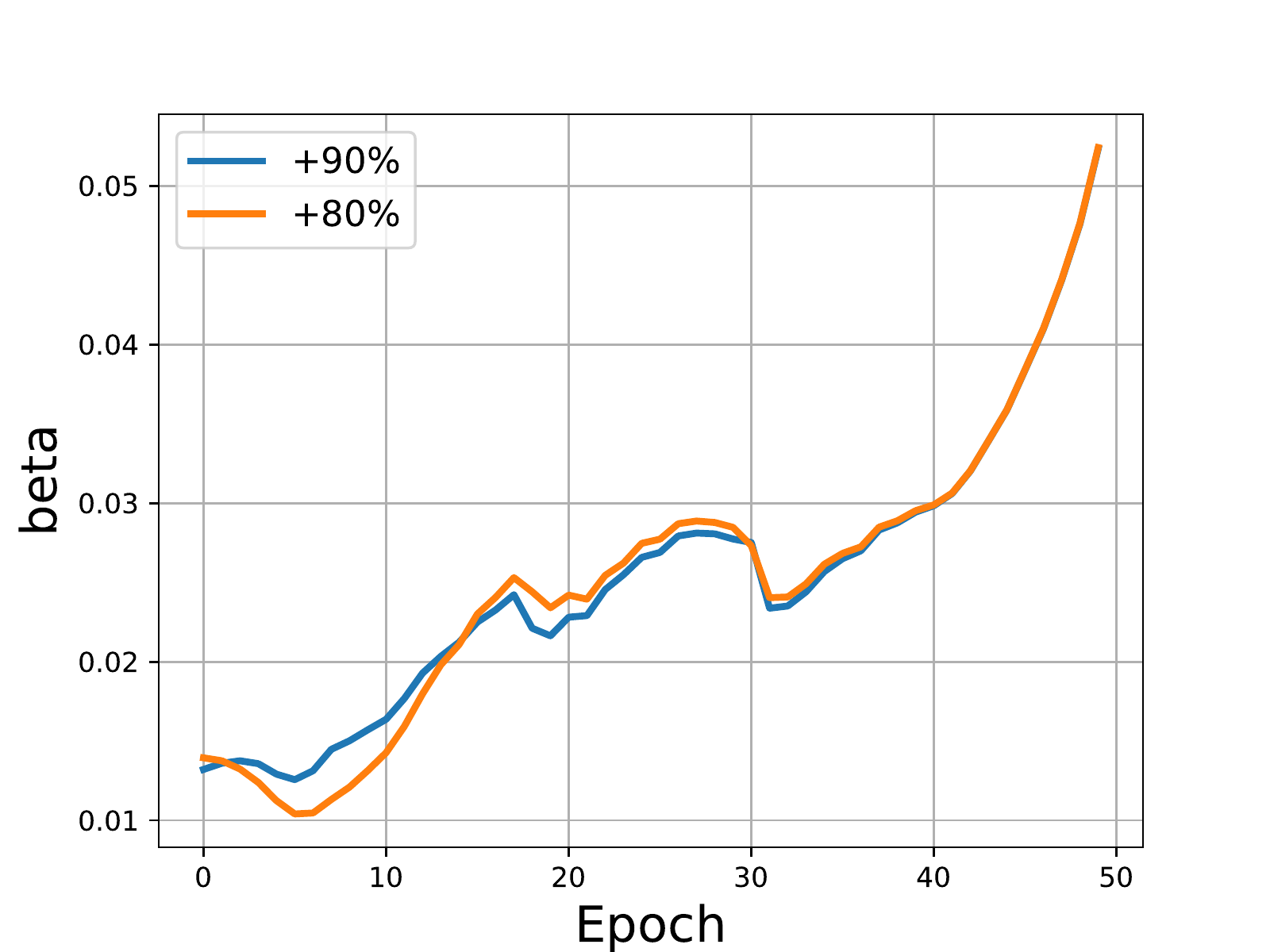}
    \caption{-90\%}
    \label{fig-c}
  \end{subfigure}
  \caption{\fedgp{}\_Auto Colored-MNIST}\vspace{-0.5em}
  \label{fig:auto-beta-visual-color-fedgp}
\end{figure}

\begin{figure}[h!]
  \begin{subfigure}[b!]{.24\columnwidth}
    \centering
    \includegraphics[width=\linewidth]{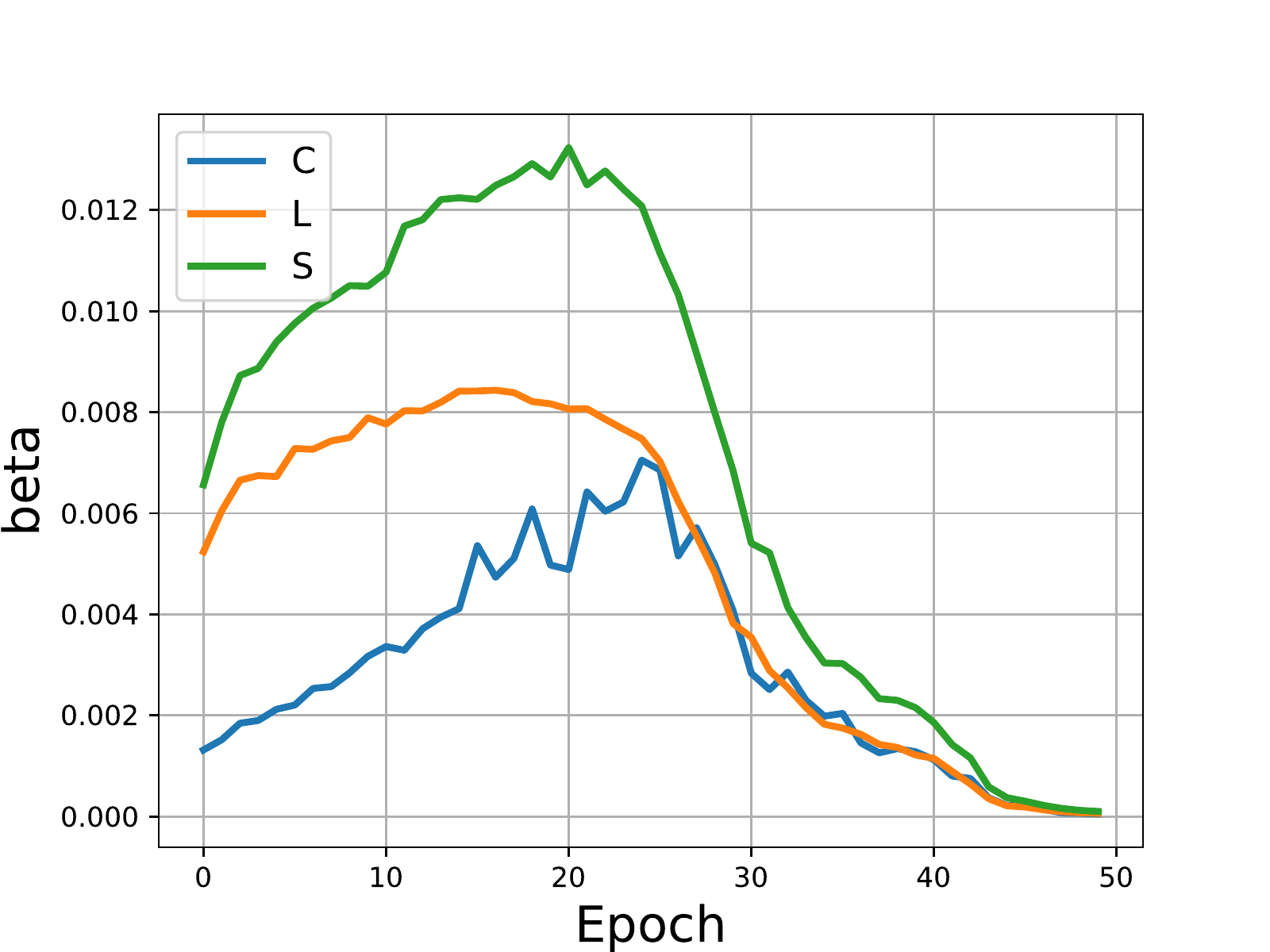}
    \caption{V}
    \label{fig-a}
  \end{subfigure}
  \begin{subfigure}[b!]{.24\columnwidth}
    \centering
    \includegraphics[width=\linewidth]{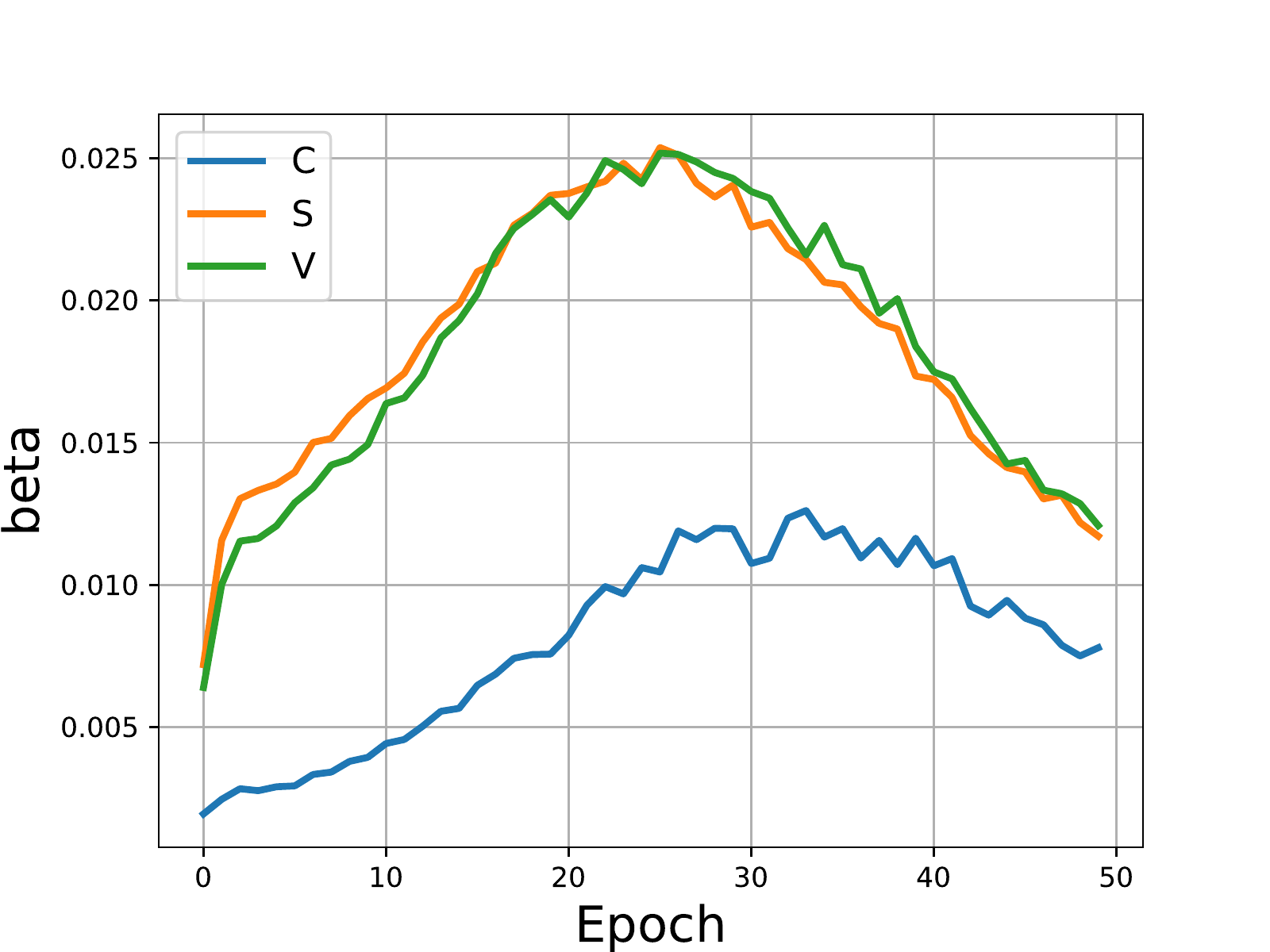}
    \caption{L}
    \label{fig-b}
\end{subfigure}
  \begin{subfigure}[b!]{.24\columnwidth}
    \centering
    \includegraphics[width=\linewidth]{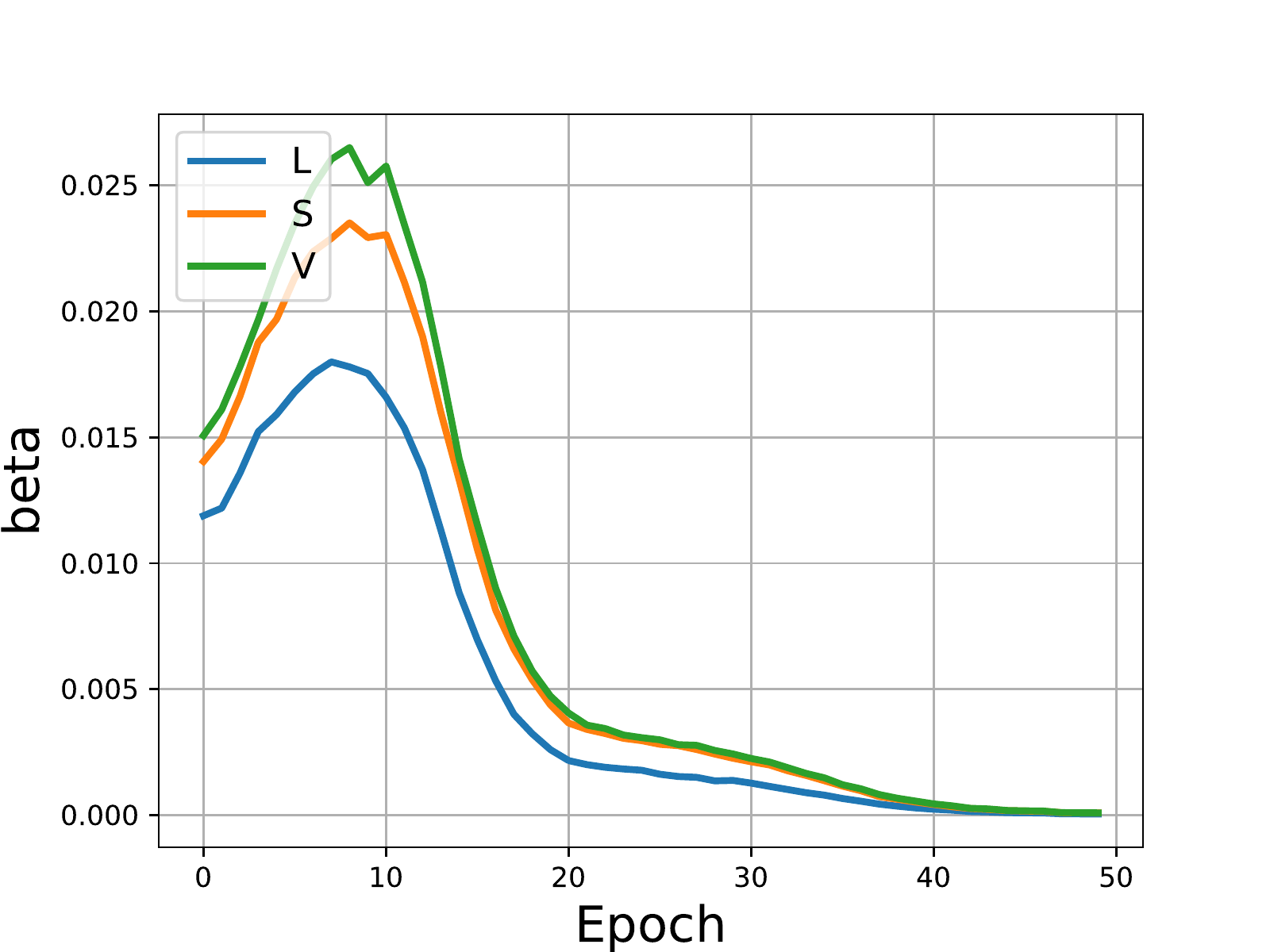}
    \caption{C}
    \label{fig-c}
  \end{subfigure}
  \begin{subfigure}[b!]{.24\columnwidth}
    \centering
    \includegraphics[width=\linewidth]{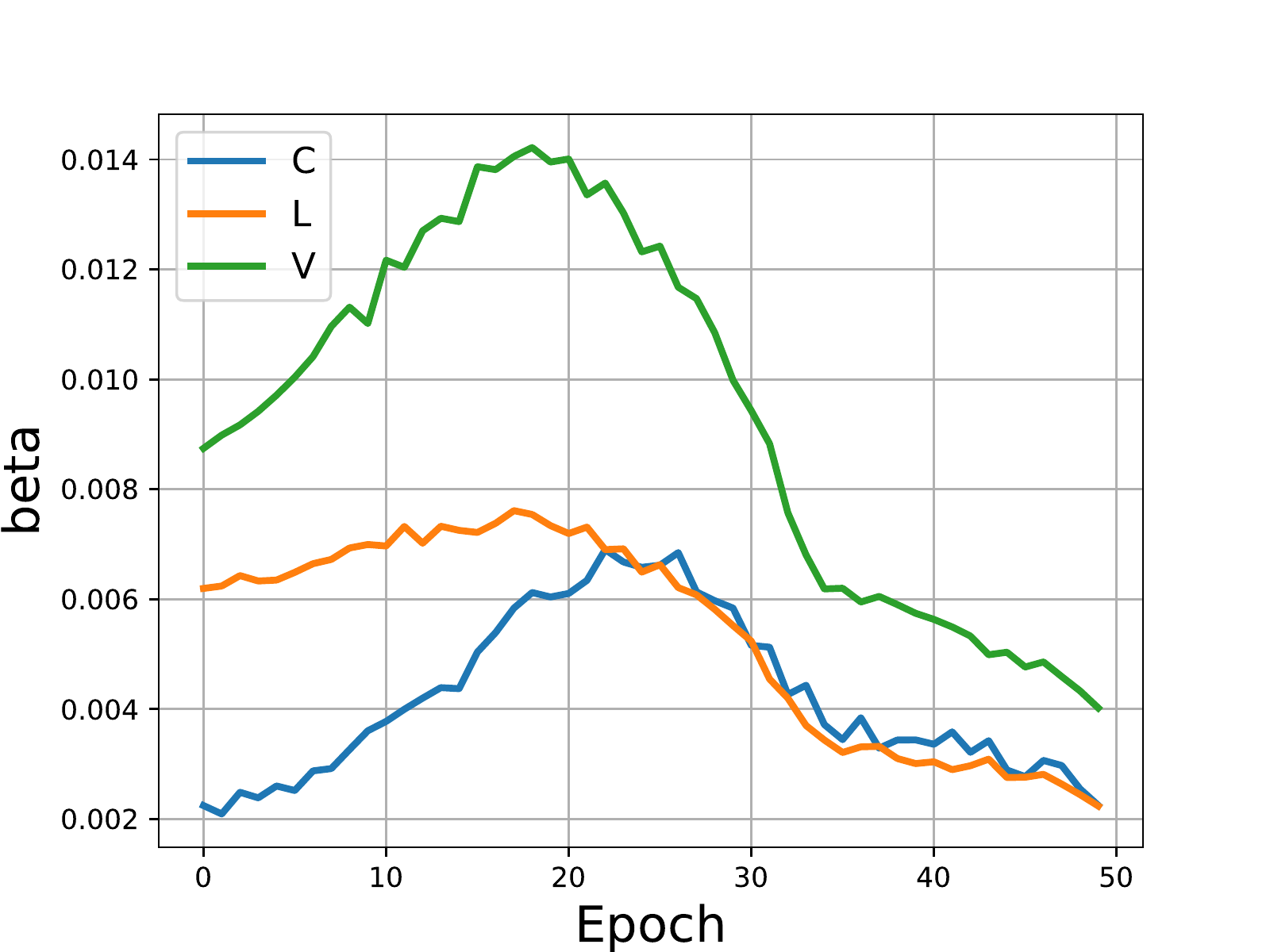}
    \caption{S}
    \label{fig-d}
  \end{subfigure}
  \caption{\fedda{}\_Auto VLCS}\vspace{-0.5em}
  \label{fig:auto-beta-visual-vlcs-fedda}
\end{figure}

\begin{figure}[h!]
  \begin{subfigure}[b!]{.24\columnwidth}
    \centering
    \includegraphics[width=\linewidth]{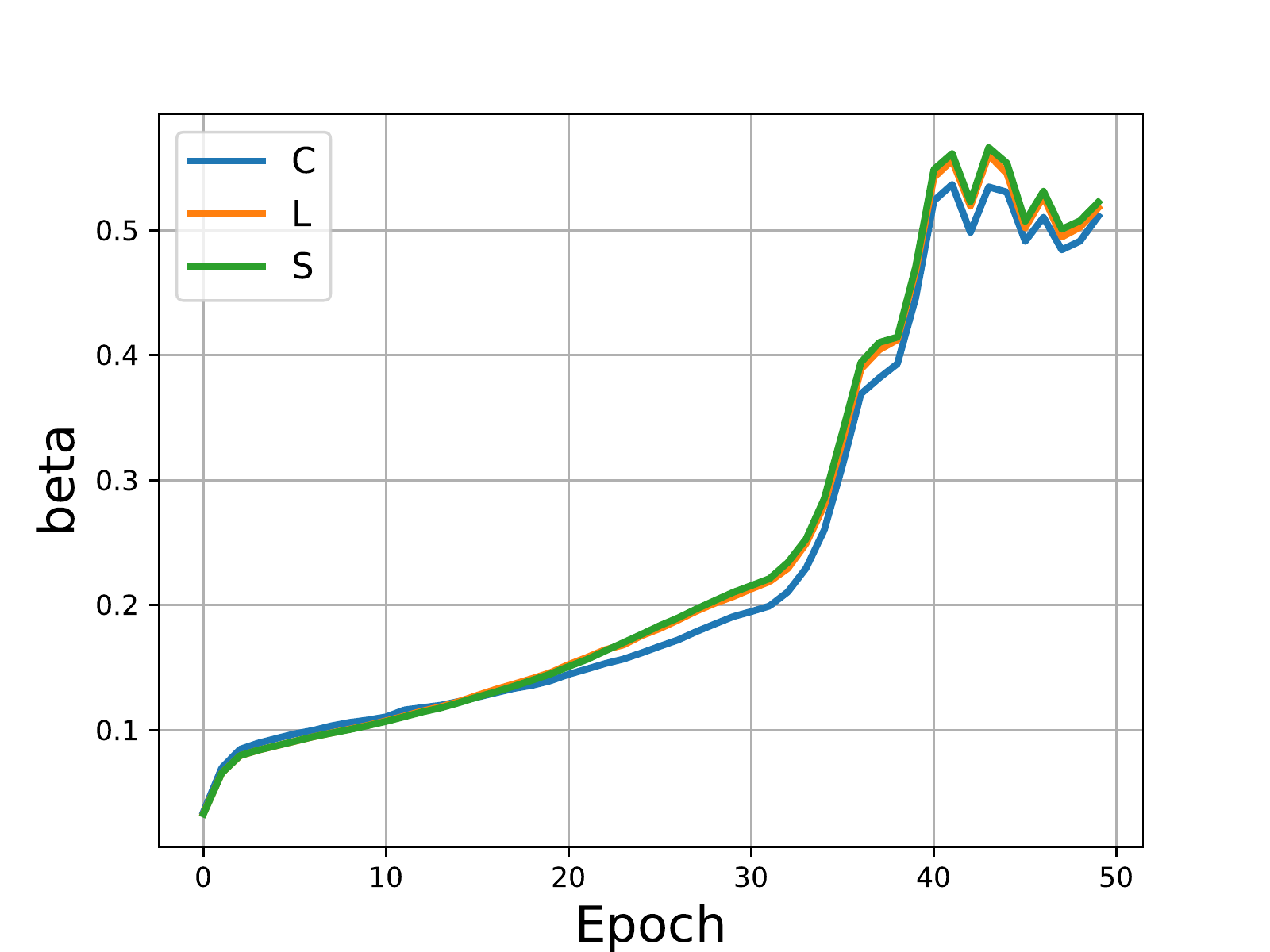}
    \caption{V}
    \label{fig-a}
  \end{subfigure}
  \begin{subfigure}[b!]{.24\columnwidth}
    \centering
    \includegraphics[width=\linewidth]{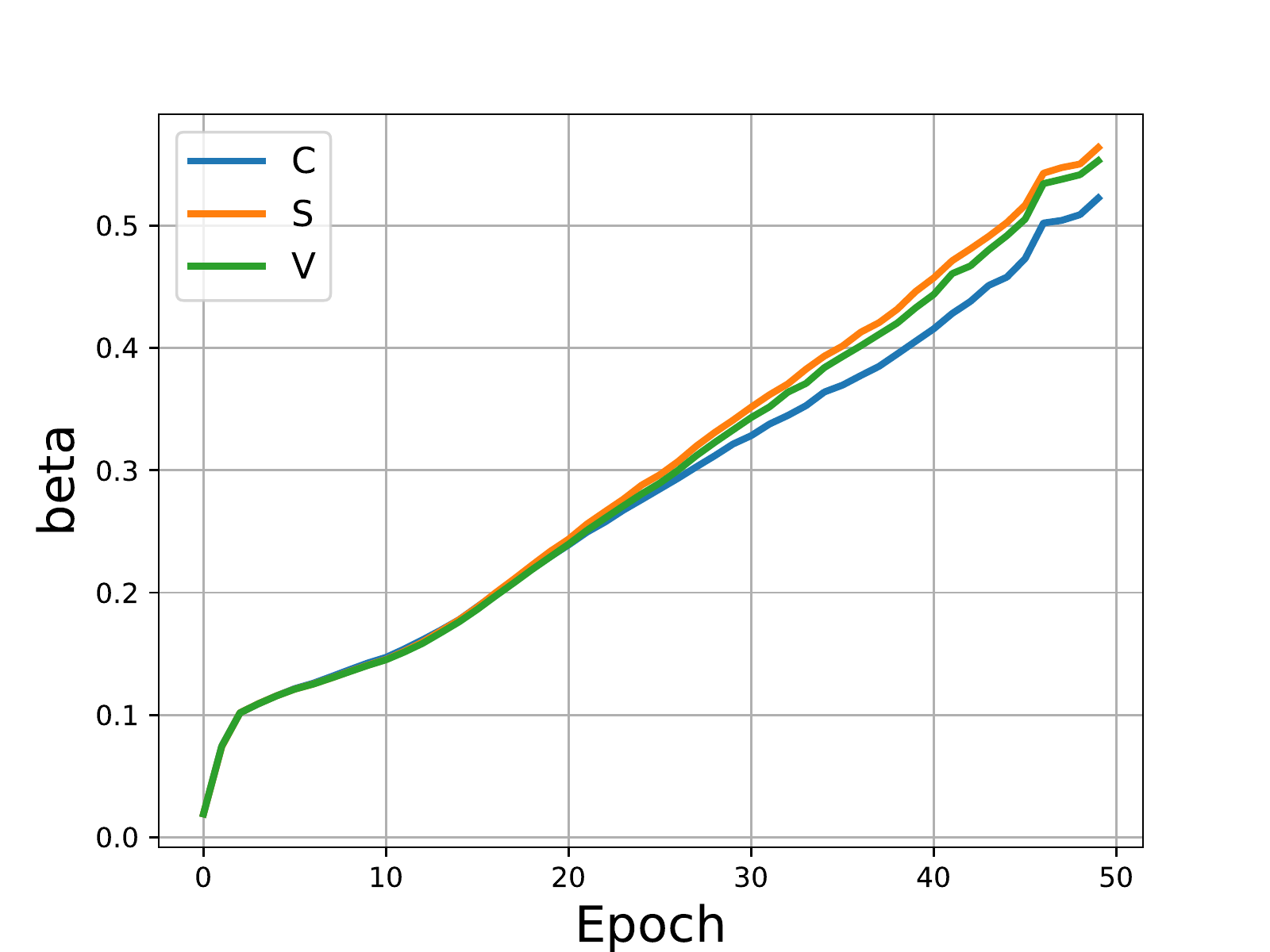}
    \caption{L}
    \label{fig-b}
\end{subfigure}
  \begin{subfigure}[b!]{.24\columnwidth}
    \centering
    \includegraphics[width=\linewidth]{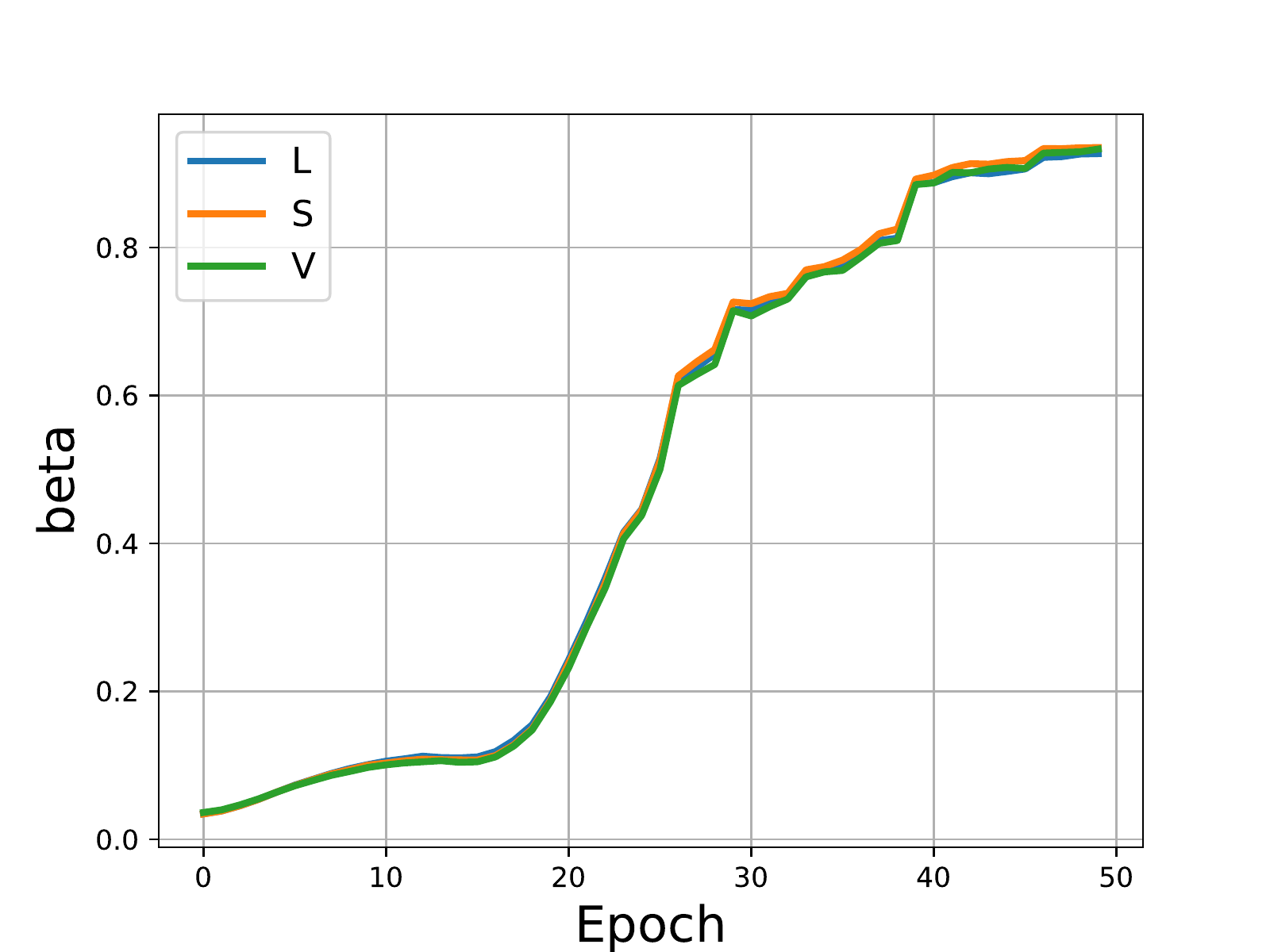}
    \caption{C}
    \label{fig-c}
  \end{subfigure}
  \begin{subfigure}[b!]{.24\columnwidth}
    \centering
    \includegraphics[width=\linewidth]{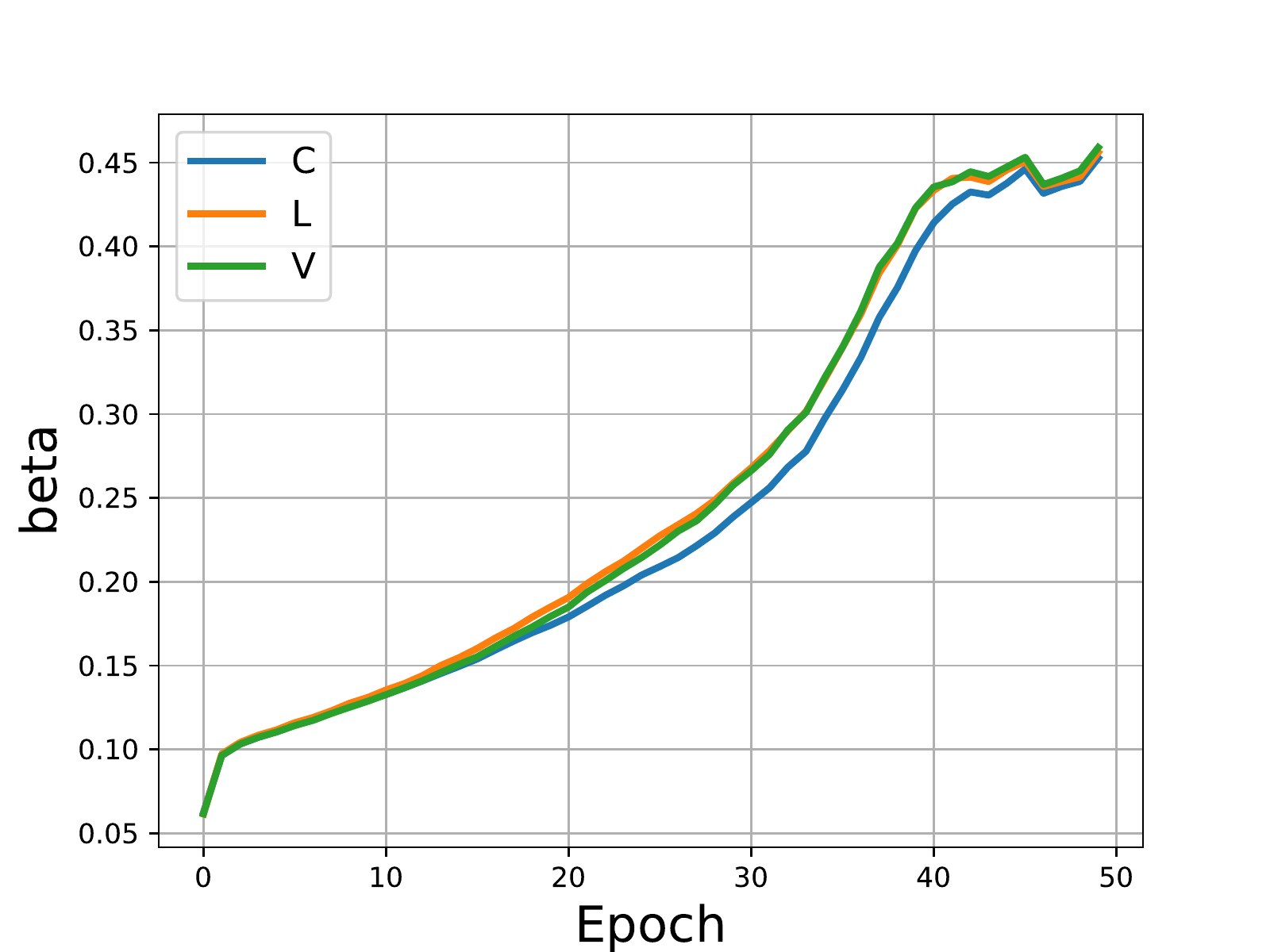}
    \caption{S}
    \label{fig-d}
  \end{subfigure}
  \caption{\fedgp{}\_Auto VLCS}\vspace{-0.5em}
  \label{fig:auto-beta-visual-vlcs-fedgp}
\end{figure}

\begin{figure}[h!]
  \begin{subfigure}[b!]{.24\columnwidth}
    \centering
    \includegraphics[width=\linewidth]{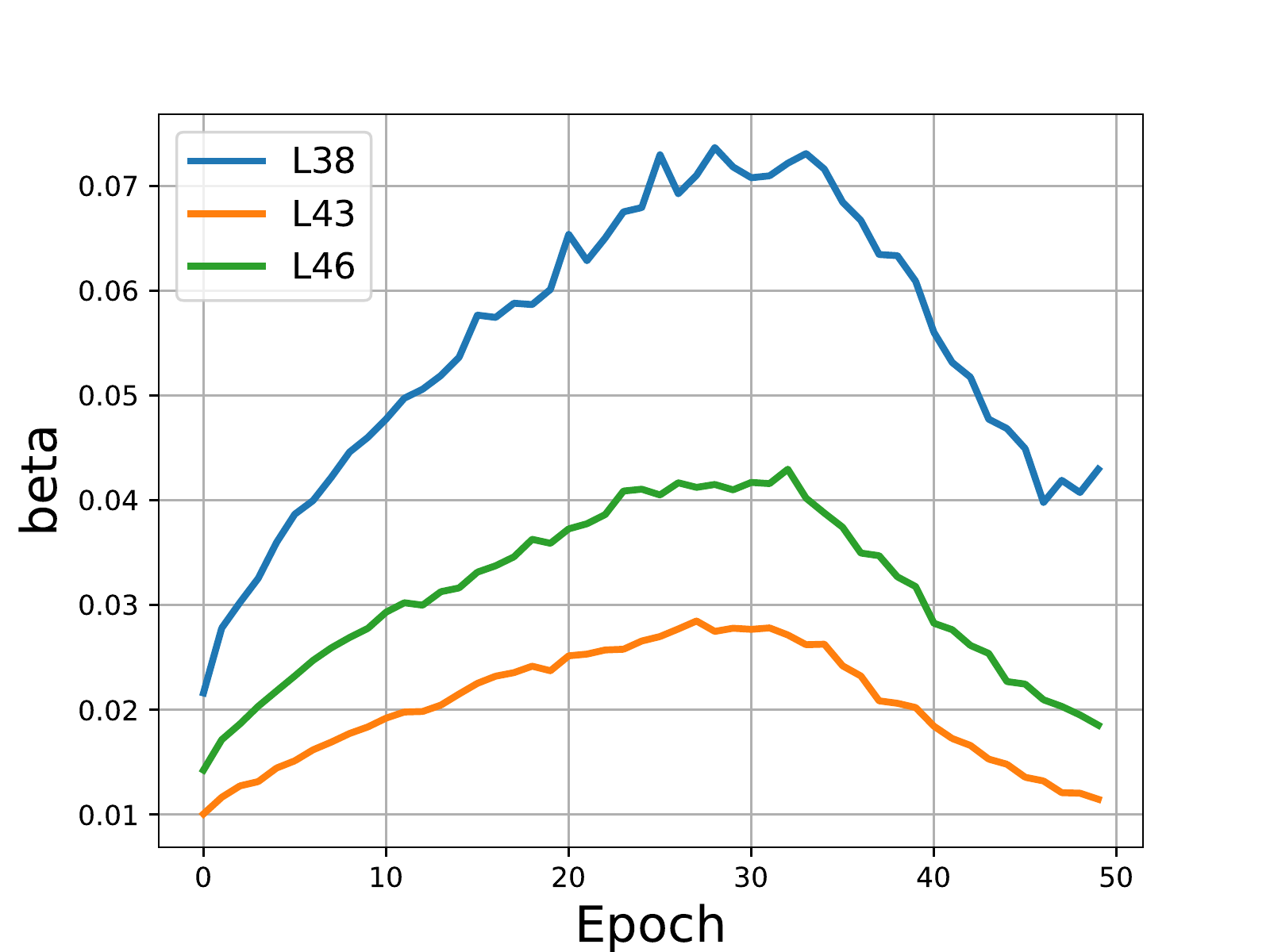}
    \caption{L100}
    \label{fig-a}
  \end{subfigure}
  \begin{subfigure}[b!]{.24\columnwidth}
    \centering
    \includegraphics[width=\linewidth]{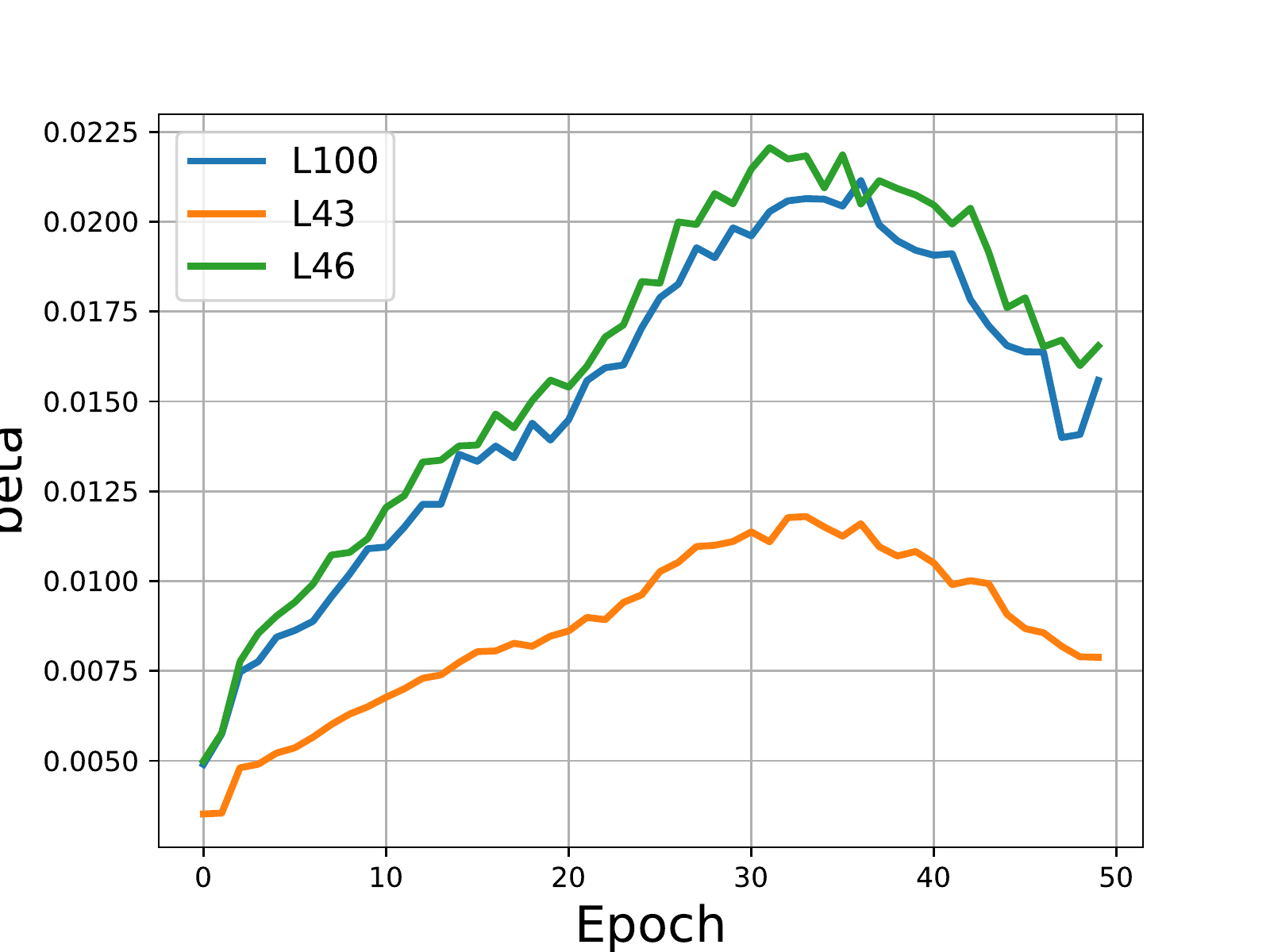}
    \caption{L38}
    \label{fig-b}
\end{subfigure}
  \begin{subfigure}[b!]{.24\columnwidth}
    \centering
    \includegraphics[width=\linewidth]{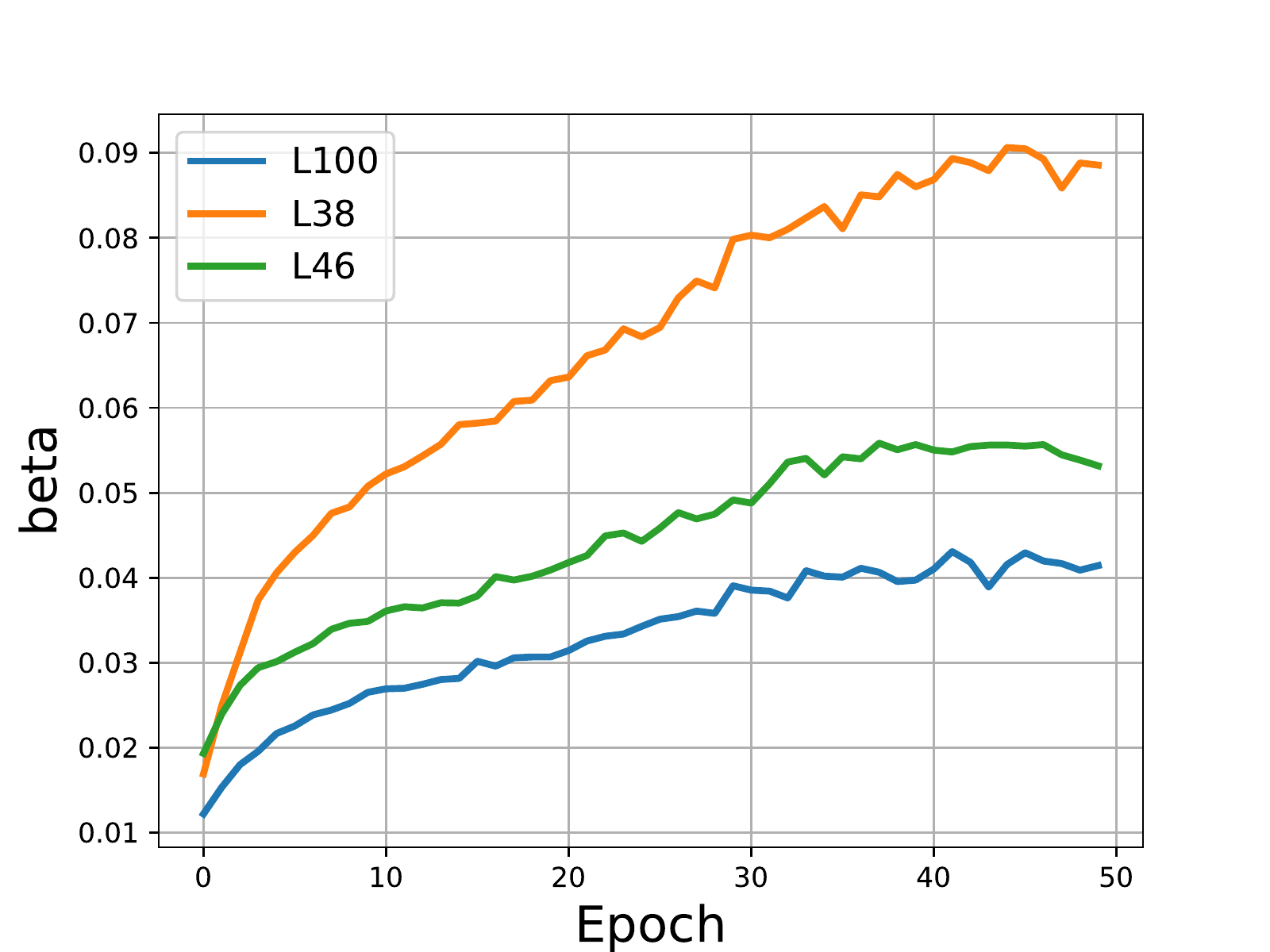}
    \caption{L43}
    \label{fig-c}
  \end{subfigure}
  \begin{subfigure}[b!]{.24\columnwidth}
    \centering
    \includegraphics[width=\linewidth]{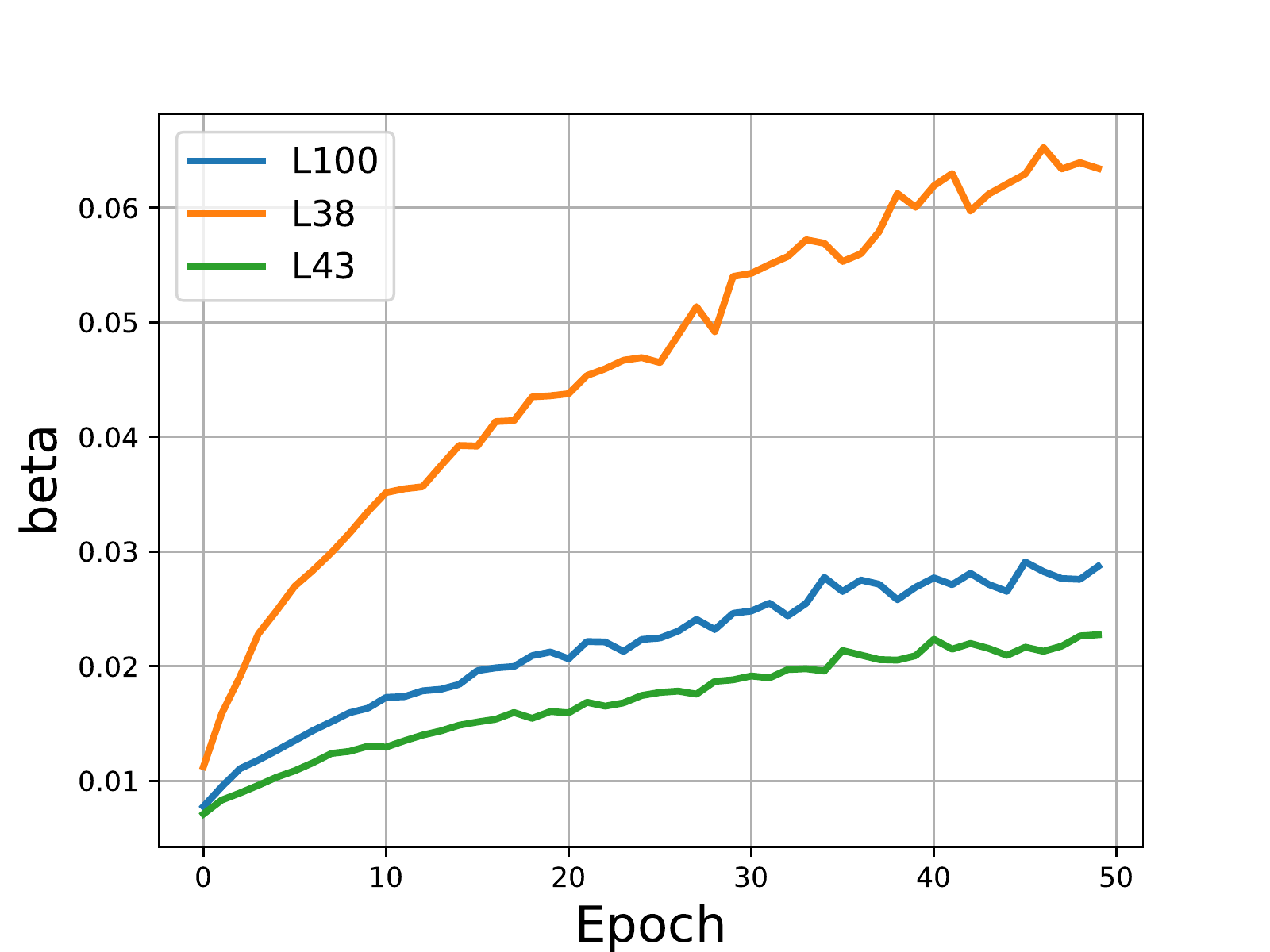}
    \caption{L46}
    \label{fig-d}
  \end{subfigure}
  \caption{\fedda{}\_Auto TerraIncognita}\vspace{-0.5em}
  \label{fig:auto-beta-visual-terra-fedda}
\end{figure}

\begin{figure}[h!]
  \begin{subfigure}[b!]{.24\columnwidth}
    \centering
    \includegraphics[width=\linewidth]{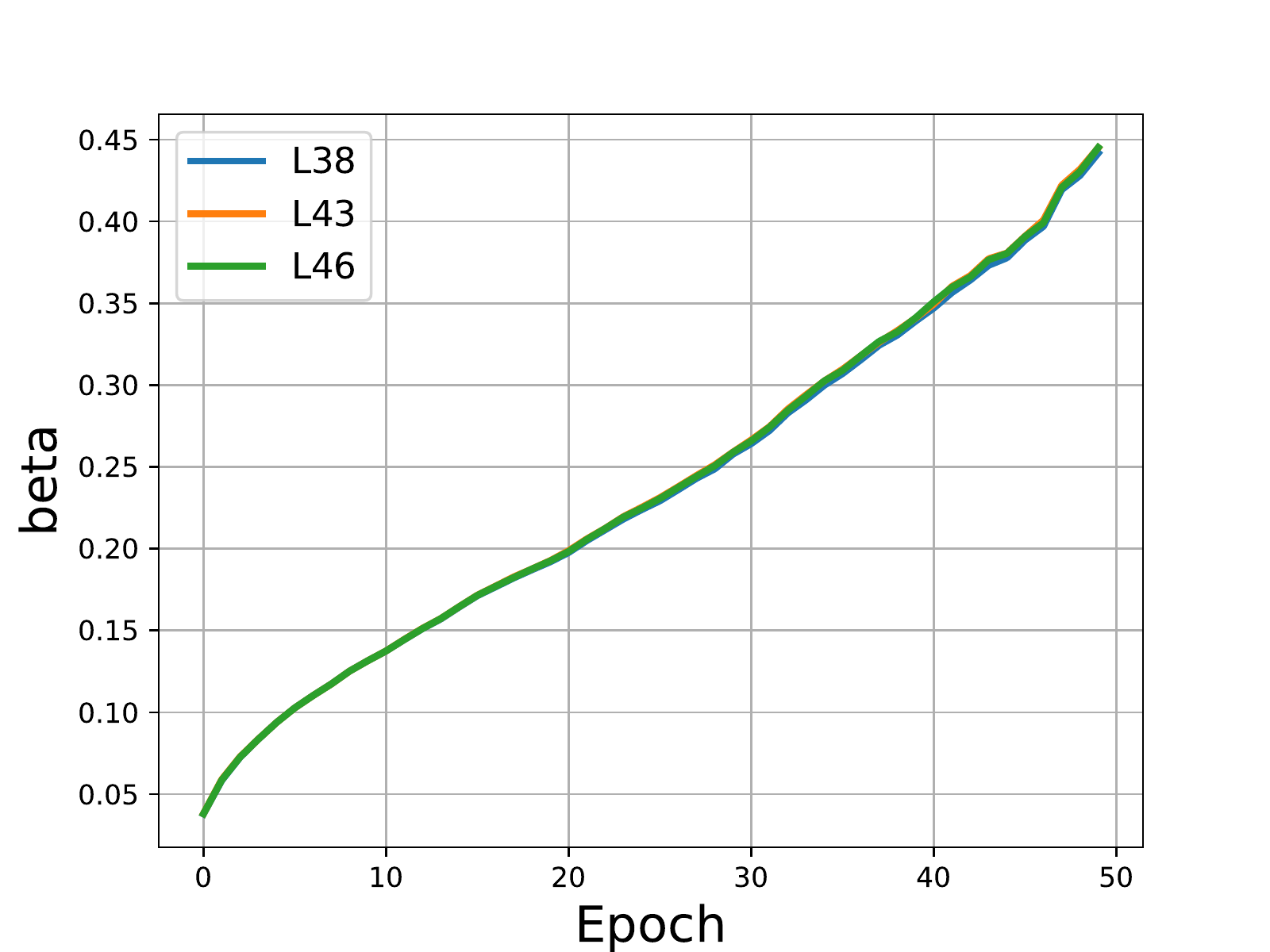}
    \caption{L100}
    \label{fig-a}
  \end{subfigure}
  \begin{subfigure}[b!]{.24\columnwidth}
    \centering
    \includegraphics[width=\linewidth]{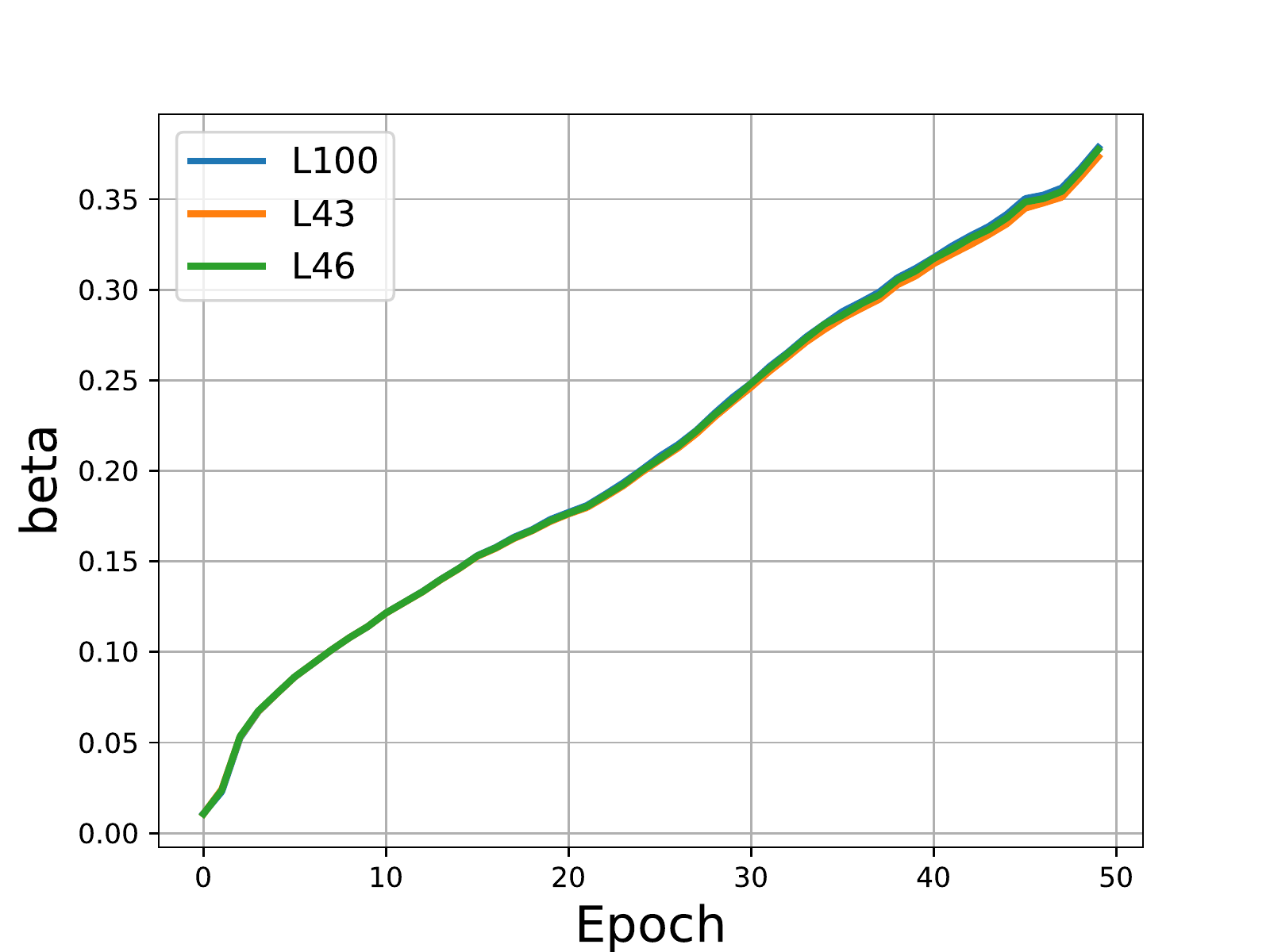}
    \caption{L38}
    \label{fig-b}
\end{subfigure}
  \begin{subfigure}[b!]{.24\columnwidth}
    \centering
    \includegraphics[width=\linewidth]{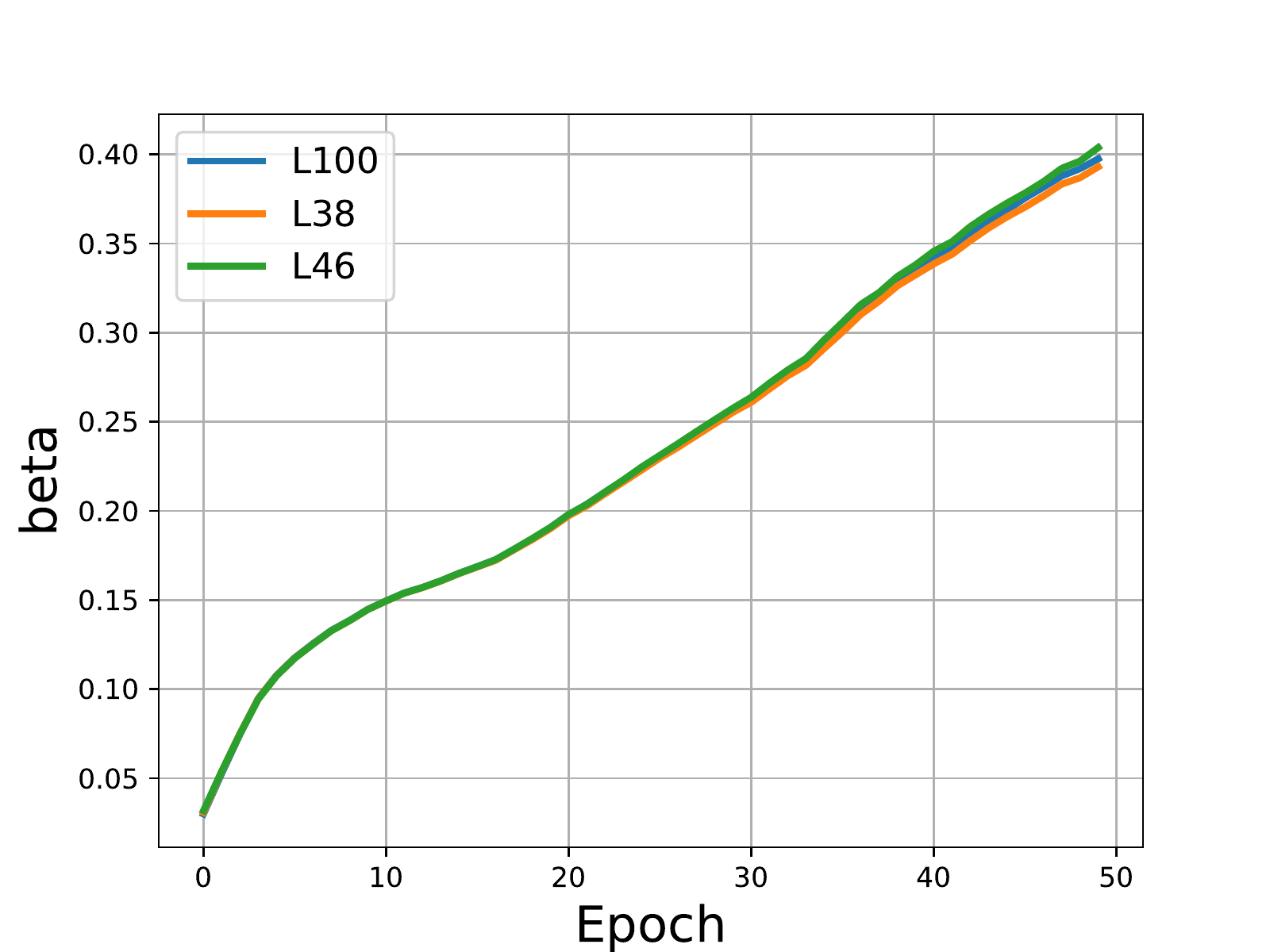}
    \caption{L43}
    \label{fig-c}
  \end{subfigure}
  \begin{subfigure}[b!]{.24\columnwidth}
    \centering
    \includegraphics[width=\linewidth]{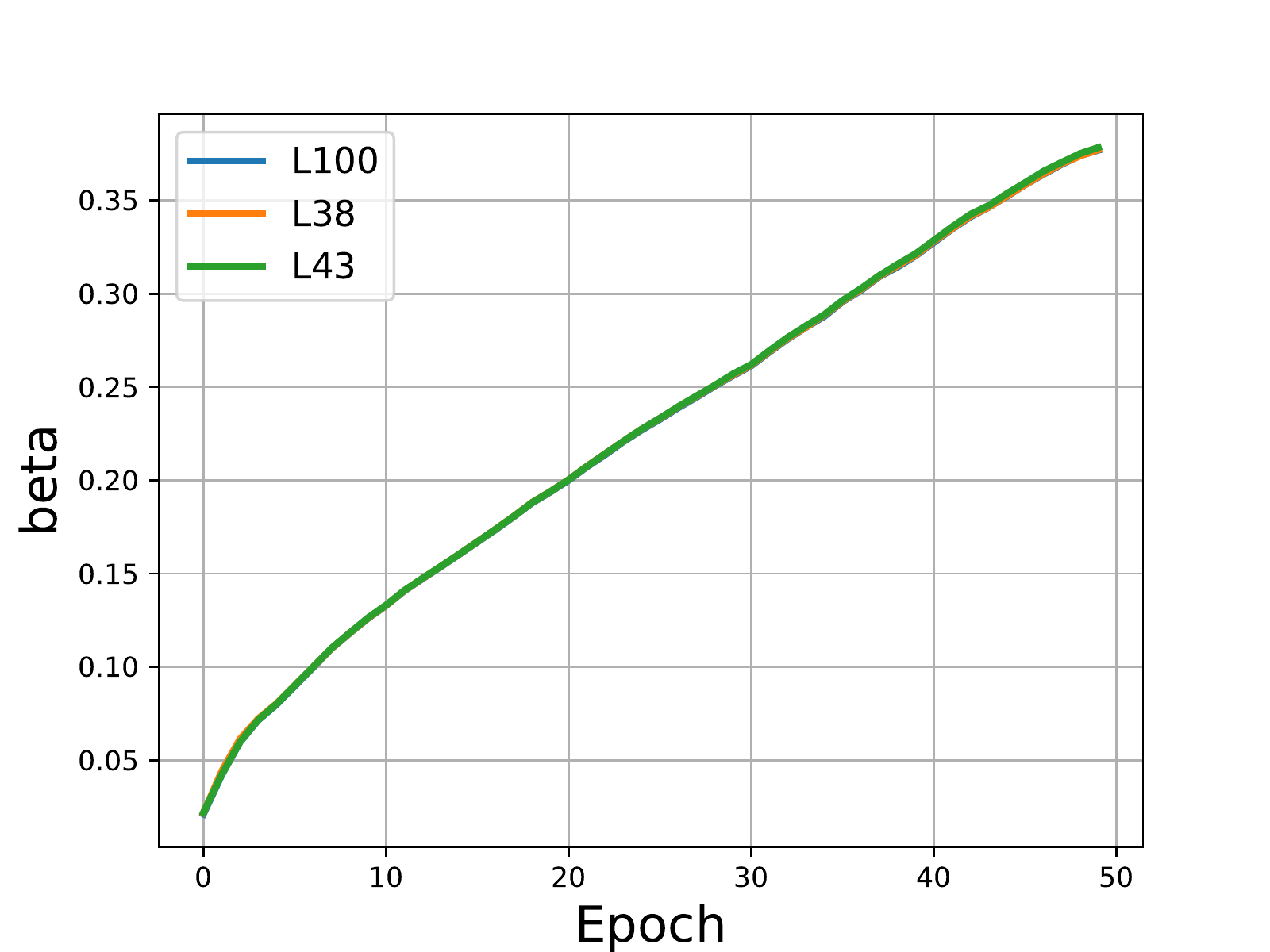}
    \caption{L46}
    \label{fig-d}
  \end{subfigure}
  \caption{\fedgp{}\_Auto TerraIncognita}\vspace{-0.5em}
  \label{fig:auto-beta-visual-terra-fedgp}
\end{figure}

\subsection{Additional Experiment Results on Varying Static Weights ($\beta$)}\label{sec:static-beta-res}
From the Table~\ref{table:beta-cifar10}, ~\ref{table:beta-cmnist-1}, ~\ref{table:beta-cmnist-2}, and ~\ref{table:beta-cmnist-3}, we see \fedgp{} is less sensitive to the choice of the weight parameter $\beta$, enjoying a wider choice range of values, compared with \fedda{}. Additionally, under fixed weight conditions, \fedgp{} generally outperforms \fedda{} in most cases.

\begin{table}[!h]
\centering
\small
\begin{tabular}{lllllll}
\hline & \\[-2ex]
\textbf{} & 0              & 0.2            & 0.4            & 0.6   & 0.8   & 1.0   \\\hline & \\[-2ex]
\fedda{}     & 61.21 & 61.10          & 59.15          & 49.90        & 29.80        & 17.65        \\
\fedgp{}     & 61.21          & \textbf{63.82} & \textbf{64.30} & \bf65.31        & \bf64.80        & \bf38.48    \\\hline  & \\[-2ex]
\end{tabular}
\caption{The effect of $\beta$ on \fedda{} and \fedgp{} on CIFAR-10 dataset with $0.4$ noise level.}
\label{table:beta-cifar10}
\end{table}

\begin{table}[!h]
\centering
\small
\begin{tabular}{lllllll}
\hline & \\[-2ex]
\textbf{-90\%} & 0              & 0.2            & 0.4            & 0.6   & 0.8   & 1.0   \\\hline & \\[-2ex]
\fedda{}          & 84.41 & 73.55          & 54.19          & 35.16 & 31.16 & \bf27.59 \\
\fedgp{}          & 84.41          & \textbf{88.96} & \textbf{89.50} & \bf89.95 & \bf90.03 & 9.85 \\\hline & \\[-2ex]
\end{tabular}
\caption{The effect of $\beta$ on \fedda{} and \fedgp{} on Colored-MNIST -90\% domain.}
\label{table:beta-cmnist-1}
\end{table}

\begin{table}[!h]
\centering
\small
\begin{tabular}{lllllll}
\hline & \\[-2ex]
\textbf{+90\%} & 0    & 0.2   & 0.4   & 0.6 & 0.8 & 1.0 \\\hline & \\[-2ex]
\fedda{}          & 88.96 & 82.37          & 71.32          & 62.03        & 58.77        & 55.23        \\
\fedgp{}          & 88.96          & \textbf{89.98} & \textbf{89.76} & \bf89.84        & \bf90.18        & \bf69.47    \\\hline    & \\[-2ex]
\end{tabular}
\caption{The effect of $\beta$ on \fedda{} and \fedgp{} on Colored-MNIST +90\% domain.}
\label{table:beta-cmnist-2}
\end{table}

\begin{table}[!h]
\centering
\small
\begin{tabular}{lllllll}
\hline & \\[-2ex]
\textbf{+80\%}  & 0    & 0.2   & 0.4   & 0.6 & 0.8 & 1.0 \\\hline & \\[-2ex]
\fedda{}          & 73.66 & \bf75.12          & 70.21          & 65.14        & 61.84        & 61.32        \\
\fedgp{}          & 73.66          & 73.61 & \textbf{74.39} & \bf79.32        & \bf80.11        & \bf76.67        \\\hline & \\[-2ex]
\end{tabular}
\caption{The effect of $\beta$ on \fedda{} and \fedgp{} on Colored-MNIST +80\% domain.}
\label{table:beta-cmnist-3}
\end{table}

\subsection{Semi-Synthetic Experiment Settings, Implementation, and Results}\label{sec:semi-syn-exp}
In this sub-section, we empirically explore the impact of different extents of domain shifts on \fedda{}, \fedgp{}, and their auto-weighted version. To achieve this, we conduct a semi-synthetic experiment, where we manipulate the extent of domain shifts by adding different levels of Gaussian noise (noisy features) and degrees of class imbalance (label shifts). We show that the main impact comes from the shifts between target and source domains instead of the shifts between source domains themselves. 

\paragraph{Datasets and models} We create the semi-synthetic distribution shifts by adding different levels of feature noise and label shifts to Fashion-MNIST~\citep{fashion-mnist} and CIFAR-10~\citep{cifar} datasets, adapting from the Non-IID benchmark~\citep{li2022federated}. For the model, we use a CNN model architecture consisting of two convolutional layers and three fully-connected layers. We set the communication round $R=50$ and the local update epoch to $1$, with $10$ clients (1 target, 9 source clients) in the system. 

\paragraph{Baselines} We compare the following methods: \textbf{Source Only}: we only use the source gradients by averaging. \textbf{Finetune\_Offline}: we perform the same number of 50 epochs of fine-tuning after FedAvg. \textbf{\fedda{}} ($\beta=0.5$): a convex combination with a middle trade-off point of source and target gradients. \textbf{\fedgp{}} ($\beta=0.5$): a middle trade-off point between source and target gradients with gradient projection. \textbf{Target Only}: we only use the target gradient ($\beta=0$). \textbf{Oracle}: a fully supervised training on the labeled target domain serving as the upper bound. 

\paragraph{Implementation} For the experiments on the Fashion-MNIST dataset, we set the source learning rate to be $0.01$ and the target learning rate to $0.05$. For CIFAR-10, we use a $0.005$ source learning rate and a $0.0025$ learning rate. The source batch size is set to $64$ and the target batch size is $16$. We partition the data to clients using the same procedure described in the benchmark~\citep{li2022federated}. We use the cross-entropy loss as the criterion and apply the Adam~\citep{kingma2014adam} optimizer.

\paragraph{Setting 1: Noisy features} We add different levels of Gaussian noise to the target domain to control the source-target domain differences. For the Fashion-MNIST dataset, we add Gaussian noise levels of $std = (0.2, 0.4, 0.6, 0.8)$ to input images of the target client, to create various degrees of shifts between source and target domains. The task is to predict 10 classes on both source and target clients. For the CIFAR-10 dataset, we use the same noise levels, and the task is to predict 4 classes on both source and target clients. We use 100 labeled target samples for Fashion-MNIST and 10\% of the labeled target data for the CIFAR-10 dataset. 

\paragraph{Setting 2: Label shifts} We split the Fashion-MNIST into two sets with 3 and 7 classes, respectively, denoted as $D_1$ and $D_2$. A variable $\eta \in [0, 0.5]$ is used to control the difference between source and target clients by defining $D_S$ = $\eta$ portion from $D_1$ and $(1-\eta)$ portion from $D_2$, $D_T$ = $(1-\eta)$ portion from $D_1$ and $\eta$ portion from $D_2$. When $\eta = 0.5$, there is no distribution shift, and when $\eta \rightarrow 0$, the shifts caused by label shifts become more severe. We use 15\% labeled target samples for the target client. We test on cases with $\eta = [0.45, 0.30, 0.15, 0.10, 0.05, 0.00]$.

\paragraph{Auto-weighted methods and \fedgp{} maintain a better trade-off between bias and variance} Table~\ref{table:gau-noise-target-acc} and Table~\ref{table:class-imbalance-target-acc} display the performance trends of compared methods versus the change of source-target domain shifts. In general, when the source-target domain difference grows bigger, \fedda{}, Finetune\_Offline, and FedAvg degrade more severely compared with auto-weighted methods, \fedgp{} and Target Only. We find that auto-weighted methods and \fedgp{} outperform other baselines in most cases, showing a good ability to balance bias and variances under various conditions and being less sensitive to changing shifts. For the label shift cases, the target variance decreases as the domain shift grows bigger (easier to predict with fewer classes). Therefore, auto-weighted methods, \fedgp{} as well as Target Only surprisingly achieve higher performance with significant shift cases. In addition, auto-weighted \fedda{} manages to achieve a significant improvement compared with the fixed weight \fedda{}, with a competitive performance compared with \fedgp{}\_Auto, while \fedgp{}\_Auto generally has the best accuracy compared with other methods, which coincides with the synthetic experiment results.

\paragraph{Connection with our theoretical insights} Interestingly, we see that when the shift is relatively small ($\eta=0.45$ and 0 noise level for Fashion-MNIST), FedAvg and \fedda{} both outperform \fedgp{}. Compared with what we have observed from our theory (Figure~\ref{fig:error}), adding increasing levels of noise can be regarded as going from left to right on the x-axis and when the shifts are small, we probably will get into an area where \fedda{} is better. When increasing the label shifts, we are increasing the shifts and decreasing the variances simultaneously, we go diagonally from the top-left to the lower-right in Figure~\ref{fig:error}, where we expect FedAvg is the best when we start from a small domain difference.

\paragraph{Tables of noisy features and label shifts experiments} Table~\ref{table:gau-noise-target-acc} and Table~\ref{table:class-imbalance-target-acc} contain the full results. We see that \fedgp{}, \fedda{}\_Auto and \fedgp{}\_Auto methods obtain the best accuracy under various conditions; \fedgp{}\_Auto outperforms the other two in most cases, which confirms the effectiveness of our weight selection methods suggested by the theory.
\begin{table}[thb]
\small
\centering
\begin{tabular}{@{}lccccccccc@{}}
\hline & \\[-2ex]
\multicolumn{1}{l}{} 
&\multicolumn{4}{c}{\bf Fashion-MNIST}
&\multicolumn{4}{c}{\bf CIFAR-10}
\\
Target noise   & 0 & 0.2 & 0.4 & 0.6 & 0.8 & 0.2 & 0.4 & 0.6 & 0.8
\\ \hline& \\[-2ex]
Source Only    & 83.94 & 25.49 & 18.55 & 16.71 & 14.99 & 20.48          & 17.61          & 16.44          & 16.27                     \\
Finetune\_Offline     & 81.39 &  48.26 & 40.15  &  36.64 &  33.71    & 66.31 & 56.80 & 52.10 & 49.37\\
\fedda{}\_0.5    & \bf86.41 & 69.73 & 58.6 & 50.13 & 45.51 & 62.25          & 54.67          & 49.77          & 47.08 \\
\fedgp{}\_0.5 & 76.33 & 75.09 & 71.09 & \bf68.01 & 62.22 & 66.40          & \textbf{65.28} & \textbf{63.29} & \textbf{61.59}  \\
\fedgp{}\_1  & 79.40 & 77.03 & 71.67 & 63.71 & 54.18 & 21.46          & 20.75          & 19.31          & 18.26   \\
\fedda{}\_Auto & 78.25 & \bf77.03 & \bf72.68 & 67.69 & 62.85 & 65.83 & 64.09 & 62.29 & 60.39\\
\fedgp{}\_Auto & 76.19 & 75.09 & 71.46 & 67.53 & \bf62.93 & \bf67.02 & \bf65.26 & 63.12 & 61.30\\
Target Only & 74.00 & 70.59 & 66.03 & 61.26 & 57.82 & 60.69          & 60.25          & 59.38          & 59.03    \\
\hline
Oracle & 82.00 & 82.53 & 81.20 & 75.60 & 72.60 & 73.61          & 70.12          & 69.22          & 68.50\\
\hline& \\[-2ex]
\end{tabular}
\caption{\textbf{Target domain test accuracy} (\%) by adding feature noise to Fashion-MNIST and CIFAR-10 datasets using different aggregation rules.}
\label{table:gau-noise-target-acc}
\end{table}

\begin{table}[thb]
\small
\centering
\begin{tabular}{lcccccc}
\hline & \\[-2ex]
   $\eta$                 & 0.45           & 0.3            & 0.15           & 0.1            & 0.05           & 0              \\
\hline & \\[-2ex]
Source Only              & 83.97 & 79.71          & 69.15          & 59.90           & 52.51          & 0.00              \\
Finetune\_Offline & 79.84 & 80.21 & 83.13 & 85.43 & 89.63 & 33.25\\
FedDA\_0.5  & 82.44          & 80.85          & 77.50           & 76.51          & 68.26          & 59.56          \\
FedGP\_0.5          & 82.97     & 83.24 & 85.97 & 88.72 & 91.89 & \textbf{98.71} \\
FedGP\_1            & 77.41          & 73.12          & 62.54          & 53.56          & 27.62          & 0.00              \\
\fedda{}\_Auto & 83.94 & 83.91 & 86.50 & 89.45 & 91.87 & 98.51\\
\fedgp{}\_Auto & \bf84.68 & \bf84.14 & \bf86.72 & \bf89.58 & \bf92.03 & 98.53 \\
Target only         & 81.05          & 82.44          & 84.00             & 88.02          & 89.80           & 98.32          \\
\hline
Oracle              & 87.68          & 88.06          & 90.56          & 91.9           & 93.46          & 98.73       \\\hline  & \\[-2ex]
\end{tabular}
\caption{\textbf{Target domain test accuracy} (\%) by adding feature noise to the Fashion-MNIST dataset using different aggregation rules.}
\label{table:class-imbalance-target-acc}
\end{table}

\paragraph{Impact of extent of shifts between source clients} In addition to the source-target domain differences, we also experiment with different degrees of shifts within source clients. To control the extent of shifts between source clients, we use a target noise $= 0.4$ with $[3,5,7,9]$ labels available. From the results, we discover the shifts between source clients themselves have less impact on the target domain performance. For example, the 3-label case (a bigger shift) generally outperforms the 5-label one with a smaller shift. Therefore, we argue that the source-target shift serves as the main influencing factor for the FDA problem. 

\begin{table}[htb]
\centering
\small
\begin{tabular}{lcccc}
\hline & \\[-2ex]
 Number of labels           & 9             & 7              & 3              & 5                \\
\hline & \\[-2ex]
Source Only       & 11.27         & 18.81          & 25.54          & 34.11            \\
Finetune\_Offline   & 37.79     & 64.75        & 68.69          & 65.41
\\ 
FedDA\_0.5   & 55.01         & 57.9           & 54.16          & 61.62            \\
FedGP\_0.5   & \textbf{68.50} & \textbf{68.64} & \textbf{64.43} & \textbf{66.59}   \\
FedGP\_1     & 65.27         & 59.07          & 26.37          & 39.86            \\
Target\_Only & 63.06         & 65.76          & 61.8           & 63.65            \\
\hline
Oracle       & 81.2          & 81.2           & 81.2           & 81.2      \\  
\hline & \\[-2ex]
\end{tabular}
\caption{\textbf{Target domain test accuracy} (\%) on label shifts with [3,5,7,9] labels available on Fashion-MNIST dataset with target noise = 0.4 and 100 target labeled samples.}
\end{table}

\subsection{Additional Ablation Study Results}\label{sec:more-ablation-res}

\paragraph{The effect of target gradient variances} We conduct experiments with increasing numbers of target samples (decreased target variances) with varying noise levels $[0.2, 0.4, 0.6]$ on Fashion-MNIST and CIFAR-10 datasets with 10 clients in the system. We compare two aggregation rules \fedgp{} and \fedda{}, as well as their auto-weighted versions. The results are shown in Table~\ref{table:variance-target-acc-fixed} (fixed \fedda{} and \fedgp{}) and Table~\ref{table:variance-target-acc-auto} (auto-weighted \fedda{} and \fedgp{}). When the number of available target samples increases, the target performance also improves. For the static weights, we discover that \fedgp{} can predict quite well even with a small number of target samples, especially when the target variance is comparatively small. For auto-weighted \fedda{} and \fedgp{}, we find they usually have higher accuracy compared with \fedgp{}, which further confirms our auto-weighted scheme is effective in practice. Also, we observe that sometimes \fedda{}\_Auto performs better than \fedgp{}\_Auto (e.g. on the Fashion-MNIST dataset) and sometimes vice versa (e.g. on the CIFAR-10 dataset). We hypothesize that since the estimation of variances for \fedgp{} is an approximation instead of the equal sign, it is possible that \fedda{}\_Auto can outperform \fedgp{}\_Auto in some cases because of more accurate estimations of the auto weights $\beta$. Also, we notice auto-weighted scheme seems to improve the performance more when the target variance is smaller with more available samples and the source-target shifts are relatively small. In addition, we compare our methods with FedAvg, using different levels of data scarcity. We show our methods consistently outperform FedAvg across all cases, which further confirms the effectiveness of our proposed methods.

\begin{table}[!ht]
\small
\centering
\fontsize{7.5}{9}\selectfont
\begin{tabular}{lcccccccccc}
\hline & \\[-2ex]
Noise level& & \multicolumn{2}{c}{0.2} & \multicolumn{2}{c}{0.4} & \multicolumn{2}{c}{0.6}\\
& &\multicolumn{1}{l}{FedAvg} & \multicolumn{1}{l}{\fedda{}} & \multicolumn{1}{l}{\fedgp{}} & \multicolumn{1}{l}{FedAvg} & \multicolumn{1}{l}{\fedda{}} & \multicolumn{1}{l}{\fedgp{}} & \multicolumn{1}{l}{FedAvg} & \multicolumn{1}{l}{\fedda{}} & \multicolumn{1}{l}{\fedgp{}} \\
\hline & \\[-2ex]
\multirow{4}{*}{\bf Fashion-MNIST} & 100                  & 75.98 & 69.73                                  & 75.09              &59.36 & 58.60                                   & \bf 71.09        &          49.94        & 50.13                                  & \textbf{68.01}                 \\
& 200              & 76.50    & 72.07                                  & 74.21         &60.20                 & 58.59                                  & \textbf{70.93}           & 48.56      & 52.67                                  & \textbf{70.31}                 \\
& 500         & 75.55         & 76.59                                  & \textbf{78.41}         & 58.74        & 65.34                                  & \textbf{74.07}            & 47.90     & 54.97                                  & \textbf{70.52}                 \\
&1000      & 76.12           & 77.92                                  & \textbf{78.68}      & 62.33           & 68.26                                  & \textbf{75.17}        & 50.81         & 59.16                                  & \textbf{71.63}                 \\  \hline & \\[-2ex]
\multirow{3}{*}{\bf CIFAR-10} & 5\%         &24.50         & 62.24                     & \textbf{64.21}     & 21.25       & 46.89                     & \textbf{63.57}    & 19.42        & 47.56                     & \textbf{61.39}            \\
& 10\%           & 22.86      & 62.25                     & \textbf{65.92}    & 22.35        & 54.67                     & \textbf{65.39}     & 18.60       & 49.77                     & \textbf{63.67}            \\
& 15\%             & 23.20    & 59.16                     & \textbf{65.97}            &   23.25      & 56.93                     & \textbf{65.11}        & 17.88    & 51.83                     & \textbf{63.73} \\  \hline & \\[-2ex]
\end{tabular}
\caption{\textbf{Target domain test accuracy} (\%) by adding feature noise=0.2, 0.4, 0.6 on the Fashion-MNIST and CIFAR-10 datasets with different numbers of available target samples using \emph{fixed} weights, in comparison with FedAvg. We see our methods generally are more robust than FedAvg with significant improvements.}
\label{table:variance-target-acc-fixed}
\end{table}

\begin{table}[!h]
\centering
\fontsize{7.5}{8}\selectfont
\begin{tabular}{lccccccc}
\hline & \\[-2ex]
Noise level& & \multicolumn{2}{c}{0.2} & \multicolumn{2}{c}{0.4} & \multicolumn{2}{c}{0.6}\\
& & \multicolumn{1}{l}{\fedda{}\_Auto} & \multicolumn{1}{l}{\fedgp{}\_Auto} & \multicolumn{1}{l}{\fedda{}\_Auto} & \multicolumn{1}{l}{\fedgp{}\_Auto} & \multicolumn{1}{l}{\fedda{}\_Auto} & \multicolumn{1}{l}{\fedgp{}\_Auto} \\
\hline & \\[-2ex]
\multirow{4}{*}{\bf Fashion-MNIST} & 100                  & \bf79.04                     & 75.45                     & \bf72.21                     & 71.93                     & 66.16                     & \bf67.47  \\
& 200                  & \bf79.74 &	76.74 &	\bf74.30&72.96&	\bf69.27&	69.04 \\
& 500                  & \bf79.48                     & 78.65                     & \bf75.21                     & 74.55                     & \bf71.40                      & 70.72          \\
&1000                 & \bf80.23                     & 79.91                     & \bf76.75                     & 76.35                     & \bf73.16                     & \bf73.16   \\  \hline & \\[-2ex]
\multirow{3}{*}{\bf CIFAR-10} & 5\%                  & 63.04                     & \textbf{65.62}            & 60.79                     & \textbf{62.84}            & 60.02                     & \textbf{60.47}     \\
& 10\%                 & 65.72                     & \textbf{67.41}            & 64.43                     & \textbf{65.17}            & 62.25                     & \textbf{62.94}          \\
& 15\%                 & 66.57                     & \textbf{67.56}            & 65.4                      & \textbf{65.92}            & 63.36                     & \textbf{63.14}           \\  \hline & \\[-2ex]
\end{tabular}
\caption{\textbf{Target domain test accuracy} (\%) by adding feature noise=0.2, 0.4, 0.6 on the Fashion-MNIST and CIFAR-10 datasets with different numbers of available target samples using \emph{auto} weights.}
\label{table:variance-target-acc-auto}
\end{table}

\subsection{Implementation Details of \fedgp{}}\label{gp-details}
To implement fine-grained projection for the real model architecture, we compute the cosine similarity between one source client gradient $g_i$ and the target gradient $g_T$ for \emph{each layer} of the model with a threshold of 0. In addition, we align the magnitude of the gradients according to the number of target/source samples, batch sizes, local updates, and learning rates. In this way, we implement \fedgp{} by projecting the target gradient towards source directions. We show the details of implementing static and auto-weighted versions of \fedgp{} in the following two paragraphs.

\paragraph{Static-weighted \fedgp{} implementation} Specifically, we compute the model updates from source and target clients as $G_{S_{i}}^{(r)} \simeq h_{S_{i}}^{(r)} - h_{global}^{(r-1)}$ and $G_{T}^{(r)} \simeq h_{T}^{(r)} - h_{T}^{(r-1)}$, respectively. In our real training process, because we use different learning rates, and training samples for source and target clients, we need to \textbf{align the magnitude of model updates}. We first align the model updates from source clients to the target client and combine the projection results with the target updates. We use $lr_{T}$ and $lr_{S}$ to denote the target and source learning rates; $batchsize_{T}$ and $batchsize_{S}$ are the batch sizes for target and source domains, respectively; $n_l$ is the labeled sample size on target client and $n_i$ is the sample size for source client $\mathcal{C}_{S_{i}}$; $r_{S}$ is the rounds of local updates on source clients. The total gradient projection $P_{GP}$ from all source clients $\{G_{S_{i}}^{(r)}\}_{i=1}^{N}$ projected on the target direction $G_{T}$ could be computed as follows. We use $\mathcal{L}$ to denote all layers of current model updates. $n_i$ denotes the number of samples trained on source client $\mathcal{C}_{S_{i}}$, which is adapted from FedAvg \citep{pmlr-v54-mcmahan17a} to redeem data imbalance issue. Hence, we normalize the gradient projections according to the number of samples. Also, $\bigcup^{\mathcal{L}}_{l \in \mathcal{L}}$ concatenates the projected gradients of all layers.

\begin{tiny}
\begin{align}
   P_{GP} = \bigcup_{l \in \mathcal{L}} \sum_{i=1}^{N}  \left( \mathbf{GP}\left(\left(h_{S_{i}}^{(r)} - h_{global}^{(r-1)}\right)^{l},\left(h_{T}^{(r)} - h_{T}^{(r-1)}\right)^{l}\right)\cdot \frac{n_i}{\sum^{N}_{i}n_i} \cdot \frac{\frac{n_{l}}{batchsize_{T}}}{\frac{n_i}{batchsize_{S}}}\cdot \frac{lr_{T}}{lr_{S}}
   \cdot \frac{1}{r_{S}} \cdot \left(h_{S_{i}}^{(r)} - h_{global}^{(r-1)}\right) \right)
\end{align}\label{eq:align}
\end{tiny}

Lastly, a hyper-parameter $\beta$ is used to incorporate target update $G_{T}$ into $P_{GP}$ to have a more stable performance. The final target model weight $h_{T}^{(r)}$ at round $r$ is thus expressed as:

\begin{equation}
    h_{T}^{(r)} = h_{T}^{(r-1)} + (1-\beta) \cdot P_{GP} + \beta \cdot G_{T}
\end{equation} \label{eq:combine}

\paragraph{Auto-weighted \fedgp{} Implementation} For auto-weighted scheme for \fedgp{}, we compute a dynamic weight $\beta_{i}$ for each source domain $\mathcal{D}_{S_i}$ per communication round. With a set of pre-computed $\{\beta_{i}\}^{N}_{i=1}$ weight values, the weighted projected gradients for a certain epoch can be expressed as follows:

\begin{tiny}
    $P_{GP} = \bigcup_{l \in \mathcal{L}} \sum_{i=1}^{N}  \left( \mathbf{GP}\left(\left(h_{S_{i}}^{(r)} - h_{global}^{(r-1)}\right)^{l},\left(h_{T}^{(r)} - h_{T}^{(r-1)}\right)^{l}\right)\cdot \frac{n_i \cdot (1-\beta_i)}{\sum^{N}_{i}n_i} \cdot \frac{\frac{n_{l}}{batchsize_{T}}}{\frac{n_i}{batchsize_{S}}}\cdot \frac{lr_{T}}{lr_{S}}
 \cdot \frac{1}{r_{S}} \cdot \left(h_{S_{i}}^{(r)} - h_{global}^{(r-1)}\right) \right)$
\label{eq:align}
\end{tiny}

Similarly, we need to incorporate target update $G_{T}$ into $P_{GP}$. The final target model weight $h_{T}^{(r)}$ at round $r$ is thus expressed as:

\begin{equation}
\small
    h_{T}^{(r)} = h_{T}^{(r-1)} + P_{GP} + \sum_{i=1}^{N}\frac{n_i\cdot \beta_i}{\sum^{N}_{i}n_i} \cdot G_{T}
\end{equation} \label{eq:combine}

\subsection{Gradient Projection Method's Time and Space Complexity}\label{gp-cost}
\textbf{Time complexity}: Assume the total parameter is $m$ and we have $l$ layers. To make it simpler, assume each layer has an average of $\frac{m}{l}$ parameters. Computing cosine similarity for all layers of one source client is $O((\frac{m}{l})^2 \cdot l) = O(m^2 / l)$. We have $N$ source clients so the total time cost for GP is $O(N \cdot m^2 / l)$.

\textbf{Space complexity}: The extra memory cost for GP (computing cosine similarity) is $O(1)$ per client for storing the current cosine similarity value. In a real implementation, the whole process of projection is fast, with around $~0.023$ seconds per call needed for $N=10$ clients of Fashion-MNIST experiments on the NVIDIA TITAN Xp hardware with GPU available. 

\subsection{Additional Experiment Results on Fed-Heart}\label{flamby-exp}
As a showcase of a more realistic healthcare setting, we show the performances of our methods compared with personalized baselines on the Fed-Heart dataset from FLamby~\citep{du2022flamby}. We randomly sample $20\%$ data for the $0, 1, 3$ centers and $100\%$ data for the $2$ center since there are only $30$ samples on the target domain. As shown in Table~\ref{table:flamby}, our methods generally outperform other baselines with large margins. KNN-per~\citep{marfoq2022knnper} may not fit this scenario since the neural network we used only consists of one layer.

\begin{table}[!h]
\centering
\begin{tabular}{l|ccccc}
\hline
center      & \multicolumn{1}{c}{0 (20\%)} & \multicolumn{1}{c}{1 (20\%)} & \multicolumn{1}{c}{2 (100\%)} & \multicolumn{1}{c}{3 (20\%)} & Avg            \\\hline
\fedda{}       & 79.81                        & 78.65                        & 67.50                         & 62.22                        & 72.05          \\
\fedgp{}       & 78.85                        & 80.45                        & 68.75                         & 65.33                        & 73.35          \\
\fedda{}\_Auto & \textbf{80.77}               & \textbf{80.90}               & 68.75                         & \textbf{71.11}               & \textbf{75.38} \\
\fedgp{}\_Auto & 80.77                        & 79.78                        & 68.75                         & 69.78                        & 74.77          \\
Source only & 76.92                        & 75.96                        & 62.50                         & 55.56                        & 67.74          \\\hline
FedAvg      & 75.96                        & 76.40                        & 62.50                          & 55.56                        & 67.61          \\
Ditto       & 76.92                        & 73.03                        & 62.50                         & 55.56                        & 67.00          \\
FedRep      & 78.84                        & 65.17                        & \textbf{75.00}                & 60.00                        & 69.75          \\
APFL        & 51.92                        & 57.30                        & 31.25                         & 42.22                        & 45.67         \\
KNN-Per     & 56.00  & 56.00    & 56.00 & 56.00 & 56.00\\\hline
\end{tabular}
\caption{\textbf{Target domain test accuracy} (\%) on Fed-Heart. \fedgp{} and auto-weighted methods generally outperform personalized FL methods with significant margins.}
\label{table:flamby}
\end{table}

\subsection{Comparison with the Semi-Supervised Domain Adaptation (SSDA) Method}\label{sota-semida}
In this sub-section, we show the performances of our methods compared with SSDA methods. However, we note that the suggested SSDA methods cannot be directly adapted to the federated setting without major modification. ~\citet{kim2020attract} uses feature alignments and requires access to the source and target data at the same time, which is usually difficult to achieve in federated learning. As for ~\citet{mme}, the overall adversarial learning objective functions consist of a loss objective on both source and target labeled data and the entropy coming from the unlabeled target data, which also cannot be directly adapted to federated learning. On the contrary, auto-weighted methods and \fedgp{} have the flexibility to do the single source-target domain adaptation, which can be compared with the SSDA method, though we notice that our setting is different from SSDA since we do not have unlabeled data on the target domain and do not leverage the information coming from this set of data. Here, we perform experiments on real-world datasets: our results suggest that auto-weighted methods and \fedgp{} outperform MME~\citep{mme} when the shifts are large even without using unlabeled data (overall our proposed methods have a comparable performance with MME). Also, we observe in a single source-target domain adaptation setting, auto-weighted \fedgp{} usually has a better performance than auto-weighted \fedda{} and \fedgp{}. 

\begin{table}[!h]
\centering
\begin{tabular}{llll}
\hline & \\[-2ex]
      & \textbf{0 -\textgreater 1 / 1 -\textgreater 0} & \textbf{0 -\textgreater 2 / 2 -\textgreater 0} & \textbf{1 -\textgreater 2 / 2 -\textgreater 1} \\
      \hline & \\[-2ex]
MME~\citep{mme}   & \textbf{79.55 / 89.28}                         & 25.98 / 12.99                                  & 14.66 / 24.43                                  \\
\fedgp{} & 79.34 / 88.34                                  & \textbf{90.23} / 64.47                         & \textbf{90.23} / 78.70                     \\
\fedda{}\_Auto & 79.31 / 88.38                                  & 87.00 / 56.07                                  & 89.13 / 66.59                                  \\
\fedgp{}\_Auto & 79.34 / 71.59                                  & \bf90.23 / 80.95                                  & \bf90.23 / 79.34 \\\hline
\end{tabular}
\caption{Colored-MNIST (0: +90\%, 1: +80\%, 2: -90\%)}
\end{table}

\begin{table}[!h]
\centering
\begin{tabular}{llll}
\hline & \\[-2ex]
\textbf{} & \textbf{0-\textgreater{}1} & \textbf{1-\textgreater{}2} & \textbf{2-\textgreater{}3} \\\hline & \\[-2ex]
MME~\citep{mme}       & \textbf{68.68}             & 72.86                      & \textbf{81.90}             \\
\fedgp{} & 67.92                      & 75.30             & 77.45                   \\  
\fedda{}\_Auto & 67.92 & \bf75.73 & 76.97\\
\fedgp{}\_Auto & \bf68.45 & 74.21 & 77.69\\\hline
\end{tabular}
\caption{VLCS (0: C, 1: L, 2: V, 3: S)}
\end{table}

\begin{table}[!h]
\centering
\begin{tabular}{llll}
\hline & \\[-2ex]
\textbf{} & \textbf{0-\textgreater{}1} & \textbf{1-\textgreater{}2} & \textbf{2-\textgreater{}3} \\\hline & \\[-2ex]
MME~\citep{mme}       & \textbf{74.51}             & 54.91                      & 58.50                      \\
\fedgp{}     & 72.35                      & \textbf{58.19}             & 60.03\\
\fedda{}\_Auto & 72.31 & 54.26 & 61.33\\
\fedgp{}\_Auto & 72.42 & 56.27 & \bf61.90\\\hline
\end{tabular}
\caption{TerraIncognita (0: L100, 1: L38, 2: L43, 3: L46)}
\end{table}

\end{document}